\definecolor{bgcolor}{rgb}{0.93,0.99,1}
\definecolor{bgcolor2}{rgb}{0.8,1,0.8}
\definecolor{bgcolor3}{rgb}{0.50,0.90,0.50}
\definecolor{mydarkgreen}{RGB}{39,130,67}
\definecolor{mydarkred}{RGB}{192,25,25}
\newcommand{\green}{\color{mydarkgreen}}
\newcommand{\red}{\color{mydarkred}}
\newcommand{\algname}[1]{{\sf\green\relscale{0.90}#1}\xspace}
\newcommand{\E}[1]{\mathbb{E}\left[#1\right]}
\newcommand{\Et}[1]{\mathbb{E}_t\left[#1\right]}
\newcommand{\cO}{\mathcal O}
\newcommand{\cC}{\mathcal C}
\newcommand{\mL}{\mathbf{L}}
\newcommand{\R}{\mathbb R}
\newcommand{\B}{\mathbb B}
\newcommand{\Id}{\mathbf{I}}
\newcommand{\mO}{\mathbf{0}}
\newcommand{\mOmega}{\mathbf{\Omega}}
\newcommand{\mtOmega}{\widetilde{\mathbf{\Omega}}}
\newcommand{\mdiag}{\mathbf{Diag}}
\newcommand{\km}{\kappa_{\max}}
\newcommand{\skm}{\sqrt{\kappa_{\max}}}
\newcommand{\eqdef}{\coloneqq}
\newcommand{\avein}{\frac{1}{n}\sum_{i=1}^n}
\renewcommand{\phi}{\varphi}
\DeclareMathOperator{\Prob}{Prob}
\DeclareMathOperator{\prox}{prox}
\def\<#1,#2>{\langle #1,#2\rangle}
\def\({\left(}
\def\){\right)}
\def\[{\left[}
\def\]{\right]}
\newcommand{\norms}[1]{\left\| #1 \right\|^2}
\newcommand{\hx}{\hat{x}}
\newcommand{\hh}{\hat{h}}
\newcommand{\bx}{\bar{x}}
\definecolor{lgray}{rgb}{0.95,0.95,0.95}
\definecolor{yel}{rgb}{1,0.98,0.92}
\definecolor{mydarkblue}{rgb}{0,0.2,0.9}
\definecolor{mydarkred}{rgb}{0.8,0.0,0.0}
\definecolor{mydarkgreen}{rgb}{0,0.55,0}
\definecolor{myorange}{RGB}{255,100,0}
\theoremstyle{theorem}
\newenvironment{restate-theorem}[1]
{\innercustomthm}
{\endinnercustomthm}
\newenvironment{restate-lemma}[1]
{\innercustomlemma}
{\endinnercustomlemma}
\newcommand*{\sketchproofname}{Sketch of Proof}
\def\eqref#1{(\ref{#1})}
\def\1{\bm{1}}
\DeclareMathAlphabet{\mathsfit}{\encodingdefault}{\sfdefault}{m}{sl}
\SetMathAlphabet{\mathsfit}{bold}{\encodingdefault}{\sfdefault}{bx}{n}
\theoremstyle{plain}
\newtheorem{theorem}{Theorem}[section]
\newtheorem{lemma}[theorem]{Lemma}
\theoremstyle{definition}
\newtheorem{definition}[theorem]{Definition}
\newtheorem{assumption}[theorem]{Assumption}
\theoremstyle{remark}
\newtheorem{remark}[theorem]{Remark}
\title{
  GradSkip: Communication-Accelerated\\
  Local Gradient Methods\\ 
  with Better Computational Complexity}
\author{\name Artavazd Maranjyan\footnotemark \email arto.maranjyan@gmail.com\\
      \addr King Abdullah University of Science and Technology (KAUST)
      \AND
      \name Mher Safaryan \email mher.safaryan@gmail.com \\
      \addr King Abdullah University of Science and Technology (KAUST)
      \AND
      \name Peter Richt\'{a}rik \email peter.richtarik@kaust.edu.sa\\
      \addr King Abdullah University of Science and Technology (KAUST)
      }
\begin{document}

\renewcommand{\thefootnote}{\fnsymbol{footnote}}
\footnotetext[1]{Work done during an internship at KAUST, while being a researcher at YerevaNN and a student at Yerevan State University.}
\renewcommand{\thefootnote}{\arabic{footnote}}
\setcounter{footnote}{0}

\maketitle

\begin{abstract}


  We study a class of distributed optimization algorithms that aim to alleviate high communication costs by allowing clients to perform multiple local gradient-type training steps before communication. 
  In a recent breakthrough, \citet{Mishchenko2022ProxSkip} proved that local training, when properly executed, leads to provable communication acceleration, and this holds in the strongly convex regime without relying on any data similarity assumptions. 
  However, their \algname{ProxSkip} method requires all clients to take the same number of local training steps in each communication round. 
  We propose a redesign of the \algname{ProxSkip} method, allowing clients with ``less important'' data to get away with fewer local training steps without impacting the overall communication complexity of the method. 
  In particular, we prove that our modified method, \algname{GradSkip}, converges linearly under the same assumptions and has the same accelerated communication complexity, while the number of local gradient steps can be reduced relative to a local condition number. 
  We further generalize our method by extending the randomness of probabilistic alternations to arbitrary unbiased compression operators and by considering a generic proximable regularizer. 
  This generalization, which we call \algname{GradSkip+}, recovers several related methods in the literature as special cases. 
  Finally, we present an empirical study on carefully designed toy problems that confirm our theoretical claims.
\end{abstract}

\section{Introduction}

{\em Federated Learning (FL)} is an emerging distributed machine learning paradigm where diverse data holders or clients (e.g., smartwatches, mobile devices, laptops, hospitals) collectively aim to train a single machine learning model without revealing local data to each other or the orchestrating central server \citep{McMahan2017FL,FL-big,FL-small}. 
Training such models amounts to solving federated optimization problems of the form
\begin{equation}\label{dist-opt-prob}
\min_{x\in\R^d} \left\{ f(x) \eqdef \frac{1}{n}\sum_{i=1}^n f_i(x) \right\},
\end{equation}
where $d$ is the (typically large) number of parameters of the model $x\in\R^d$ we aim to train, and $n$ is the (potentially large) total number of devices in the federated environment. 
We denote by $f_i(x)$ the loss or risk associated with the data $\mathcal{D}_i$ stored on client $i\in[n]\eqdef\{1,2,\dots,n\}$. 
Formally, our goal is to minimize the overall loss/risk denoted by $f(x)$.

Due to their efficiency, {\em gradient-type methods} with its numerous extensions \citep{DHS, Zei, GhLa, KiBa, SRB, QRGSLS, Gorbunov2020sigma_k} is by far the most dominant method for solving \eqref{dist-opt-prob} in practice. 

The simplest implementation of gradient descent in a federated setup requires all workers $i\in[n]$ in each time step $t\ge0$ to
\begin{itemize}
  \item[(i)] compute the local gradient $\nabla f_i(x_t)$ at the current global model $x_{t}$,
  \item[(ii)] update the current global model $x_{t}$ using this gradient with step size $\gamma>0$
  \begin{equation}\label{SGD-local-step}
    \hat x_{i,t+1} = x_t - \gamma \nabla f_i(x_t),
  \end{equation}
  \item[(iii)] average the updated local models $\hat x_{i,t+1}$ to get the new global model
  \begin{equation}\label{SGD-sync}
    x_{t+1} = \frac{1}{n}\sum_{i=1}^n \hat x_{i,t+1}. 
  \end{equation}
\end{itemize}
Challenges defining FL as a unique distributed training setup, necessitating training algorithm adjustments, include {\em high communication costs}, {\em heterogeneous data distribution}, and {\em system heterogeneity} across clients. 
Next, we discuss these challenges and potential algorithmic solutions.

\subsection{Communication Costs}
In federated optimization, communication costs often become a primary bottleneck due to slow and unreliable wireless links between clients and the central server \citep{McMahan2017FL}. 
Eliminating the communication step \eqref{SGD-sync} entirely would cause clients to train solely on local data, leading to a poor model because of the limited local data.

A simple trick to reduce communication costs is to perform the costly synchronization step \eqref{SGD-sync} infrequently, allowing multiple local gradient steps \eqref{SGD-local-step} in each communication round \citep{Mangasarian1995}. 
This trick appears in the celebrated \algname{FedAvg} algorithm of \citet{McMahan2016FL,McMahan2017FL} and its further variations \citep{Haddadpour2019,Li2019,Khaled2019LocalGD,Khaled2019LocalSGD,SCAFFOLD,Horvath2022FedShuffle} under the name of {\em local gradient methods}. 
However, until very recently, theoretical guarantees on the convergence rates of local gradient methods were worse than the rate of classical gradient descent, which synchronizes after every gradient step.

In a recent line of works \citep{Mishchenko2022ProxSkip,VR-ProxSkip,RandProx,APDA}, initiated by \citet{Mishchenko2022ProxSkip}, a novel local gradient method, called \algname{ProxSkip}, was proposed which performs {\em a random number} of local gradient steps before each communication (alternation between local training and synchronization is probabilistic) and guarantees strong communication acceleration properties. 
First, they reformulate the problem \eqref{dist-opt-prob} into an equivalent regularized consensus problem of the form
\begin{equation}\label{dist_opt_prob_reform}
  \begin{aligned}
    &
    \min\limits_{x_1,\dots,x_n\in\R^d} \left\{\frac{1}{n}\sum_{i=1}^n f_i(x_i) + \psi(x_1,\dots,x_n) \right\}, \\
    &\psi(x_1,\dots,x_n) \eqdef 
    \begin{cases}
       0, & \text{if } x_1=\dots=x_n, \\
       +\infty, & \text{otherwise},
    \end{cases}
    \end{aligned}
\end{equation}
where communication between the clients and averaging local models $x_1,\dots,x_n$ is encoded as taking the proximal step with respect to $\psi$, i.e., 
$$
\prox_{\psi}([x_1 \cdots x_n]^\top) = [\bar{x} \cdots \bar{x}]^\top, \quad \text{where} \quad \bar{x} \eqdef \frac{1}{n}\sum_{i=1}^n x_i.
$$ 
With this reformulation, \algname{ProxSkip} by \citet{Mishchenko2022ProxSkip} performs the proximal (equivalently averaging) step with small probability $p = \nicefrac{1}{\sqrt{\kappa}}$, where $\kappa$ is the condition number of the problem. 
Then method's key result for smooth, strongly convex setups is $\cO(\kappa\log\nicefrac{1}{\epsilon})$ iteration complexity with $\cO\left(\sqrt{\kappa}\log\nicefrac{1}{\epsilon}\right)$ communication rounds to achieve $\epsilon>0$ accuracy. 
Follow-up works extend the method to variance-reduced gradient methods \citep{VR-ProxSkip}, randomized application of proximal operator \citep{RandProx}, and accelerated primal-dual algorithms \citep{APDA}. 
Our work was inspired by the development of this new generation of local gradient methods, also known as Local Training (LT) methods, which we detail shortly.

An orthogonal approach uses communication compression strategies on the transferred information. 
Informally, instead of communicating full precision models infrequently, we might communicate a compressed version of the local model in each iteration via an application of lossy compression operators. 
Such strategies include sparsification \citep{Alistarh2018:topk,WSLCPW,99KFP}, quantization \citep{qsgd,Sun2019LAG,Wang2021Quantize}, sketching \citep{SEGA2019,safaryan2021smoothness} and low-rank approximation \citep{vogels}.

Our work contributes to the first approach to handling high communication costs that is less understood in theory and, at the same time, immensely popular in the practice of FL.

 \subsection{Statistical Heterogeneity}
Due to the decentralized training data, distributions of local datasets can vary from client to client. 
This heterogeneity in data distributions poses an additional challenge since allowing multiple local steps would make the local models deviate from each other, an issue widely known as {\em client drift}. 
On the other hand, if training datasets are identical across the clients (commonly referred to as a homogeneous setup), the mentioned drifting issue disappears, and the training can be done without any communication whatsoever. 
Interpolating between these extremes, under some data similarity conditions (which are typically expressed as gradient similarity conditions), multiple local gradient steps should be useful. 
In fact, initial theoretical guarantees of local gradient methods utilize such assumptions \citep{Haddadpour2019,YuJinYang2019,LiYang2019,LiHuang2020}.

In the fully heterogeneous setup, client drift reduction techniques were designed and analyzed to mitigate the adverse effect of local model deviations \citep{SCAFFOLD,Gorbunov2021LocalSGD}. 
A very close analogy is variance reduction techniques called error feedback mechanisms for the compression noise added to lessen the number of bits required to transfer \citep{Condat2022EF-BV}.

 \subsection{System Heterogeneity}
 Lastly, system heterogeneity refers to the diversity of clients in terms of their computation capabilities or the amount of resources they are willing to use during the training. 
 In a typical FL setup, all participating clients must perform the same amount of local gradient steps before each communication. 
 Consequently, a highly heterogeneous cluster of devices results in significant and unexpected delays due to slow clients or stragglers.
 
 One approach addressing system heterogeneity or dealing with slow clients is client selection strategies \citep{WangJoshi2019,Reisizadeh2020,LuoXiao2021}. 
 Basically, client sampling can be organized so that slow clients do not delay global synchronization, and clients with similar computational capabilities are sampled in each communication round.
 
 Unlike the above strategy, we suggest clients take local steps based on their resources. 
 We consider the full participation setup where clients decide how much local computation to perform before communication. 
 Informally, slow clients do less local work than fast clients, and during the synchronization of locally trained models, the slowdown caused by the stragglers will be minimized see section \ref{seq_time}.

\subsection{Local Training (LT) vs Accelerated Gradient Descent (AGD)}
Nesterov's \algname{AGD} method \citet{nesterovIntroductoryLecturesConvex2004} matches the communication complexity of our \algname{GradSkip} algorithm. 
Its distributed implementation takes one local step per round, suggesting LT methods might lag behind \algname{AGD}. 
In contrast, almost all methods in production are based on local training, as evidenced by FL frameworks like \cite{he2020fedml,ro2021fedjax,beutel2022flower}.

The preference for LT over \algname{AGD} among practitioners stems from LT's advantages, especially in generalization and communication complexity. 
Both areas are closely tied with local training, becoming prominent in current research. LT's ability to enhance generalization remains under exploration in FL. 
Current studies link this improvement to personalization, meta-learning \cite{hanzely2020lower,Hanzely2021Mixture}, and representation learning \cite{Collins2022FineTuning}. 
Practically, LT effectively tackles nonconvex challenges, while \algname{AGD} faces difficulty approximating stationary points of smooth nonconvex functions. 
Additionally, \algname{AGD} is more sensitive to the knowledge of the condition number than LT methods, which are versatile and work across a wide range of numbers of local steps.

In statistically heterogeneous cases, \algname{AGD} often underperforms. 
Our experiments prove this by showing that when device condition numbers vary, \algname{AGD} converges slower than \algname{GradSkip}. 
Though our work does not primarily aim to directly compare \algname{AGD} and LT, such a comparative study, to our knowledge, remains a gap in current research and could offer valuable insights.

\section{Summary of Contributions}

Our key contributions are summarized below.

\subsection{GradSkip: Efficient Gradient Skipping Algorithm}
We propose \algname{GradSkip} (Algorithm~\ref{alg:dist-proxskip+}), a new local gradient-type method for distributed optimization that reduces both communication and computation.
Our method extends the recently developed \algname{ProxSkip} \citep{Mishchenko2022ProxSkip}, which first demonstrated communication acceleration via multiple local steps without data similarity assumptions. 
\algname{GradSkip} not only inherits this accelerated communication complexity but also introduces a key improvement: 
\textit{it allows clients to terminate their local gradient computations independently}, significantly improving computational efficiency.

The key technical novelty of the proposed algorithm is the construction of auxiliary shifts ${\red\hh_{i,t}}$ to handle gradient skipping for each client $i\in[n]$. 
\algname{GradSkip} also maintains shifts ${\red h_{i,t}}$ initially introduced in \algname{ProxSkip} to handle communication skipping across the clients. We prove that \algname{GradSkip} converges linearly in strongly convex and smooth setup, has the same $\cO(\sqrt{\kappa_{\max}}\log\nicefrac{1}{\epsilon})$ accelerated communication complexity as \algname{ProxSkip}, and requires clients to compute (in expectation) at most $\min\left\{\kappa_i, \sqrt{\kappa_{\max}}\right\}$ local gradients in each communication round (see Theorem \ref{thm:optimal-rate}), where $\kappa_i$ is the condition number for client $i\in[n]$ and $\kappa_{\max} = \max_i\kappa_i$. 
Thus, for \mbox{\algname{GradSkip}}, clients with well-conditioned problems $\kappa_i < \sqrt{\kappa_{\max}}$ perform much less local work to achieve the same convergence rate of \algname{ProxSkip}, which assumes $\sqrt{\kappa_{\max}}$ local steps on average for all clients.

\subsection{GradSkip+: General GradSkip Method}
Next, we generalize the construction and the analysis of \algname{GradSkip} by extending it in two directions: handling optimization problems with arbitrary proximable regularizer and incorporating general randomization procedures using unbiased compression operators with custom variance bounds. 
This leads to our second method, \algname{GradSkip+} (see Algorithm \ref{alg:proxskip++}), which recovers several methods in the literature as a special case, including the standard proximal gradient descent (\algname{ProxGD}), \algname{ProxSkip} \citep{Mishchenko2022ProxSkip}, \algname{RandProx-FB} \citep{RandProx} and \algname{GradSkip}.

\subsection{VR-GradSkip+: Reducing the Variance of Stochastic Gradient Skipping} 
Finally, we propose and analyze variance-reduced extension (see Algorithm \ref{alg:VR-GradSkip+} in the Appendix) in the case when mini-batch stochastic gradients are implemented instead of full-batch gradients for local computations. 
Our \algname{VR-GradSkip+} method can be viewed as a successful combination of \algname{ProxSkip-VR} method of \citet{VR-ProxSkip} and \algname{GradSkip} providing computational efficiency through processing smaller batch of samples and probabilistically skipping stochastic gradient computations. 
We deferred the presentation of the part of our contribution in the appendix due to space limitations.

\section{GradSkip}\label{sec:gradskip}

In this section, we present our first algorithm, \algname{GradSkip}, and discuss its benefits in detail. 
Later, we will generalize it, unifying several other methods as special cases. 
Recall that our target is to address three challenges in FL mentioned in the introductory part, which are 
\begin{itemize}
\item [(i)] reduction in communication cost via infrequent synchronization of local models, 
\item [(ii)] statistical or data heterogeneity, and 
\item [(iii)] reduction in computational cost via limiting local gradient calls based on the local subproblem. 
\end{itemize}
We now describe all the steps of the algorithm and how it handles these three challenges.

\subsection{Algorithm Structure}

For the sake of presentation, we describe the progress of the algorithm using two variables $x_{i,t}, \hat x_{i,t}$ for the local models and two variables $\red h_{i,t}, \hh_{i,t}$ for the local gradient shifts. 
Essentially, we want to maintain two variables for the local models since clients get synchronized infrequently. 
The shifts $\red h_{i,t}$ are designed to reduce the client drift caused by the statistical heterogeneity. 
Finally, we introduce auxiliary shifts $\red \hh_{i,t}$ to take care of the different number of local steps. 
The \algname{GradSkip} method is formally presented in Algorithm \ref{alg:dist-proxskip+}.

\begin{algorithm*}[h]
  \caption{\algname{GradSkip}}
  \label{alg:dist-proxskip+}
  \begin{algorithmic}[1]
      \STATE {\bf Input:} stepsize $\gamma > 0$, synchronization probability $p$, probabilities $q_i>0$ controlling local steps, initial local iterates $x_{1,0}=\dots = x_{n,0}\in \R^d$, initial shifts ${\red h_{1,0}, \dots, h_{n,0}}  \in \R^d$, total number of iterations $T\geq 1$
      \FOR{$t=0,1,\dotsc,T-1$}
      \STATE \textbf{server:} Flip a coin $\theta_t \in\{0,1\}$ with $\Prob(\theta_t = 1) = p$ \hfill $\diamond$ Decide when to skip communication
      \FOR{\textbf{all devices $\bm{i\in[n]}$ in parallel}}
      \STATE Flip a coin $\eta_{i,t} \in\{0,1\}$ with $\Prob(\eta_{i,t} = 1) = q_i$ \hfill $\diamond$ Decide when to skip gradient steps\\ \hfill (see Lemma \ref{lem:fake-local-steps})
      \STATE ${\red \hat h_{i,t+1}} = \eta_{i,t} {\red h_{i,t}} + (1-\eta_{i,t})\nabla f_i(x_{i,t})$ \hfill $\diamond$ Update the local auxiliary shifts ${\red\hat h_{i,t}}$
      \STATE $\hat x_{i,t+1} = x_{i,t} - \gamma (\nabla f_i(x_{i,t}) - {\red \hat h_{i,t+1}} )$ \hfill $\diamond$ Update the local auxiliary iterate ${\hat x_{i,t}}$\\ \hfill via shifted gradient step
      \IF{$\theta_t=1$} 
      \STATE  $x_{i,t+1} = \frac{1}{n}\sum _{j=1}^n \left(\hat x_{j,t+1} - \frac{\gamma}{p}{\red \hh_{j,t+1}}\right)$ \hfill  $\diamond$ Average shifted iterates, but only very rarely! 
      \ELSE
      \STATE $x_{i,t+1} = \hat x_{i,t+1}$ \hfill $\diamond$ Skip communication!
      \ENDIF
      \STATE ${\red h_{i,t+1}} = {\red \hh_{i,t+1}} + \frac{p}{\gamma}(x_{i,t+1} - \hat x_{i,t+1})$ \hfill $\diamond$ Update the local shifts ${\red h_{i,t}}$
      \ENDFOR
      \ENDFOR
  \end{algorithmic}
\end{algorithm*}

As an initialization step, we choose a probability $p>0$ to control communication rounds, probabilities $q_i>0$ for each client $i\in[n]$ to control local gradient steps, and initial control variates (or shifts) ${\red h_{i,0}} \in\R^d$ to control client drift.
Besides, we fix the stepsize $\gamma>0$ and assume that all clients commence with the same local model, namely $x_{1,0}=\dots = x_{n,0}\in \R^d$.
Then, each iteration of the method comprises two stages, the local stage and the communication stage, operating probabilistically. Specifically, the probabilistic nature of these stages is the following. 
The local stage requires computation only with some predefined probability; otherwise, the stage is void. 
Similarly, the communication stage requires synchronization between all clients only with probability $p$; otherwise, the stage is void.
In the local stage (lines 5--7), all clients $i\in[n]$ in parallel update their local variables $(\hat x_{i,t+1},{\red \hh_{i,t+1}})$ using values $(x_{i,t},{\red h_{i,t}})$ from previous iterate either by computing the local gradient $\nabla f_i(x_{i,t})$ or by just copying the previous values. 
Afterward, in the communication stage (lines 8--13), all clients in parallel update their local variables $(x_{i,t+1}, {\red h_{i,t+1}})$ from $(\hat x_{i,t+1}, {\red \hh_{i,t+1}})$ by either averaging across the clients or copying previous values.

\subsection{Reduced Local Computation}
Clearly, communication costs are reduced as the averaging step occurs only when $\theta_t=1$ with probability $p$ of our choice. However, it is not directly apparent how the computational costs are reduced during the local stage. 
Indeed, both options $\eta_{i,t}=1$ and $\eta_{i,t}=0$ involve the expression $\nabla f_i(x_{i,t})$ as if local gradients need to be evaluated in every iteration. As we show in the following lemma, this is not the case.
\begin{lemma}[Fake local steps; Proof in \Cref{proof:lemma-fake-local-steps}]
  \label{lem:fake-local-steps}
Suppose that Algorithm \ref{alg:dist-proxskip+} does not communicate for $\tau\ge1$ consecutive iterates, i.e., $\theta_t=\theta_{t+1}=\dots=\theta_{t+\tau-1}=0$ for some fixed $t\ge0$. 
Besides, let for some client $i\in[n]$ we have $\eta_{i,t} = 0$. Then, regardless of the coin tosses $\{\eta_{i,t+j}\}_{j=1}^{\tau}$, client $i$ does fake local steps without any gradient computation in $\tau$ iterates. Formally, for all $j=1,2,\dots,\tau+1$, we have
\begin{equation*}
  \begin{aligned}
    & \hat x_{i,t+j} = x_{i,t+j} = x_{i,t}, \\
    & {\red \hh_{i,t+j}} = {\red h_{i,t+j}} = {\red h_{i,t}} = \nabla f_i(x_{i,t}).
    \end{aligned}
\end{equation*}
\end{lemma}
Let us reformulate the above lemma. During the local stage of \algname{GradSkip}, when clients do not communicate with the server, $i^{th}$ client terminates its local gradient steps once the local coin tosses $\eta_{i,t}=0$. 
Thus, smaller probability $q_i$ implies sooner coin toss $\eta_{i,t}=0$ in expectation, hence, less amount of local computation for client $i$. 
Therefore, we can relax the computational requirements of clients by adjusting these probabilities $q_i$ and controlling the amount of local gradient computations.

Next, let us find out how the expected number of local gradient steps depends on probabilities $p$ and $q_i$. Let $\Theta$ and $H_i$ be random variables representing the number of coin tosses (Bernoulli trials) until the first occurrence of $\theta_t=1$ and $\eta_{i,t}=0$ respectively. 
Equivalently, $\Theta\sim\textrm{Geo}(p)$ is a geometric random variable with parameter $p$, and $H_i\sim\textrm{Geo}(1-q_i)$ are geometric random variables with parameter $1-q_i$ for $i\in[n]$. 
Notice that, within one communication round, $i^{th}$ client performs $\min\{\Theta, H_i\}$ number of local gradient computations, which is again a geometric random variable with parameter $1-(1-(1-q_i))(1-p) = 1-q_i(1-p)$. 
Therefore, as formalized in the next lemma, the expected number of local gradient steps is $\E{\min\{\Theta, H_i\}} = \nicefrac{1}{(1-q_i(1-p))}$.
\begin{lemma}[Expected number of local steps; Proof in \Cref{proof:lemma-exp-local-steps}]
  \label{lem:exp-local-steps}
The expected number of local gradient computations in each communication round of \algname{GradSkip} is $\nicefrac{1}{(1-q_i(1-p))}$ for all clients $i\in[n]$.
\end{lemma}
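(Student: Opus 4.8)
The plan is to make rigorous the heuristic given immediately above the statement. Fix a client $i\in[n]$ and consider a single communication round, i.e. the stretch of consecutive iterations between two successive synchronizations. I would first introduce, coupled to the actual coin tosses of Algorithm~\ref{alg:dist-proxskip+}, two counters: $\Theta$, the number of iterations in the round until the server flips $\theta_t=1$, and $H_i$, the number of iterations until client $i$ first flips $\eta_{i,t}=0$. Since $\Prob(\theta_t=1)=p$ and $\Prob(\eta_{i,t}=0)=1-q_i$, and all these Bernoulli trials are mutually independent, we have $\Theta\sim\mathrm{Geo}(p)$ and $H_i\sim\mathrm{Geo}(1-q_i)$ under the convention that a $\mathrm{Geo}(r)$ variable counts the number of trials up to and including the first success and has mean $1/r$. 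In particular $\Theta$ and $H_i$ are independent, being measurable with respect to disjoint families of coins.

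The key reduction is to show that the number of genuine gradient evaluations client $i$ performs during the round equals $\min(\Theta,H_i)$. Client $i$ computes a real gradient at every iteration with $\eta_{i,t}=1$ (lines 5--7), and also at the single iteration where $\eta_{i,t}=0$ first occurs, after which Lemma~\ref{lem:fake-local-steps} guarantees that $x_{i,t}$ freezes and $h_{i,t}=\nabla f_i(x_{i,t})$, so every subsequent local step merely reuses an already computed value and costs nothing -- these are exactly the fake steps. Thus real computation stops at the first iteration where either $\eta_{i,t}=0$ (triggering the freezing) or $\theta_t=1$ (ending the round), whichever comes first, and the number of such iterations is precisely $\min(\Theta,H_i)$. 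I would verify the boundary cases directly from the lemma: if $\eta_{i,t}=0$ already at the first iteration of the round then $H_i=1$ and exactly one gradient is computed, and if $\theta_t=1$ at the first iteration then $\Theta=1$ and one gradient is computed before averaging.

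It then remains to compute $\E{\min(\Theta,H_i)}$. Using the tail-sum identity $\E{X}=\sum_{k\ge0}\Prob(X>k)$ together with independence and the tails $\Prob(\Theta>k)=(1-p)^k$ and $\Prob(H_i>k)=q_i^k$, I obtain
\begin{equation*}
\Prob(\min(\Theta,H_i)>k)=\Prob(\Theta>k)\,\Prob(H_i>k)=\big((1-p)q_i\big)^k,
\end{equation*}
so that $\min(\Theta,H_i)\sim\mathrm{Geo}\big(1-q_i(1-p)\big)$ and
\begin{equation*}
\E{\min(\Theta,H_i)}=\sum_{k\ge0}\big(q_i(1-p)\big)^k=\frac{1}{1-q_i(1-p)},
\end{equation*}
which is the claimed expected number of local gradient steps. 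As a sanity check, the same tail computation degenerates gracefully at $q_i=1$, where the client never voluntarily skips and the formula returns $1/p=\E{\Theta}$.

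The geometric summation is routine; the step that needs genuine care is the reduction to $\min(\Theta,H_i)$, where one must invoke Lemma~\ref{lem:fake-local-steps} to certify that all steps after the first $\eta_{i,t}=0$ are free, and must fix a consistent convention for the geometric counts so that the terminal $\eta_{i,t}=0$ iteration (which does cost one real gradient) is counted exactly once. A secondary subtlety is the independence of $\Theta$ and $H_i$, which I would justify by noting that the server coin $\theta_t$ and the client coin $\eta_{i,t}$ are independent at every iteration, so no conditioning issues arise when passing to the minimum.
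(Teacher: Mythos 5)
Your proof is correct and takes essentially the same route as the paper: the paper likewise identifies the per-round gradient count with $\min(\Theta,H_i)$ for independent $\Theta\sim\mathrm{Geo}(p)$, $H_i\sim\mathrm{Geo}(1-q_i)$ and shows this minimum is $\mathrm{Geo}\bigl(1-q_i(1-p)\bigr)$, differing only in that it argues via joint Bernoulli trials (both coins failing with probability $q_i(1-p)$) where you use the tail-sum identity. Your explicit appeal to Lemma~\ref{lem:fake-local-steps} to certify that the terminal $\eta_{i,t}=0$ iteration costs exactly one gradient and all later steps are free is a welcome tightening of a reduction the paper states without proof.
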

Notice that, in the special case of $q_i=1$ for all $i\in[n]$, \algname{GradSkip} recovers \algname{Scaffnew} method of \citet{Mishchenko2022ProxSkip}. 
However, as we will show, we can choose probabilities $q_i$ smaller, reducing computational complexity and obtaining the same convergence rate as \algname{Scaffnew}.

\begin{remark}[System heterogeneity]
  From this discussion, we conclude that \algname{GradSkip} can also address system or device heterogeneity. 
  In particular, probabilities $\{q_i\}_{i=1}^n$ can be assigned to clients in accordance with their local computational resources; slow clients with scarce compute power should get small $q_i$, while faster clients with rich resources should get bigger $q_i\le1$, see section \ref{seq_time}.
\end{remark}

\subsection{Convergence Theory}
Now that we explained the structure and computational benefits of the algorithm, let us proceed to the theoretical guarantees. 
We consider the same strongly convex and smooth setup as considered by \citet{Mishchenko2022ProxSkip} for the distributed case.

\begin{assumption}\label{asm:main}
All functions $f_i(x)$ are strongly convex with parameter $\mu>0$ and have Lipschitz continuous gradients with Lipschitz constants $L_i>0$, i.e., for all $i\in[n]$ and any $x,y\in\R^d$ we have 
$$
  \frac{\mu}{2}\|x-y\|^2 \le D_{f_i}(x,y) \le \frac{L_i}{2}\|x-y\|^2,
$$
where $D_{f_i}(x,y) \eqdef f_i(x) - f_i(y) - \langle \nabla f_i(y), x-y \rangle$ is the Bregman divergence associated with $f_i$.
\end{assumption}
We present Lyapunov-type analysis to prove the convergence, which is a very common approach for iterative algorithms. Consider the Lyapunov function
\begin{equation}\label{dist-Lyapunov}
  \Psi_t 
  \eqdef \sum_{i=1}^n \|x_{i,t} - x_{\star}\|^2 
    + \frac{\gamma^2}{p^2}\sum _{i=1}^n \|h_{i,t} - h_{i,\star}\|^2,
\end{equation}
where $\gamma>0$ is the stepsize, $x_{\star}$ is the (necessary) unique minimizer of $f(x)$ and $h_{i,*} = \nabla f_i(x_*)$ is the optimal gradient shift. As we show next, $\Psi_t$ decreases at a linear rate.

\begin{theorem}[Proof in \Cref{proof:thm-asymmetric}]
  \label{asymmetric:thm}
  Let \Cref{asm:main} hold.
  If the stepsize satisfies 
  $$
  \gamma \le \min_i\left\{\frac{1}{L_i}\frac{p^2}{1-q_i\left(1-p^2\right)}\right\}
  $$
  and probabilities are chosen so that $0<p,\, q_i \le 1$, then the iterates of \algname{GradSkip} (Algorithm \ref{alg:dist-proxskip+}) satisfy
  \begin{equation}\label{asymmetric:lyapunov_decrease}
      \E{\Psi_t}
      \le (1 - \rho)^t \Psi_0,
  \end{equation}
  for all $t\ge1$ with $\rho \eqdef \min\left\{\gamma\mu,1 - q_{max}(1-p^2)\right\} > 0$.
\end{theorem}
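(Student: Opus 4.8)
The plan is to prove the one-step contraction $\Et{\Psi_{t+1}} \le (1-\rho)\Psi_t$, where $\Et{\cdot}$ denotes the expectation conditioned on everything generated up to iteration $t$, and then obtain \eqref{asymmetric:lyapunov_decrease} by the tower property and a trivial induction. Because the communication coin $\theta_t$ and the local coins $\{\eta_{i,t}\}_{i=1}^n$ are mutually independent, I would evaluate $\Et{\Psi_{t+1}}$ by peeling off the two families of coins one at a time, averaging over $\theta_t$ first and over the $\eta_{i,t}$ afterwards.

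For the first step I hold the local coins fixed, so that $\hat x_{i,t+1}$ and $\hat h_{i,t+1}$ are determined, and set $a_i \eqdef \hat x_{i,t+1} - x_\star$ and $b_i \eqdef \hat h_{i,t+1} - h_{i,\star}$. On the event $\theta_t = 1$ every $x_{i,t+1}$ becomes the common vector $\bar a - \frac{\gamma}{p}\bar b$, while $h_{i,t+1} - h_{i,\star}$ turns into $(b_i - \bar b) - \frac{p}{\gamma}(a_i - \bar a)$. The algebraic input that makes this tractable is the optimality relation $\sum_{i=1}^n h_{i,\star} = \sum_{i=1}^n \nabla f_i(x_\star) = n\nabla f(x_\star) = 0$, which gives $\bar b = \frac1n\sum_i \hat h_{i,t+1}$ and, after the variance decomposition $\sum_i \|v_i\|^2 = \sum_i\|v_i-\bar v\|^2 + n\|\bar v\|^2$, cancels all mean cross-terms. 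Averaging over $\theta_t$ with the local coins held fixed then yields the clean identity
\begin{equation*}
\mathbb{E}_{\theta_t}\!\left[\Psi_{t+1}\right] = \sum_{i=1}^n \|a_i - \gamma b_i\|^2 + \frac{\gamma^2(1-p^2)}{p^2}\sum_{i=1}^n \|b_i\|^2 ,
\end{equation*}
so the averaging step leaves only the factor $(1-p^2)/p^2$ on the shift discrepancy.

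Next I substitute the local-stage updates $\hat h_{i,t+1} = \eta_{i,t}h_{i,t} + (1-\eta_{i,t})\nabla f_i(x_{i,t})$ and $\hat x_{i,t+1} = x_{i,t} - \gamma(\nabla f_i(x_{i,t}) - \hat h_{i,t+1})$. The observation that drives the whole argument is that the first sum above is \emph{independent of the local coins}: the shift $\hat h_{i,t+1}$ cancels and, using $h_{i,\star} = \nabla f_i(x_\star)$, one gets $a_i - \gamma b_i = x_{i,t} - x_\star - \gamma(\nabla f_i(x_{i,t}) - \nabla f_i(x_\star))$. Hence only $\sum_i\|b_i\|^2$ needs averaging over $\{\eta_{i,t}\}$, which replaces $\|b_i\|^2$ by $q_i\|h_{i,t}-h_{i,\star}\|^2 + (1-q_i)\|\nabla f_i(x_{i,t}) - \nabla f_i(x_\star)\|^2$. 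The part proportional to $\|h_{i,t}-h_{i,\star}\|^2$ is absorbed into the shift term of $(1-\rho)\Psi_t$ exactly when $\rho \le 1 - q_i(1-p^2)$ for every $i$, which holds by the definition of $\rho$ and $q_{\max}$.

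It then remains to handle the iterate part, i.e. to verify, with $u_i \eqdef x_{i,t}-x_\star$ and $g_i \eqdef \nabla f_i(x_{i,t}) - \nabla f_i(x_\star)$, the per-client bound $\|u_i - \gamma g_i\|^2 + \frac{\gamma^2(1-p^2)(1-q_i)}{p^2}\|g_i\|^2 \le (1-\rho)\|u_i\|^2$. Expanding the square collapses the $\|g_i\|^2$ coefficient into $\gamma^2\frac{1-q_i(1-p^2)}{p^2}$, and I would invoke Assumption~\ref{asm:main} through $\langle g_i, u_i\rangle \ge \frac{\mu}{2}\|u_i\|^2 + \frac{1}{2L_i}\|g_i\|^2$ (strong convexity for the first term and the smoothness bound $D_{f_i}(x_{i,t},x_\star) \ge \frac{1}{2L_i}\|g_i\|^2$ for the second). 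The remaining expression $-\gamma\mu\|u_i\|^2 + \gamma\big(\gamma\frac{1-q_i(1-p^2)}{p^2} - \frac{1}{L_i}\big)\|g_i\|^2$ is $\le -\rho\|u_i\|^2$ precisely because the step-size condition $\gamma \le \frac{1}{L_i}\frac{p^2}{1-q_i(1-p^2)}$ kills the $\|g_i\|^2$ coefficient while $\rho \le \gamma\mu$ handles the rest. I expect the main obstacle to be the second step: correctly propagating the two coupled shift sequences $h_{i,t}$ and $\hat h_{i,t}$ through the averaging operation and confirming the cancellation of the mean cross-terms, since this is exactly where the newly introduced auxiliary shifts interact with the \algname{ProxSkip} control variates. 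Once the $\eta$-independence of $a_i - \gamma b_i$ is noticed, the remainder is a routine strong-convexity/smoothness estimate calibrated to the stated step-size.
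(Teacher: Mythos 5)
Your proposal is correct and follows essentially the same route as the paper's proof: averaging over the global coin $\theta_t$ first (your identity $\mathbb{E}_{\theta_t}[\Psi_{t+1}] = \sum_i \|a_i - \gamma b_i\|^2 + \tfrac{\gamma^2(1-p^2)}{p^2}\sum_i\|b_i\|^2$ is exactly what the paper's Lemma~\ref{asymmetric:lem1} establishes in its Steps 2--4, via the same variance decomposition and the fact $\sum_i h_{i,\star}=0$), then averaging over the $\eta_{i,t}$, and finally the per-client strong-convexity/smoothness estimate of Lemma~\ref{asymmetric:lem2} under the same step-size condition. The only cosmetic differences are that you make the $\eta$-independence of $a_i-\gamma b_i$ explicit (the paper verifies it by computing both branches) and you fold $D_{f_i}(x_{i,t},x_\star)\ge\tfrac{1}{2L_i}\|g_i\|^2$ into a single coercivity inequality where the paper keeps the Bregman term and uses $\|g_i\|^2\le 2L_iD_{f_i}(x_{i,t},x_\star)$ --- the same estimate with the same constants.
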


Let us comment on this result.
\begin{itemize}
\item The first and immediate observation from the above result is that, with a proper stepsize choice, \algname{GradSkip} converges linearly for any choice of probabilities $p$ and $q_i$ from $(0,1]$.

\item  Furthermore, by choosing all probabilities $q_i = 1$ we get the same rate of \algname{Scaffnew} with $\rho = \min\{\gamma\mu, p^2\}$ (see Theorem 3.6 in \citep{Mishchenko2022ProxSkip}). 
If we further choose the largest admissible stepsize $\gamma = \nicefrac{1}{L_{\max}}$ and the optimal synchronization probability $p=\nicefrac{1}{\sqrt{\kappa_{\max}}}$, we get $\cO(\kappa_{\max}\log\nicefrac{1}{\epsilon})$ iteration complexity, $\cO(\sqrt{\kappa_{\max}}\log\nicefrac{1}{\epsilon})$ accelerated communication complexity with $\nicefrac{1}{p} = \sqrt{\kappa_{\max}}$ expected number of local steps in each communication round. 
Here, we used notation $\kappa_{\max} = \max_{i}\kappa_i$ where $\kappa_i = \nicefrac{L_i}{\mu}$ is the condition number for client $i\in[n]$.

\item Finally, exploiting smaller probabilities $q_i$, we can optimize computational complexity subject to the same communication complexity as \algname{Scaffnew}. 
To do that, note that the largest possible stepsize that Theorem~\ref{asymmetric:thm} allows is $\gamma=\nicefrac{1}{L_{\max}}$ as 
$$
\min_i\left\{\frac{1}{L_i}\frac{p^2}{1-q_i\left(1-p^2\right)}\right\} \le \min_i\frac{1}{L_i} = \frac{1}{L_{\max}}.$$ 
Hence, taking into account $\rho\le\gamma\mu$, the best iteration complexity from the rate \eqref{asymmetric:lyapunov_decrease} is $\cO(\kappa_{\max}\log\nicefrac{1}{\epsilon})$, which can be obtained by choosing the probabilities appropriately as formalized in the following result.
\end{itemize}

\begin{theorem}[Optimal parameter choices; Proof in \Cref{proof:thm-optimal-rate}]
  \label{thm:optimal-rate}
  Let Assumption \ref{asm:main} hold and choose probabilities 
  $$
  q_i = \frac{1-\frac{1}{\kappa_i}}{1-\frac{1}{\kappa_{\max}}}\le 1 \quad\text{and}\quad p = \frac{1}{\sqrt{\kappa_{\max}}}.
  $$ 
  Then, with the largest admissible stepsize $\gamma = \nicefrac{1}{L_{\max}}$, \mbox{\algname{GradSkip}} enjoys the following properties:
  \begin{itemize}
  \item [(i)] $\mathcal{O}\left(\kappa_{\max}\log\nicefrac{1}{\varepsilon}\right)$ iteration complexity,
  \item [(ii)] $\mathcal{O}\left(\sqrt{\kappa_{\max}}\log\nicefrac{1}{\varepsilon}\right)$ communication complexity,
  \item [(iii)] for each client $i\in[n]$, the expected number of local gradient computations per communication round is
  \begin{equation}
  \frac{1}{1 - q_i(1-p)} = \frac{\kappa_i(1+\sqrt{\kappa_{\max}})}{\kappa_i + \sqrt{\kappa_{\max}}}
  \le \min\left\{\kappa_i, \sqrt{\kappa_{\max}}\right\}.\label{less-local-work}
  \end{equation}
  \end{itemize}
\end{theorem}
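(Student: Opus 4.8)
The plan is to treat this statement as a direct specialization of Theorem \ref{asymmetric:thm}: I would substitute the prescribed values of $p$, $q_i$, and $\gamma$ into the general linear rate, first checking that they are admissible, and then read off the three advertised complexities. The only nontrivial ingredient is an algebraic cancellation that makes the step-size choice tight.

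First I would verify that $\gamma = \frac{1}{L_{\max}}$ meets the constraint $\gamma \le \min_i \frac{1}{L_i}\frac{p^2}{1-q_i(1-p^2)}$. The key observation is that with $p^2 = \frac{1}{\kappa_{\max}}$ the factor $1-p^2 = 1-\frac{1}{\kappa_{\max}}$ is exactly the denominator defining $q_i$, so $q_i(1-p^2) = 1-\frac{1}{\kappa_i}$ and hence $1-q_i(1-p^2) = \frac{1}{\kappa_i}$. Plugging in gives $\frac{1}{L_i}\frac{p^2}{1-q_i(1-p^2)} = \frac{1}{L_i}\cdot\frac{\kappa_i}{\kappa_{\max}} = \frac{1}{\mu\kappa_{\max}} = \frac{1}{L_{\max}}$ uniformly in $i$, using $\kappa_i = L_i/\mu$ and $\kappa_{\max} = L_{\max}/\mu$. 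Thus the minimum over clients equals $\frac{1}{L_{\max}}$, and the chosen step-size sits exactly at the boundary.

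Next I would compute the contraction factor $\rho = \min\{\gamma\mu,\, 1-q_{\max}(1-p^2)\}$. Here $\gamma\mu = \frac{\mu}{L_{\max}} = \frac{1}{\kappa_{\max}}$; and since $q_i$ is increasing in $\kappa_i$, its maximum is attained at $\kappa_i = \kappa_{\max}$, giving $q_{\max} = 1$ and $1-q_{\max}(1-p^2) = p^2 = \frac{1}{\kappa_{\max}}$. Hence $\rho = \frac{1}{\kappa_{\max}}$. Claims (i) and (ii) then follow from the standard reading of a linear rate: driving $(1-\rho)^t$ below $\varepsilon$ requires $t = \cO(\rho^{-1}\log\frac{1}{\varepsilon}) = \cO(\kappa_{\max}\log\frac{1}{\varepsilon})$ iterations, and since synchronization occurs with probability $p$ each iteration, the expected number of communication rounds over $T$ iterations is $pT = \frac{1}{\sqrt{\kappa_{\max}}}\cdot\kappa_{\max}\log\frac{1}{\varepsilon} = \cO(\sqrt{\kappa_{\max}}\log\frac{1}{\varepsilon})$. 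For (iii) I would invoke Lemma \ref{lem:exp-local-steps}, which identifies $\frac{1}{1-q_i(1-p)}$ as the expected per-round local gradient count, and simplify. Factoring $\kappa_{\max}-1 = (\sqrt{\kappa_{\max}}-1)(\sqrt{\kappa_{\max}}+1)$ cancels the $(\sqrt{\kappa_{\max}}-1)$ terms, and collecting the remainder collapses the numerator of $1-q_i(1-p)$ to $\kappa_i+\sqrt{\kappa_{\max}}$, producing the closed form $\frac{\kappa_i(1+\sqrt{\kappa_{\max}})}{\kappa_i+\sqrt{\kappa_{\max}}}$. The final bound by $\min(\kappa_i,\sqrt{\kappa_{\max}})$ splits into two elementary checks: the inequality $\frac{\kappa_i(1+\sqrt{\kappa_{\max}})}{\kappa_i+\sqrt{\kappa_{\max}}} \le \kappa_i$ reduces to $1 \le \kappa_i$, true since condition numbers are at least one, while $\le \sqrt{\kappa_{\max}}$ reduces to $\kappa_i \le \kappa_{\max}$, true by definition of $\kappa_{\max}$.

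Since every step is either a substitution or a short identity, I do not expect a genuine obstacle. The one point demanding care is the cancellation in the step-size verification: recognizing that $1-p^2$ is precisely the denominator appearing in $q_i$ is exactly what forces the per-client bounds to coincide, so that $\gamma = \frac{1}{L_{\max}}$ is attained rather than merely a loose upper estimate, which in turn is what makes $\rho = \frac{1}{\kappa_{\max}}$ and hence the optimal iteration complexity exact.
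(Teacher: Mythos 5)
Your proposal is correct and follows essentially the same route as the paper's own proof: specialize Theorem~\ref{asymmetric:thm} with the prescribed $p$ and $q_i$, verify via the cancellation $1-q_i(1-p^2)=\nicefrac{1}{\kappa_i}$ that $\gamma=\nicefrac{1}{L_{\max}}$ is admissible and tight uniformly in $i$, read off $\rho = \nicefrac{1}{\kappa_{\max}}$ (using $q_{\max}=1$), and invoke Lemma~\ref{lem:exp-local-steps} together with the factorization $\kappa_{\max}-1=(\sqrt{\kappa_{\max}}-1)(\sqrt{\kappa_{\max}}+1)$ to obtain the closed form in (iii). If anything, your write-up of the step-size verification and of the final elementary inequalities is slightly more explicit than the paper's (which compresses the $\gamma$ computation into one garbled display and dismisses the last bound as ``simple algebraic steps''), but the argument is the same.
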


This result clearly quantifies the benefits of using smaller probabilities $q_i$. In particular, if the condition number $\kappa_i$ of client $i$ is smaller than $\sqrt{\kappa_{\max}}$, then within each communication round, it does only $\kappa_i$ number of local gradient steps. However, for a client having the maximal condition number (namely, clients $\arg\max_{i}\{\kappa_{i}\}$), the number of local gradient steps is $\sqrt{\kappa_{\max}}$, which is the same for \algname{Scaffnew}. From this, we conclude that, in terms of computational complexity, \algname{GradSkip} is always better and can be $\cO(n)$ times better than \algname{Scaffnew} \citep{Mishchenko2022ProxSkip}.

\section{GradSkip+}\label{sec:gradskip+}

We extend \mbox{\algname{GradSkip}} in two directions, leading to our general \algname{GradSkip+} method.
The first extension concerns the optimization problem formulation.
As discussed, the distributed problem \eqref{dist-opt-prob} with consensus constraints can be reformulated as a regularized problem \eqref{dist_opt_prob_reform} in a lifted space, where the local variables $x_1, \dots, x_n \in \R^d$ are stacked into a single vector in $\R^{nd}$.
Following \citet{Mishchenko2022ProxSkip}, we consider the lifted problem
\begin{equation}\label{opt-prob}
  \min\limits_{x\in\R^d} f(x) + \psi(x),
\end{equation}
where $f(x)$ is a smooth, strongly convex loss, and $\psi(x)$ is a closed, proper, convex regularizer (see \eqref{dist_opt_prob_reform}).
We require that the proximal operator of $\psi$ is a single-valued function that can be computed.

The second extension in \algname{GradSkip+} is the generalization of the randomization procedure of probabilistic alternations in \algname{GradSkip} by allowing arbitrary unbiased compression operators with certain bounds on the variance. Let us formally define the class of compressors we will be working with.


\begin{definition}[Unbiased Compressors]\label{def:class-unbiased-Omega}
For any positive semidefinite matrix $\mOmega\succeq0$, denote by $\B^d(\mOmega)$ the class of (possibly randomized) unbiased compression operators $\mathcal{C}\colon\R^{d}\to\R^{d}$ such that for all $x\in\R^d$ we have
\begin{eqnarray*}\label{class-unbiased}
  \E{\cC(x)} &=& x , \\
  \E{ \left\|(\Id + \mOmega)^{-1}\cC(x) \right\|^2} &\le& \|x\|^2_{(\Id + \mOmega)^{-1}}.
\end{eqnarray*}
\end{definition}
The class $\B^d(\mOmega)$ is a generalization of commonly used class $\B^d(\omega)$ of unbiased compressors with variance bound $\E{ \left\|\cC(x)\right\|^2} \le (1+\omega)\|x\|^2$ for some scalar $\omega\ge0$. Indeed, when the matrix $\mOmega = \omega\Id$, then $\B^d(\omega\Id)$ coincides with $\B^d(\omega)$. Furthermore, the following inclusion holds:
\begin{lemma}[Proof in \Cref{proof:lem-inclusion}]\label{lem:inclusion}
  $\B^d(\mOmega) \subseteq \B^d \left(\frac{\left(1+\lambda_{\max}(\mOmega)\right)^2}{\left(1+\lambda_{\min}(\mOmega)\right)} - 1\right)$.
\end{lemma}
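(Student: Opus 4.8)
The plan is to verify the two defining conditions of the target class $\B^d(\omega')$ for an arbitrary $\cC \in \B^d(\mOmega)$, where $\omega' \eqdef \frac{(1+\lambda_{\max}(\mOmega))^2}{1+\lambda_{\min}(\mOmega)} - 1$. Unbiasedness transfers trivially, since both $\B^d(\mOmega)$ and $\B^d(\omega')$ require $\E{\cC(x)} = x$. Hence the only real work is upgrading the weighted second-moment bound $\E{\norms{(\Id+\mOmega)^{-1}\cC(x)}} \le \|x\|^2_{(\Id+\mOmega)^{-1}}$ into the standard variance bound $\E{\norms{\cC(x)}} \le (1+\omega')\norms{x}$.

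First I would set $\mM \eqdef \Id + \mOmega$, observe that $\mOmega \succeq 0$ makes $\mM$ symmetric positive definite, and record that its eigenvalues lie in $[1+\lambda_{\min}(\mOmega),\, 1+\lambda_{\max}(\mOmega)]$. Consequently $\|\mM\|_{\mathrm{op}} = 1+\lambda_{\max}(\mOmega)$, while $\mM^{-1}$ has largest eigenvalue $\frac{1}{1+\lambda_{\min}(\mOmega)}$. Next I would write $\cC(x) = \mM\,\mM^{-1}\cC(x)$ and apply the operator-norm inequality $\norms{\mM y} \le \|\mM\|_{\mathrm{op}}^2\,\norms{y}$ pointwise with $y = \mM^{-1}\cC(x)$, then take expectations and invoke the defining bound of $\B^d(\mOmega)$:
\[
\E{\norms{\cC(x)}} \le (1+\lambda_{\max}(\mOmega))^2\, \E{\norms{\mM^{-1}\cC(x)}} \le (1+\lambda_{\max}(\mOmega))^2\, \|x\|^2_{\mM^{-1}}.
\]
Finally I would bound the weighted norm by the largest eigenvalue of $\mM^{-1}$, namely $\|x\|^2_{\mM^{-1}} = x^\top \mM^{-1} x \le \frac{1}{1+\lambda_{\min}(\mOmega)}\norms{x}$, and chain the two estimates to get $\E{\norms{\cC(x)}} \le \frac{(1+\lambda_{\max}(\mOmega))^2}{1+\lambda_{\min}(\mOmega)}\norms{x} = (1+\omega')\norms{x}$, which is precisely the membership condition for $\B^d(\omega')$.

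There is no serious obstacle here; the argument is a short sequence of eigenvalue and operator-norm estimates. The only point demanding care is keeping the two spectral quantities straight, using $\lambda_{\max}(\mOmega)$ to control $\|\mM\|_{\mathrm{op}}$ when expanding $\cC(x) = \mM(\mM^{-1}\cC(x))$, and $\lambda_{\min}(\mOmega)$ to control $\lambda_{\max}(\mM^{-1})$ when bounding the weighted norm, since interchanging them yields a weaker or incorrect constant. As a sanity check, in the scalar case $\mOmega = \omega\Id$ the formula collapses to $\omega' = \frac{(1+\omega)^2}{1+\omega} - 1 = \omega$, confirming that $\B^d(\omega\Id)$ is sent to $\B^d(\omega)$ exactly as the surrounding discussion claims.
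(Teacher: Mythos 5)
Your proof is correct and takes essentially the same route as the paper: the paper's one-line argument consists of exactly the two spectral estimates you use, namely $\|x\|^2_{(\Id + \mOmega)^{-1}} \le \frac{1}{1+\lambda_{\min}(\mOmega)}\|x\|^2$ and $\|(\Id + \mOmega)^{-1}\cC(x)\|^2 \ge \frac{1}{(1+\lambda_{\max}(\mOmega))^2}\|\cC(x)\|^2$, the latter being precisely your factorization $\cC(x) = \mM\,\mM^{-1}\cC(x)$ combined with the operator-norm bound. Your explicit check of unbiasedness and the scalar sanity check $\mOmega = \omega\Id$ are harmless additions to what the paper leaves implicit.
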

The purpose of this new variance bound with matrix parameter $\mOmega$ is to introduce non-uniformity on the level of compression across different directions. 
For example, in the reformulation \eqref{dist_opt_prob_reform} each client controls $\nicefrac{1}{n}$ portion of the directions and the level of compression. 
For example, consider compression operator $\cC\colon\R^d\to\R^d$ defined as
\begin{equation}\label{diag-comp}
\cC(x)_j =
\begin{cases}
  x_j / p_j, & \text{with probability }  \ p_j, \\
  0, & \text{with probability } \ 1-p_j,
\end{cases}
\end{equation}
for all coordinates $j\in[d]$ and for any $x\in\R^d$, where $p_j\in(0,1]$ are given probabilities. 
Then, it is easy to check that $\cC\in\B^d(\mOmega)$ with diagonal matrix $\mOmega = \mdiag(\nicefrac{1}{p_j} - 1)$ having diagonal entries $\nicefrac{1}{p_j} - 1\ge0$.

With finer control over the compression operator, we can use the granular smoothness information of the loss function $f$ via smoothness matrices \citep{QuRich16-1,QuRich}.
\begin{definition}[Matrix Smoothness] \label{def:SM}
A differentiable function $f:\R^d\to \R$ is called $\mL$-smooth with some symmetric and positive definite matrix  $\mL\succ 0$ if
\begin{equation}\label{eq:ML-smoothness}
  D_{f}(x,y) \le \frac{1}{2}\|x-y\|^2_{\mL}, \quad \forall x,y\in\R^d,
\end{equation}
where $\|x\|_{\mL} \eqdef \sqrt{x^\top \mL x }$ denotes the $\mL$-norm of $x$.
\end{definition}
The standard $L$-smoothness condition with scalar $L>0$ is obtained as a special case of \eqref{eq:ML-smoothness} for matrices of the form $\mL=L \Id$, where $\Id$ is the identity matrix.
The notion of matrix smoothness provides more information about the function than mere scalar smoothness.
In particular, if $f$ is $\mL$-smooth, then it is also $\lambda_{\max}(\mL)$-smooth due to the relation $\mL \preceq \lambda_{\max}(\mL)\Id$.
Smoothness matrices have been used in the literature of randomized coordinate descent \citep{Nsync,GJS-HR,ACD-HanzRich} and distributed optimization \citep{safaryan2021smoothness,Wang2021Quantize}.

\subsection{Algorithm Description}

\begin{algorithm*}[h]
  \caption{\algname{GradSkip+}}
  \label{alg:proxskip++}
  \begin{algorithmic}[1]
      \STATE {\bf Parameters:} stepsize $\gamma > 0$, compressors $\cC_{\omega} \in \B^d(\omega)$ and $\cC_{\mOmega} \in \B^d(\mOmega)$.
      \STATE {\bf Input:} initial iterate $x_0\in \R^d$, initial control variate ${\red h_0} \in \R^d$, number of iterations $T\geq 1$. 
      \FOR{$t=0,1,\dotsc,T-1$}
      \STATE ${\red \hh_{t+1} } = \nabla f(x_t) - (\Id + \mOmega)^{-1}\cC_{\mOmega}\left(\nabla f(x_t) - {\red h_t}\right)$ \hfill $\diamond$ Update the shift ${\red\hat h_{t}}$ via shifted compression
      \STATE $\hx_{t+1} = x_t - \gamma (\nabla f(x_t) - {\red \hh_{t+1} })$ \hfill $\diamond$ Update the iterate ${\hat x_{t}}$ via a shifted gradient step
      \STATE $\hat g_t = \frac{1}{\gamma(1+\omega)}\cC_{\omega}\left(\hx_{t+1} - \prox_{\gamma(1+\omega)\psi}\left(\hat x_{t+1} - \gamma(1+\omega){\red \hh_{t+1}} \right) \right)$ \hfill $\diamond$ Estimate the proximal gradient
      \STATE $x_{t+1} = \hx_{t+1} - \gamma \hat g_t$ \hfill $\diamond$ Update the main iterate ${x_{t}}$
      \STATE ${\red h_{t+1}} = {\red \hh_{t+1}} + \frac{1}{\gamma(1+\omega)}(x_{t+1} - \hat x_{t+1})$ \hfill $\diamond$ Update the main shift ${\red h_{t}}$
      \ENDFOR
  \end{algorithmic}
\end{algorithm*}

Similar to \algname{GradSkip}, we maintain two variables $x_t,\;\hat x_t$ for the model, and two variables ${\red h_t},\;{\red \hh_t}$ for the gradient shifts in \algname{GradSkip+}. 
Initial values $x_0\in\R^d$ and ${\red h_0}\in\R^d$ can be chosen arbitrarily. 
In each iteration, \algname{GradSkip+} first updates the auxiliary shift ${\red\hat h_{t+1}}$ using the previous shift ${\red h_{t}}$ and gradient $\nabla f(x_{t})$ (line 4). 
This shift ${\red\hat h_{t+1}}$ is then used to update the auxiliary iterate $x_{t}$ via shifted gradient step (line 5). 
Then we estimate the proximal gradient $\hat g_t$ (line 6) in order to update the main iterate $x_{t+1}$ (line 7). 
Lastly, we complete the iteration by updating the main shift ${\red h_t}$ (line 8). 
See \Cref{alg:proxskip++} for the formal steps.

\subsection{Special Cases}\label{apx:special-cases-gradskip+}

\algname{GradSkip+} recovers several existing methods as special cases, including \algname{ProxGD}, \algname{ProxSkip}, and \algname{RandProx-FB} \citep{RandProx}.

\begin{itemize}
  \item \algname{ProxGD}. When $\cC_{\omega}$ is the identity compressor (i.e., $\omega=0$), then Algorithm \ref{alg:proxskip++} reduces to the \algname{ProxGD} algorithm as 
  $$
    x_{t+1} = \prox_{\gamma\psi}(\hat x_{t+1} - \gamma\hat h_{t+1}) = \prox_{\gamma\psi}(x_{t} - \gamma \nabla f(x_t))
  $$
  for any choice of $\cC_{\mOmega}$.

  \item \algname{ProxSkip}. Let $\cC_{\mOmega}$ be the identity compressor (i.e., $\mOmega = \Id$) and $\cC_{\omega}$ be the Bernoulli compressor $\cC_p$ with parameter $p\in(0,1]$ (note that here $\omega = \nicefrac{1}{p} - 1$). 
  In this case, ${\red \hh_{t+1}} \equiv {\red h_t}$ and 
  $$
    x_{t+1} = 
    \begin{cases}
      \prox_{\frac{\gamma}{p}\psi}\left(\hat x_{t+1} - \frac{\gamma}{p}h_{t} \right), & \text{with probability } p,\\
      \hat x_{t+1}, & \text{with probability } 1-p.
    \end{cases}
  $$
  Thus, we recover the \algname{ProxSkip} algorithm.

  \item \algname{RandProx-FB}. 
  Let $\cC_{\mOmega}$ be the identity compressor and $\cC_{\mOmega} = \mathcal{R} \in \B^d(\omega)$. 
  Then, after the following change of notation: 
  $$
  {\red h_t} = -u_t, \quad \hat g_t = \frac{d_t}{1+\omega_2},
  $$
  the method is equivalent to \algname{RandProx-FB} \citep{RandProx}, which is a generalization of \algname{ProxSkip} when additional smoothness information for the regularizer $\psi$ is known\footnote{We do not consider smooth regularizers as our primary example of regularizer is the non-smooth consensus constraint \eqref{dist_opt_prob_reform}.}.

  \item \algname{GradSkip}.
  Finally, we can specialize \algname{GradSkip+} to recover \algname{GradSkip}. 
  Consider the lifted space $\R^{nd}$ where $x\in\R^{nd}$ represents the concatenations of models $x_1,\dots,x_n\in\R^d$ from all clients. 
  The central example of an unbiased compression operator for that would be the probabilistic switching mechanism used in \algname{GradSkip}, which is sometimes referred to as Bernoulli compressor: for any given $p\in[0,1]$, the compressor outputs 
  $$
    \cC_p^{nd}(x) = 
    \begin{cases}
      \frac{x}{p}, & \text{with probability } p,\\ 
      0, & \text{with probability } 1-p,
    \end{cases}
  $$
  for any input vector $x\in\R^{nd}$. 
  \algname{GradSkip} employs one Bernoulli compressor $\cC^{nd}_p$ with parameter $p\in(0,1]$ controlling communication rounds, and one Bernoulli compressor $\cC^d_{q_i}$ with parameter $q_i\in(0,1]$ for each client to control local gradient steps. 
  Therefore, choosing 
  $
    \cC_{\omega} = \cC_p^{nd} \quad\text{and}\quad \cC_{\mOmega} = \cC_{q_1}^{d} \times\dots\times \cC_{q_n}^{d}
  $
  in the lifted space $\R^{nd}$, \algname{GradSkip+} reduces to \algname{GradSkip}.
\end{itemize}

\subsection{Convergence Theory}
We now present the convergence theory for \algname{GradSkip+}, for which we replace the scalar smoothness Assumption \ref{asm:main} by matrix smoothness.
\begin{assumption}[Convexity and smoothness]\label{asm:convex-smooth}
We assume that the loss function $f$ is $\mu$-strongly convex with positive $\mu>0$ and $\mL$-smooth with positive definite matrix $\mL\succ0$.
\end{assumption}
Similar to \eqref{dist-Lyapunov}, we analyze \algname{GradSkip+} using the Lyapunov function 
$$
\Psi_t \eqdef \|x_{t} - x_{\star}\|^2 + \gamma^2(1+\omega)^2\|h_{t} - \nabla f(x_*)\|^2.
$$
The next theorem shows the linear convergence result.

\begin{theorem}[Proof in \Cref{proof:thm-proxskip++}]
  \label{thm:proxskip++}
Let Assumption \ref{asm:convex-smooth} hold, $\cC_{\omega} \in \B^d(\omega)$ and $\cC_{\mOmega} \in \B^d(\mOmega)$ be  the compression operators, and 
$$
  \mtOmega \eqdef \Id + \omega(\omega+2)\mOmega(\Id + \mOmega)^{-1}.
$$
Then, if the stepsize $\gamma\le\lambda_{\max}^{-1}(\mL\mtOmega)$, the iterates of \algname{GradSkip+} (Algorithm \ref{alg:proxskip++}) satisfy
  \begin{equation}\label{GradSkip+:lyapunov_decrease}
    \E{\Psi_{t}}
    \le \left(1 - \min\left\{\gamma\mu,\delta\right\}\right)^t \Psi_0,
  \end{equation}
where 
$$
  \delta = 1 - \frac{1}{1+\lambda_{\min}(\mOmega)}\left(1-\frac{1}{(1+\omega)^2}\right)\in[0,1].
$$
\end{theorem}
First, if we choose $\cC_{\mOmega}$ to be the identity compression (i.e., $\mOmega = \mO$), then \algname{GradSkip+} reduces to \algname{RandProx-FB}, and we recover asymptotically the same rate with linear factor $(1 - \min\{\gamma\mu, \nicefrac{1}{(1+\omega)^2}\})$ (see Theorem 3 of \citet{RandProx}). If we further choose $\cC_{\omega}$ to be the Bernoulli compression with parameter $p\in(0,1]$, then $\omega = \nicefrac{1}{p} - 1$ and we get the rate of \algname{ProxSkip}.

To recover the rate in \eqref{asymmetric:lyapunov_decrease} for \algname{GradSkip}, consider the lifted space $\R^{nd}$ and the reformulated problem \eqref{dist_opt_prob_reform} with objective
$f(x) = \frac{1}{n}\sum_{i=1}^n f_i(x_i)$, where $x_i \in \R^d$ and $x = (x_1,\dots,x_n) \in \R^{nd}$.
Since each $f_i$ is $\mu$-strongly convex, the function $f$ is also $\mu$-strongly convex.
Regarding the smoothness condition, we have $L_i\Id\in\R^{d\times d}$ smoothness matrices (e.g., scalar $L_i$-smoothness) for each $f_i$, which implies that the overall loss function $f$ has $\mL = \mdiag(L_1\Id,\dots, L_n\Id)\in\R^{nd\times nd}$ as a smoothness matrix.

Furthermore, choosing Bernoulli compression operators $\cC_{\omega} = \cC_p^{nd}$ and $\cC_{\mOmega} = \cC_{q_1}^{d} \times\dots\times \cC_{q_n}^{d}$ in the lifted space $\R^{nd}$, we get 
$$
  \omega = \frac{1}{p} - 1 \quad\text{and}\quad \mOmega = \mdiag\(\frac{1}{q_i} - 1\).
$$
It remains to plug all these expressions into \Cref{thm:proxskip++} and recover \Cref{thm:optimal-rate}.
Indeed, $\lambda_{\min}(\mOmega) = \nicefrac{1}{q_{\max}} - 1$ and, hence, $\delta = 1 - q_{\max}\left(1-p^2\right)$.

Finally, Theorem~\ref{thm:proxskip++} gives the same stepsize bound
$$
  \lambda_{\max}^{-1}(\mL\mtOmega)
    = \min_i \left\{L_i \left(1 + (1-q_i)\left(\frac{1}{p^2} - 1 \right)\right) \right\}^{-1}
    = \min_i \left\{\frac{1}{L_i}\frac{p^2}{1-q_i\left(1-p^2\right)}\right\}.
$$

\section{Experiments}\label{seq_experiments}

\begin{figure*}[h]
  \centering
  \begin{tabular}{cccc}
      $\!\!$\includegraphics[width=0.23\textwidth]{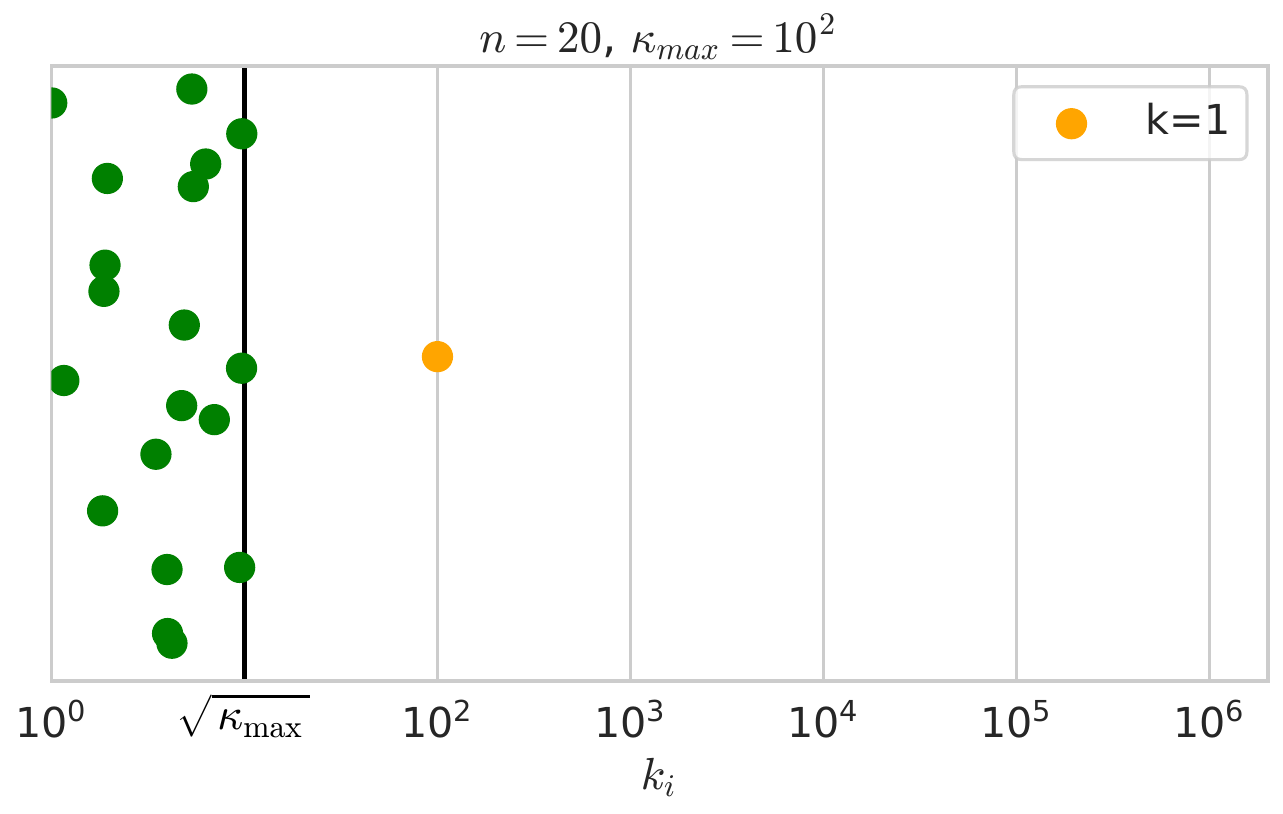}&
      $\!\!$\includegraphics[width=0.23\textwidth]{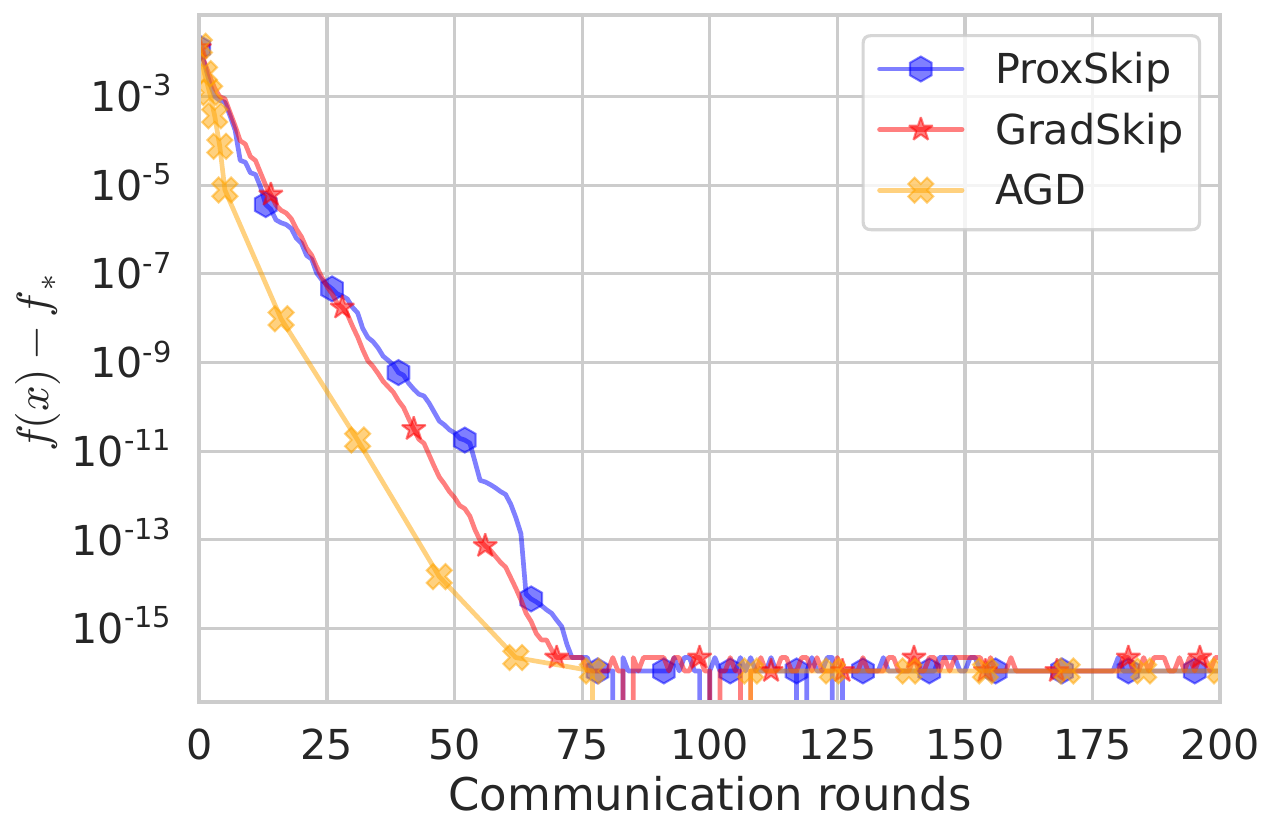}&
      $\!\!$\includegraphics[width=0.23\textwidth]{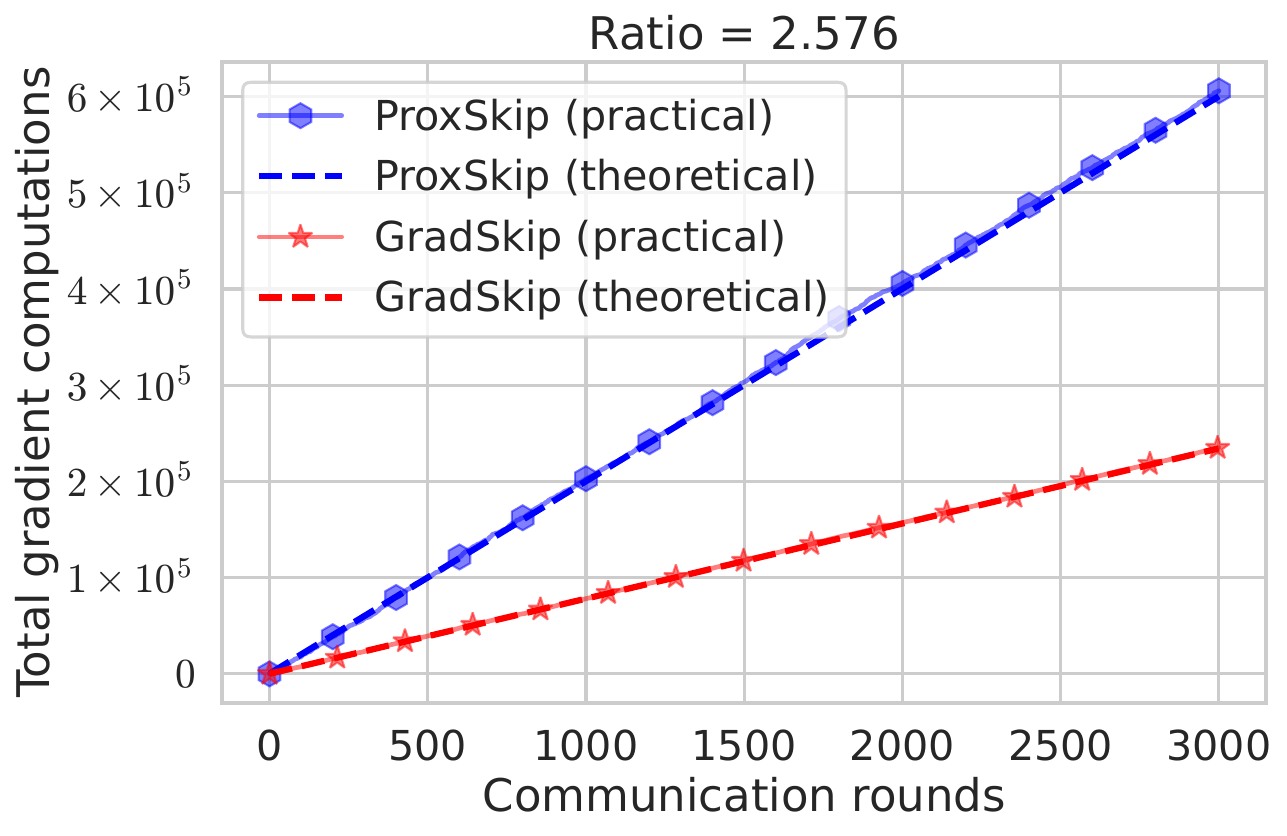}&
      $\!\!$\includegraphics[width=0.23\textwidth]{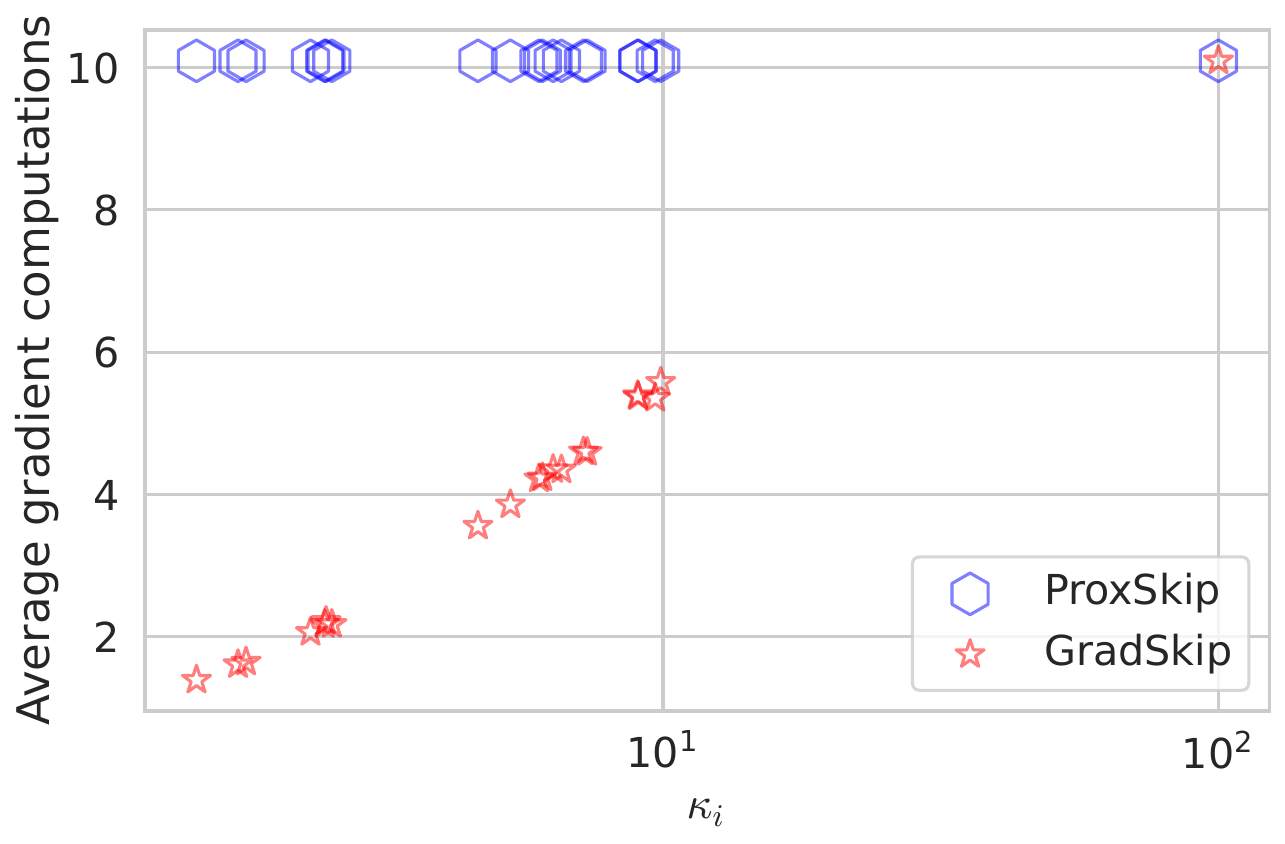}
      \\
      $\!\!$\includegraphics[width=0.23\textwidth]{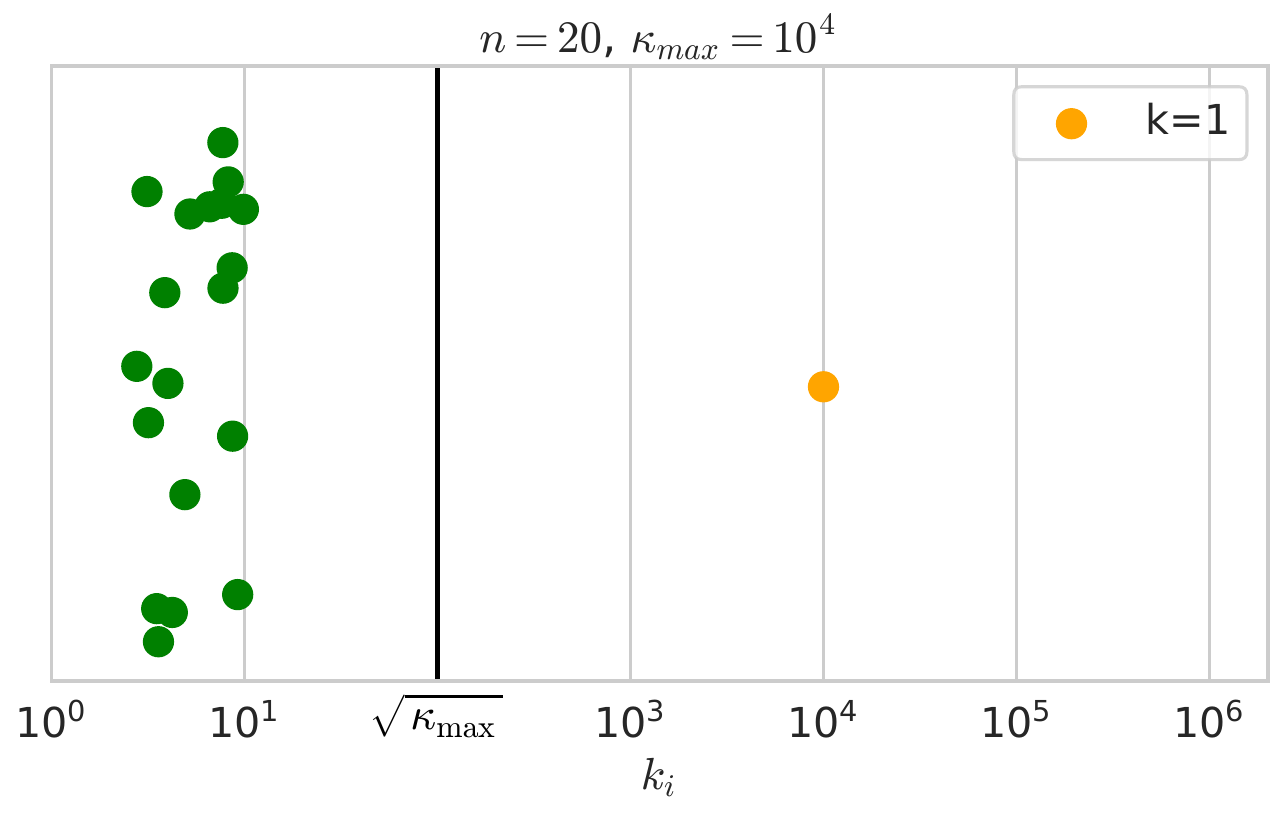}&
      $\!\!$\includegraphics[width=0.23\textwidth]{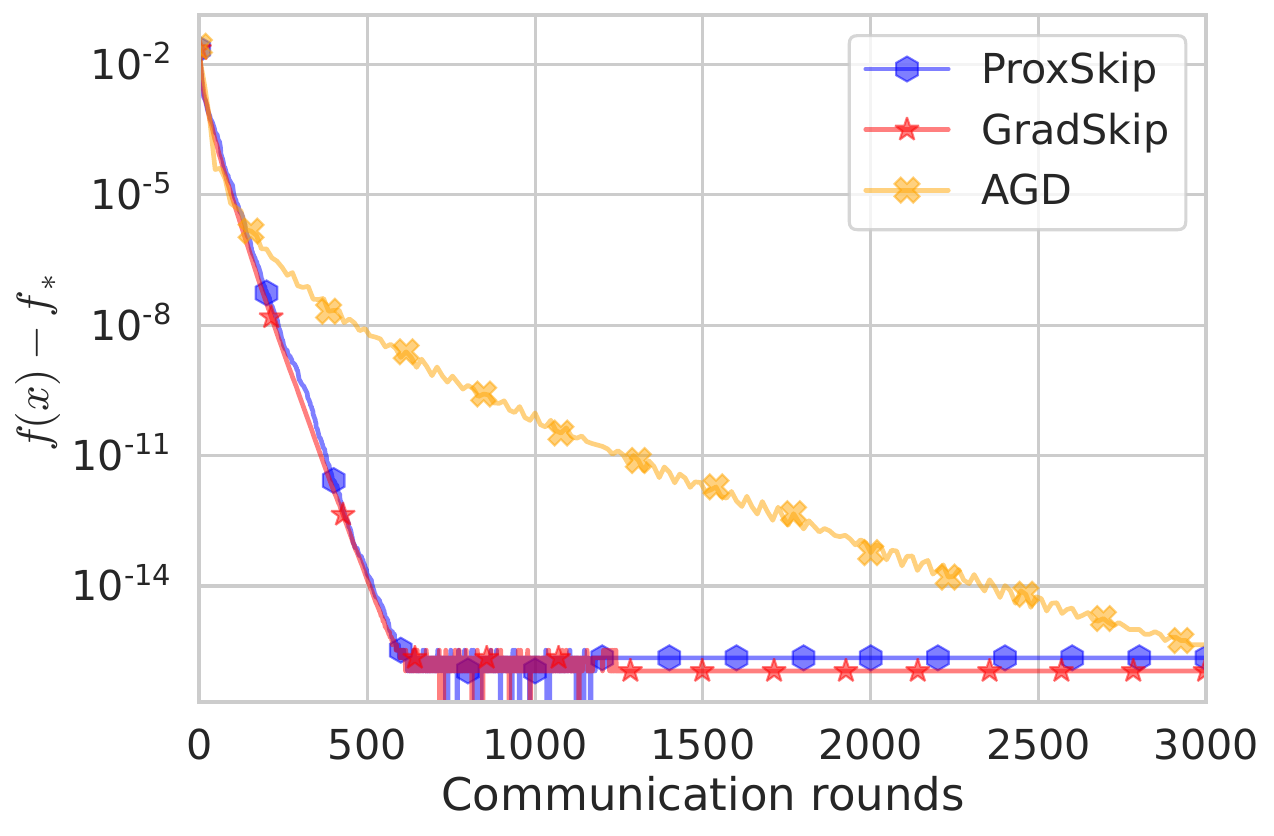}&
      $\!\!$\includegraphics[width=0.23\textwidth]{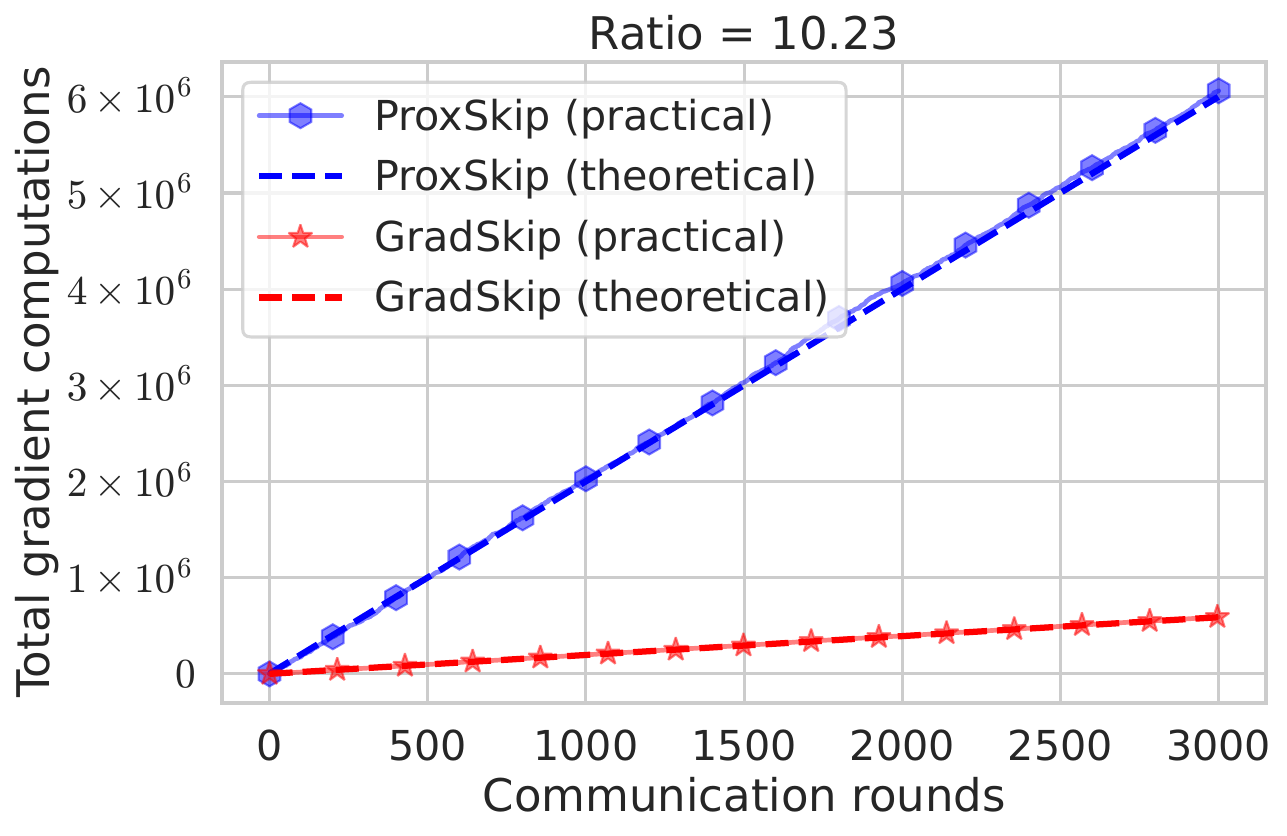}&
      $\!\!$\includegraphics[width=0.23\textwidth]{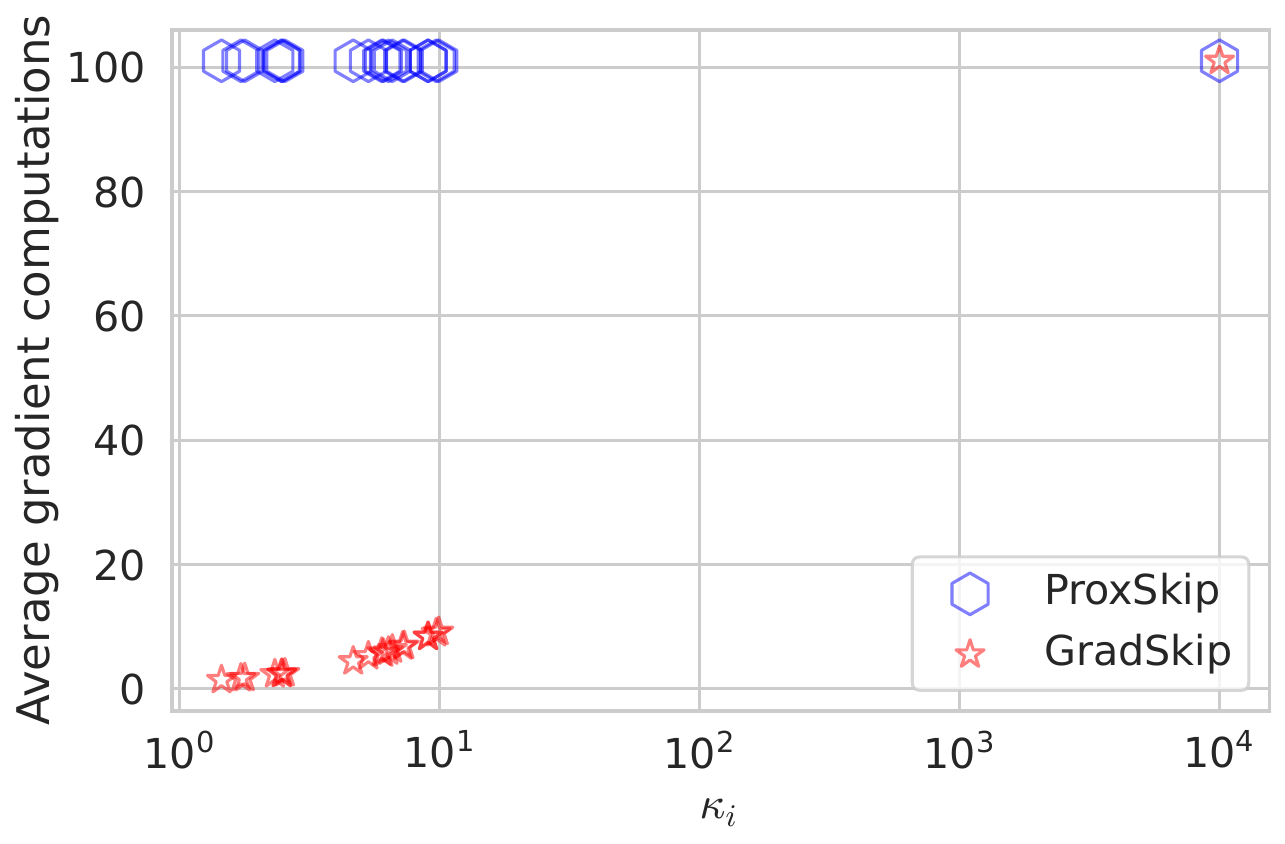}
      \\
      $\!\!$\includegraphics[width=0.23\textwidth]{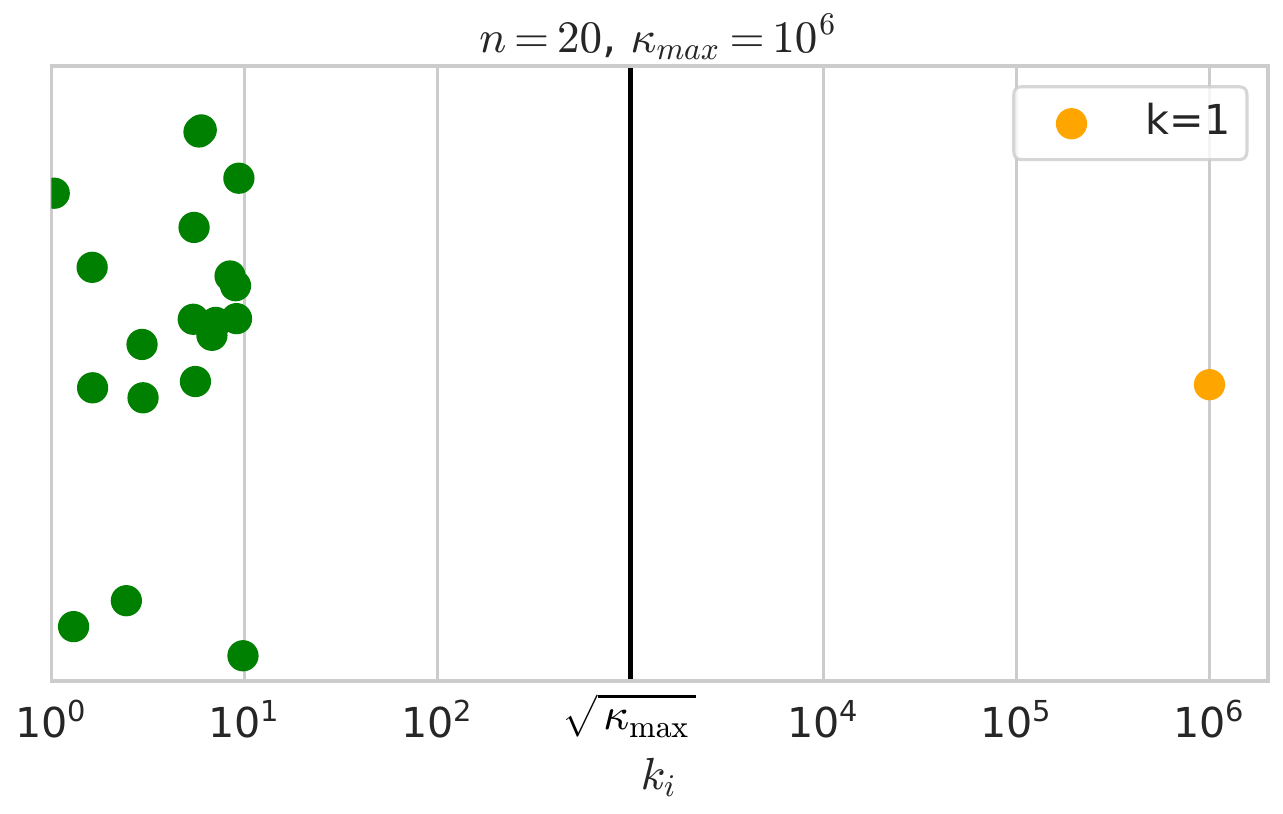}&
      $\!\!$\includegraphics[width=0.23\textwidth]{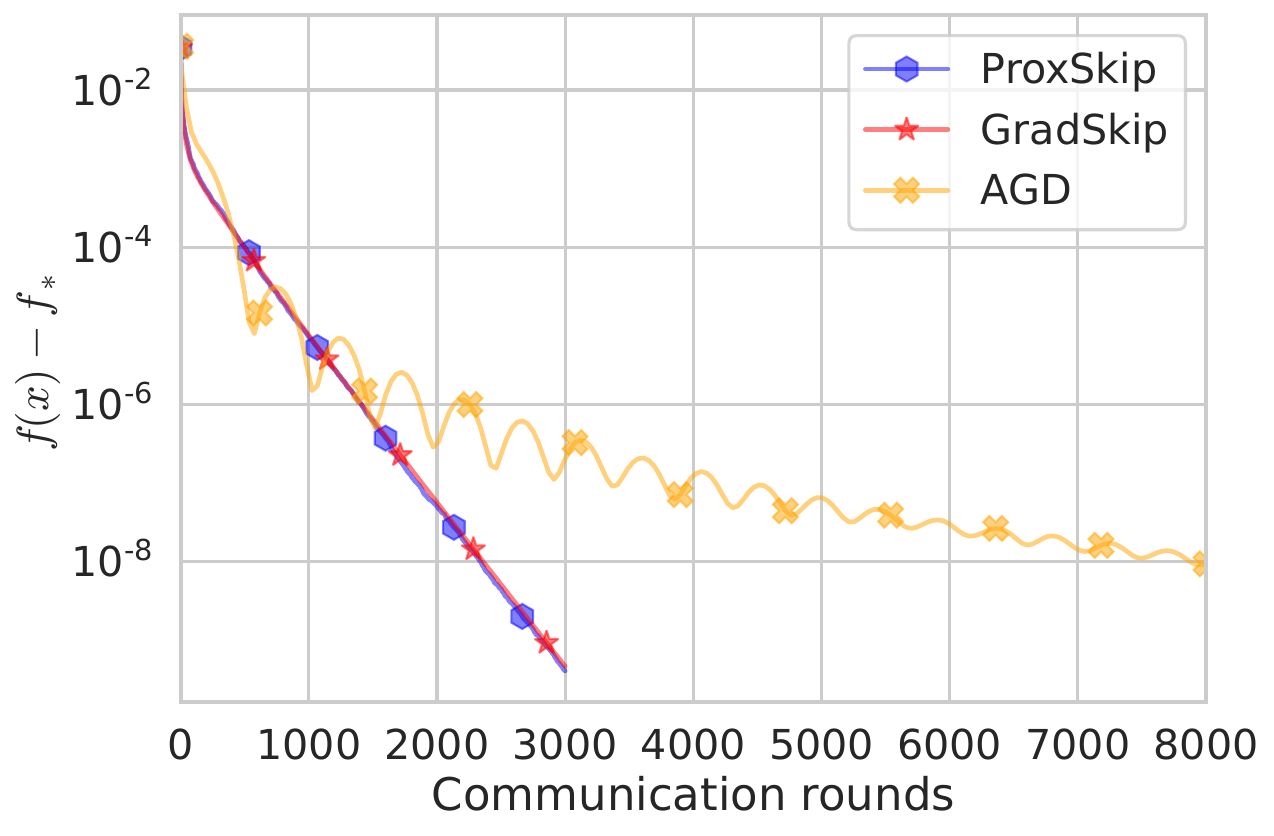}&
      $\!\!$\includegraphics[width=0.23\textwidth]{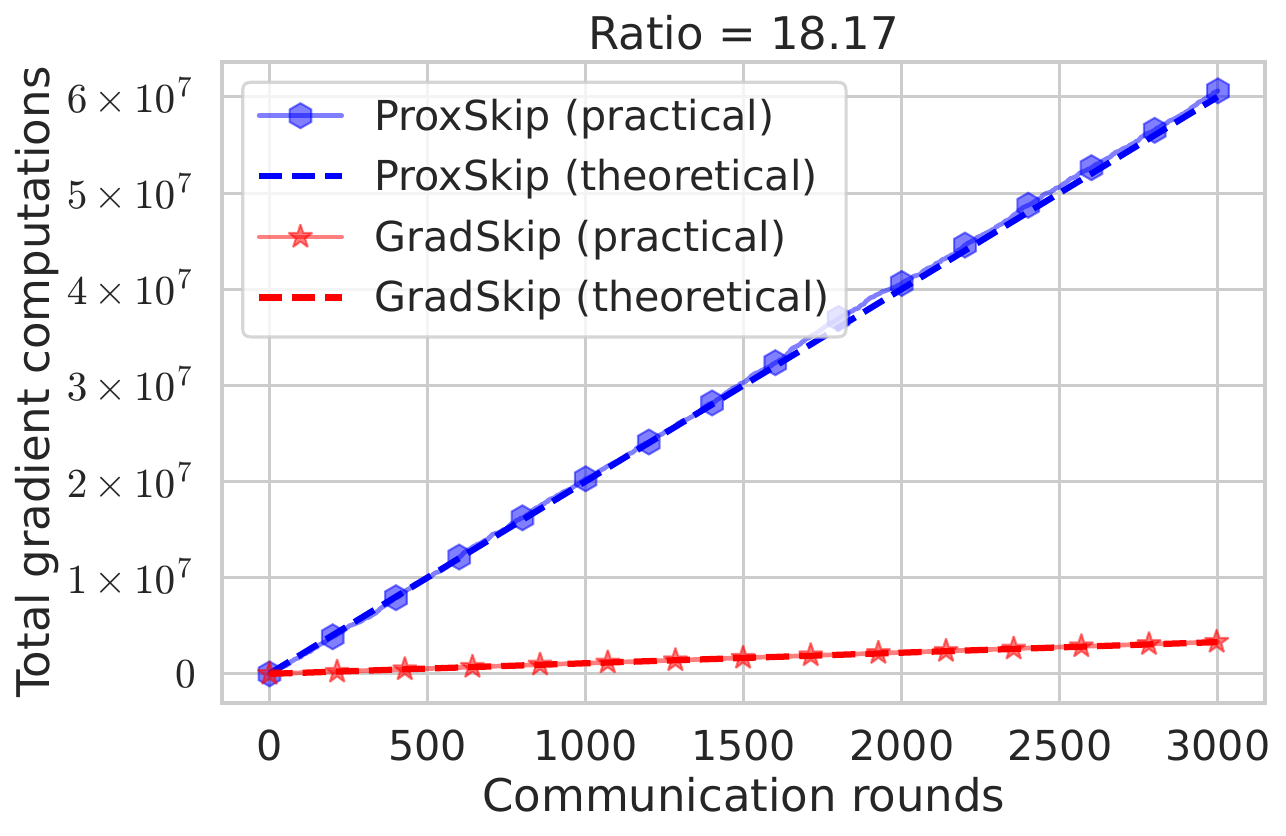}&
      $\!\!$\includegraphics[width=0.23\textwidth]{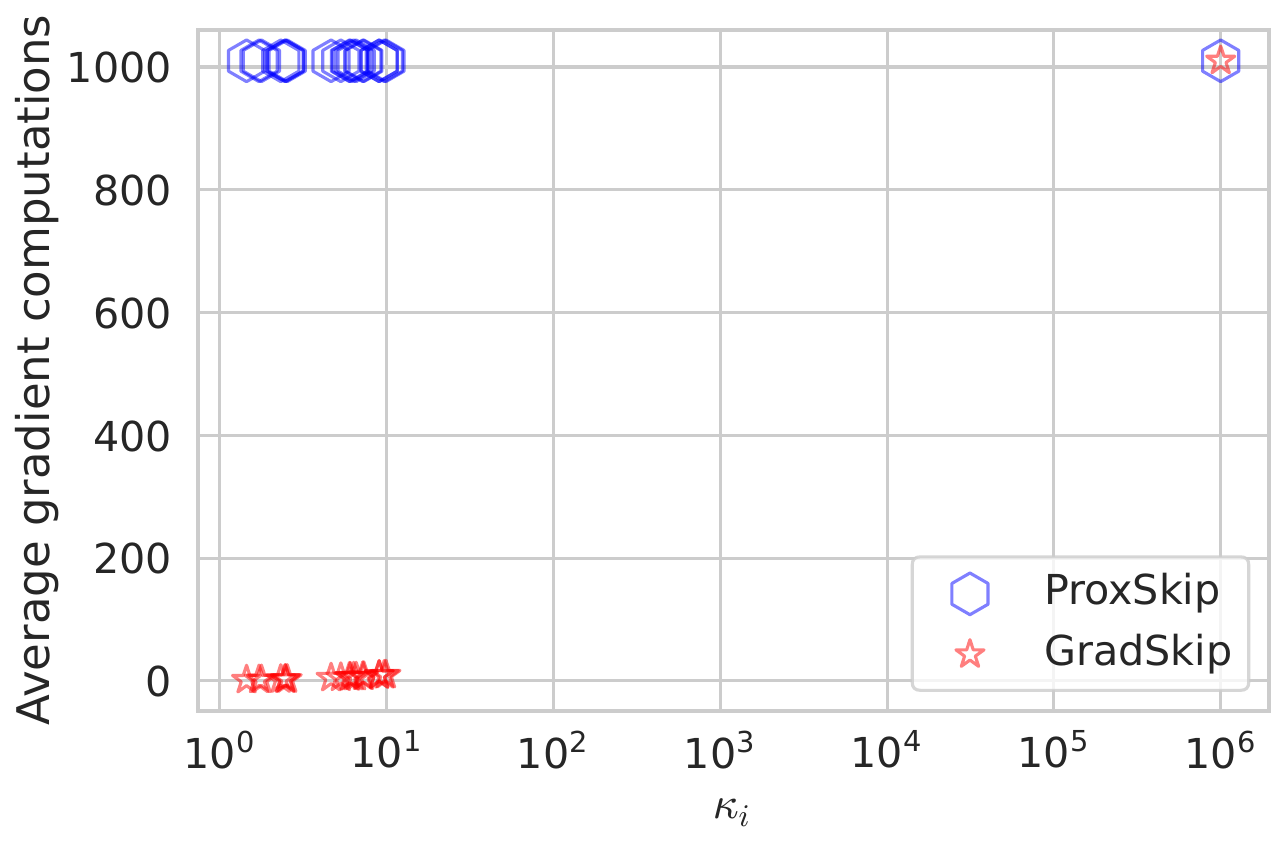}
  \end{tabular}
    
\caption{The first column displays the condition numbers for devices.
The second column presents convergence per communication round.
The third column contrasts theoretical and practical gradient computation counts.
The final column reveals the average gradient computations for devices with condition number $\kappa_i$. 
Notably, in \algname{GradSkip}, the device with $\kappa_i = \kappa_{max}$ performs gradient computations at a rate comparable to all devices in \algname{ProxSkip}.
  }
  \label{fig:L_max}
\end{figure*} 

\begin{figure*}[h]
  \centering
  \begin{tabular}{cccc}
      $\!\!$\includegraphics[width=0.23\textwidth]{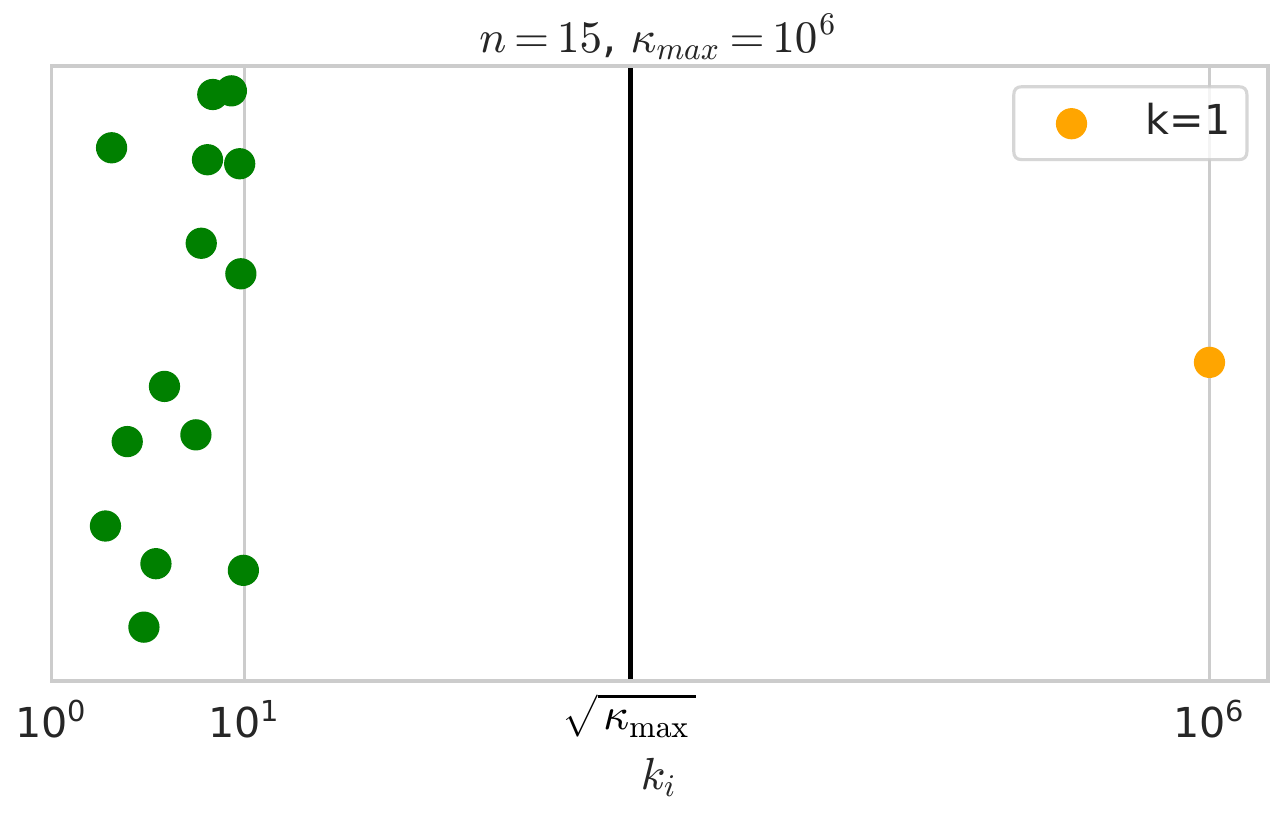}&
      $\!\!$\includegraphics[width=0.23\textwidth]{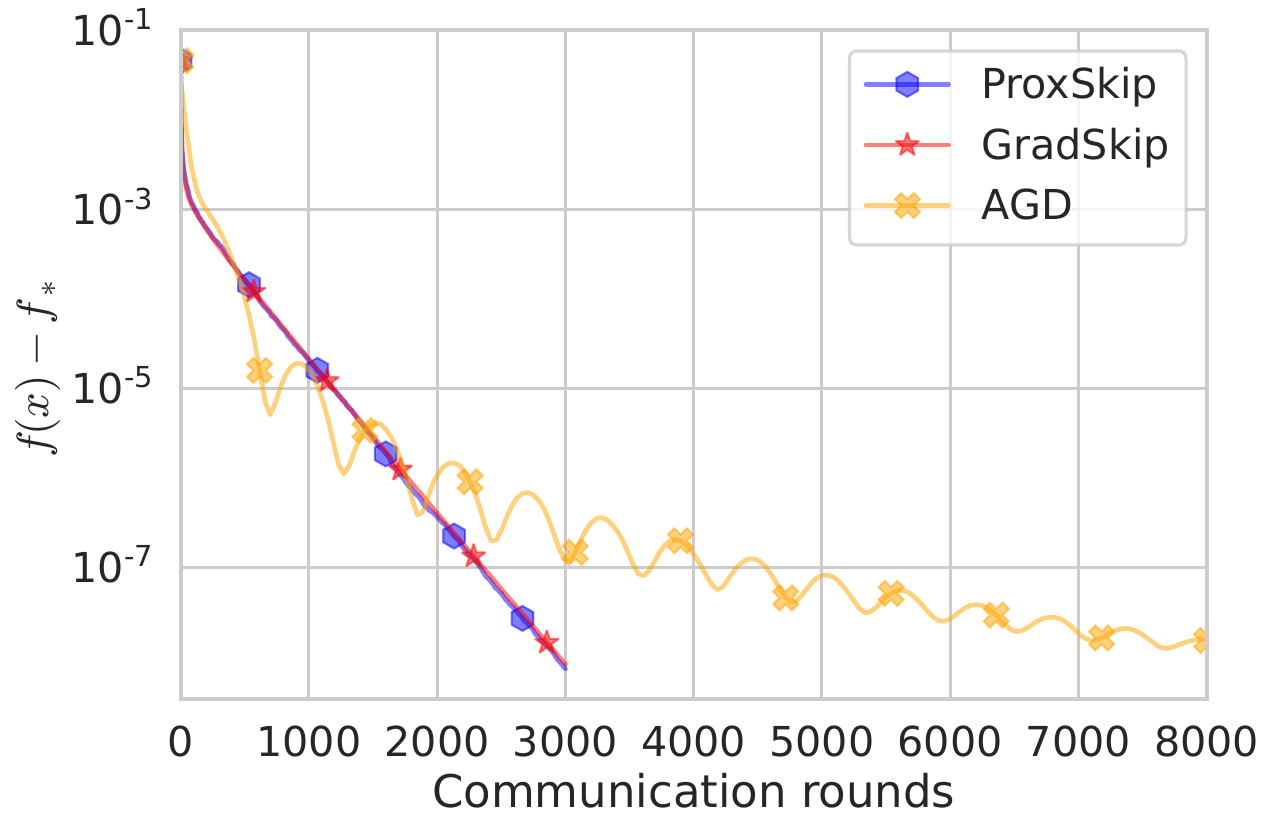}&
      $\!\!$\includegraphics[width=0.23\textwidth]{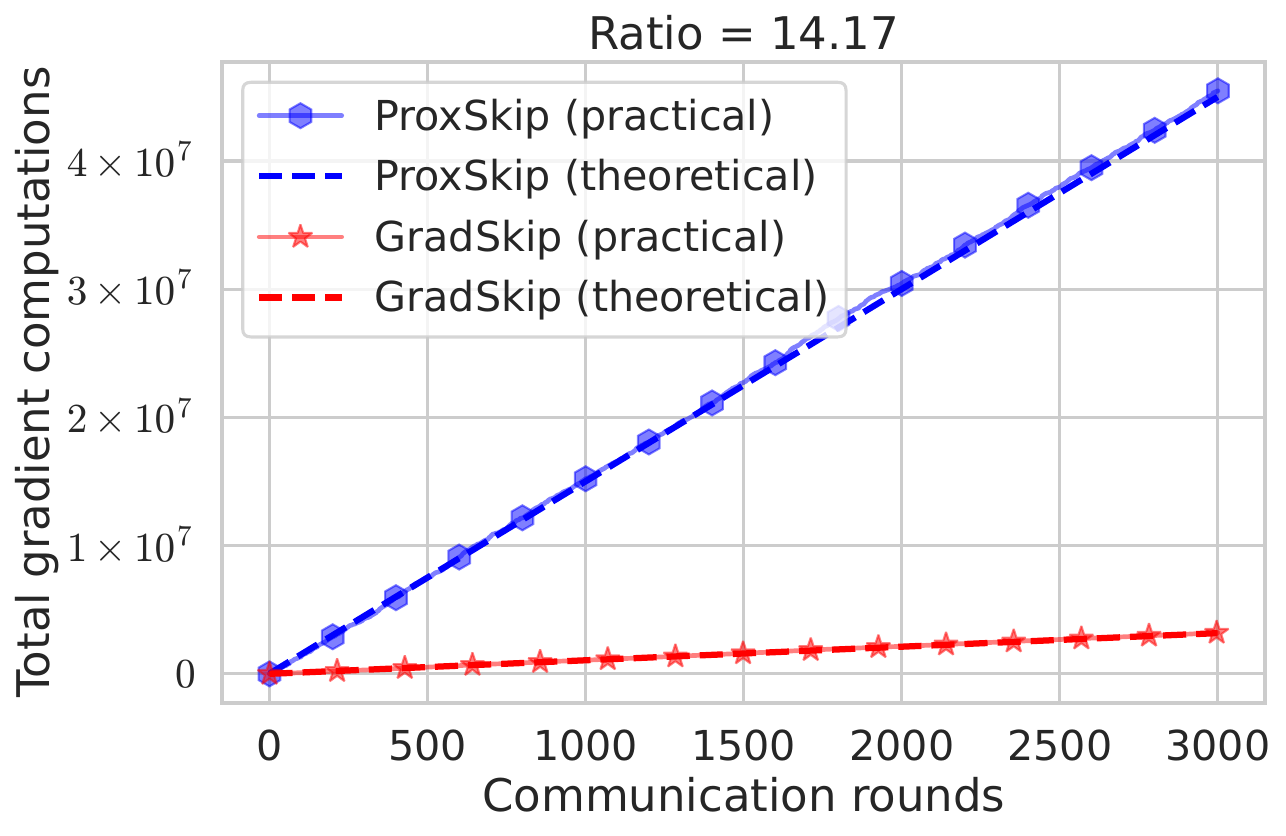}&
      $\!\!$\includegraphics[width=0.23\textwidth]{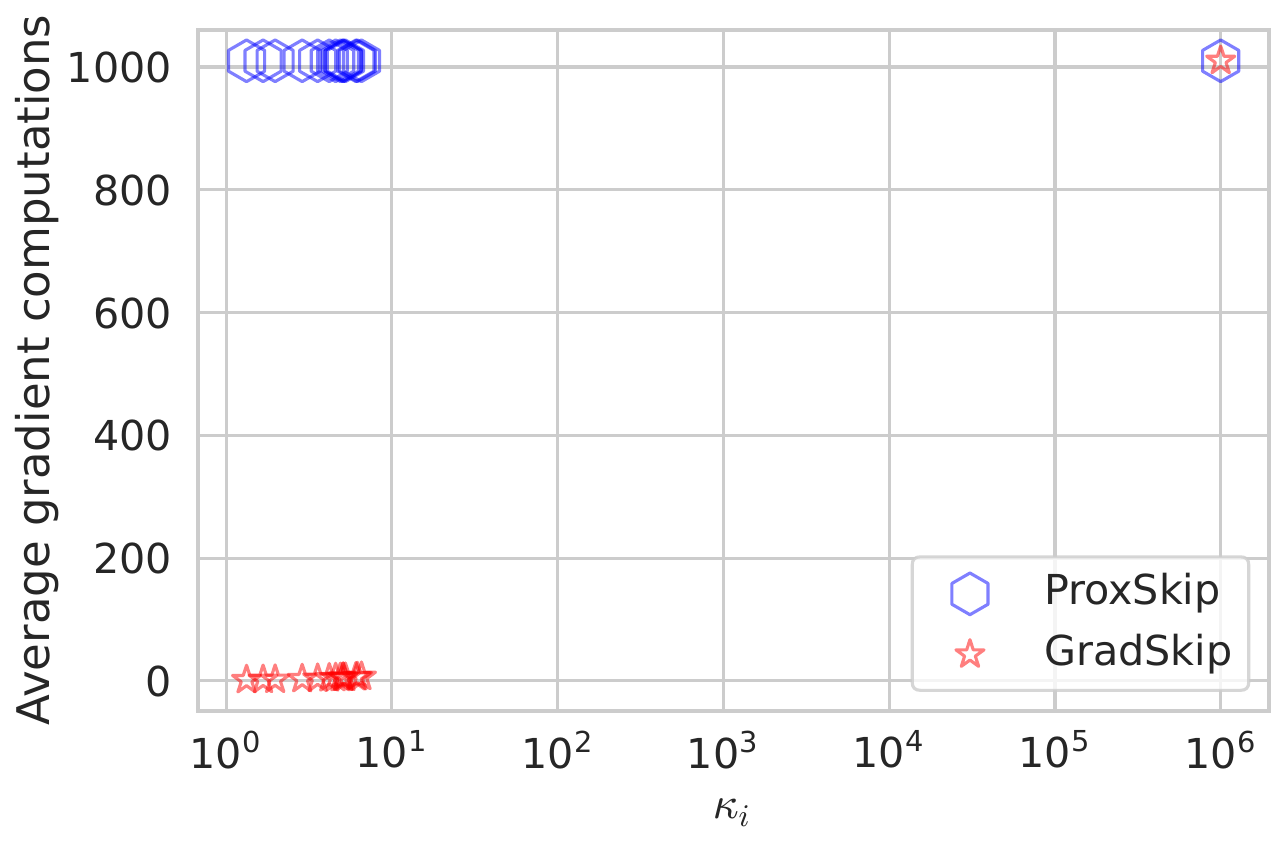}
      \\
      $\!\!$\includegraphics[width=0.23\textwidth]{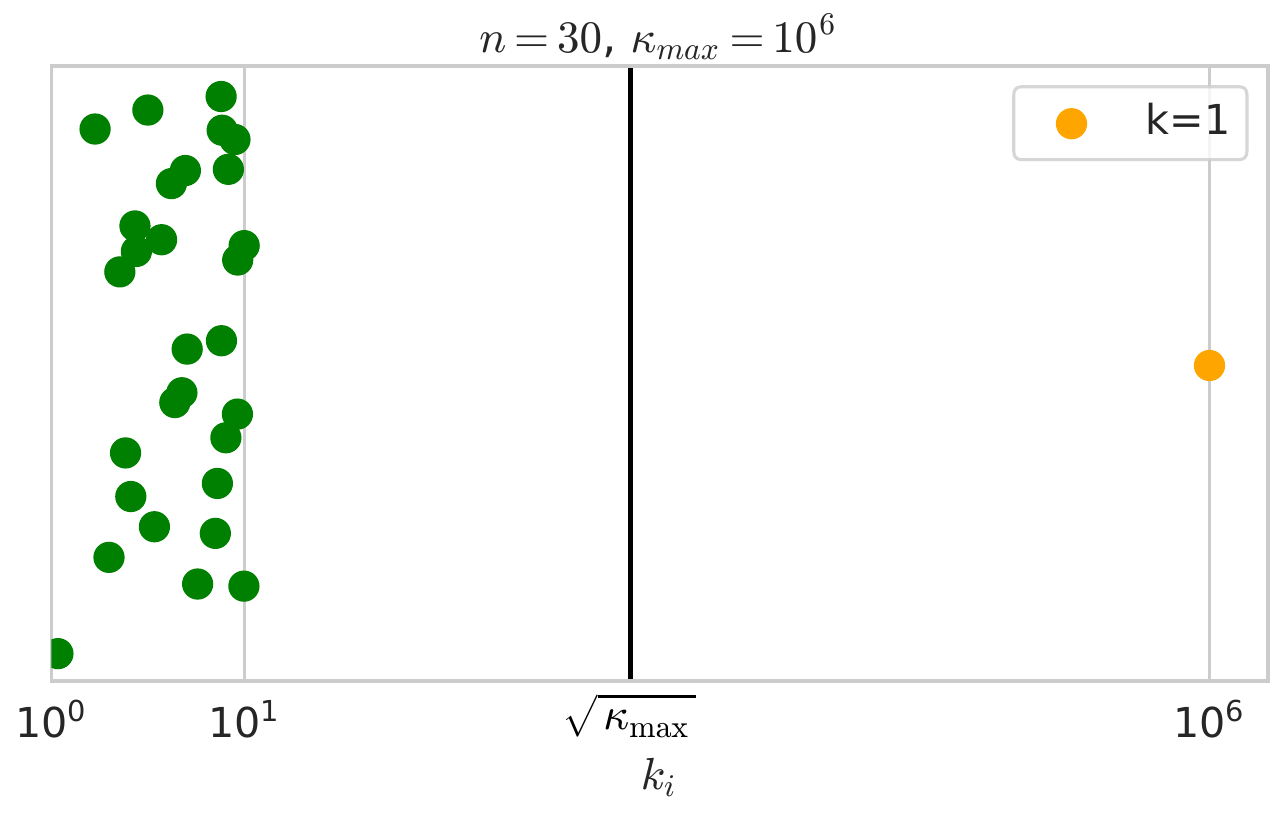}&
      $\!\!$\includegraphics[width=0.23\textwidth]{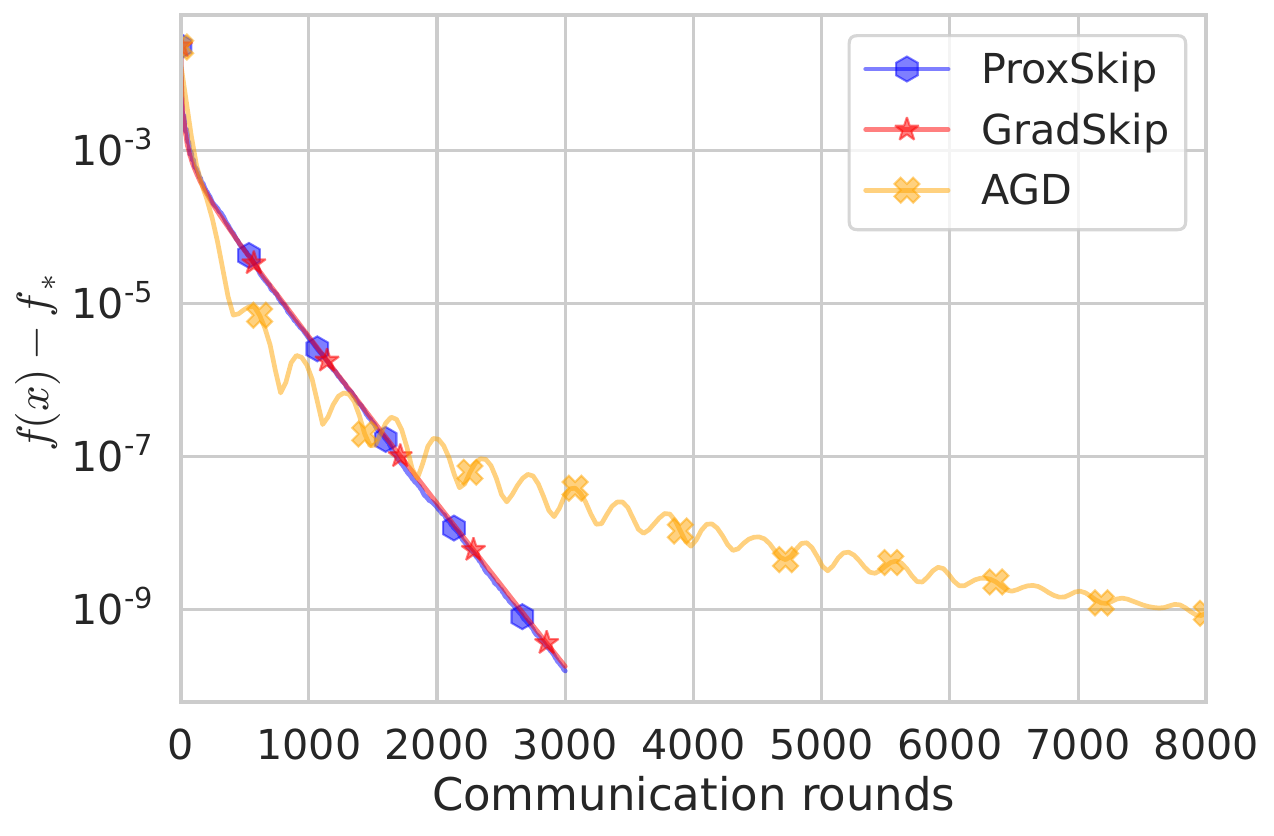}&
      $\!\!$\includegraphics[width=0.23\textwidth]{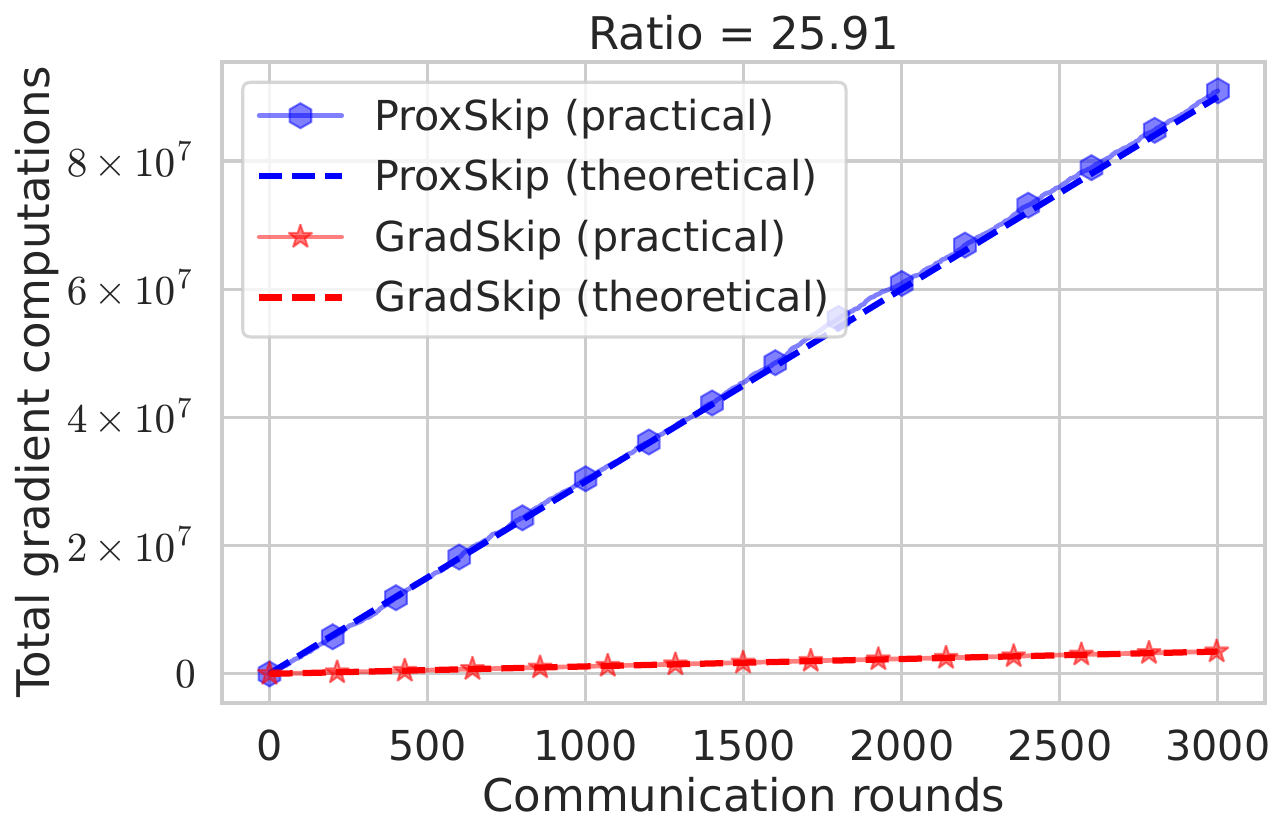}&
      $\!\!$\includegraphics[width=0.23\textwidth]{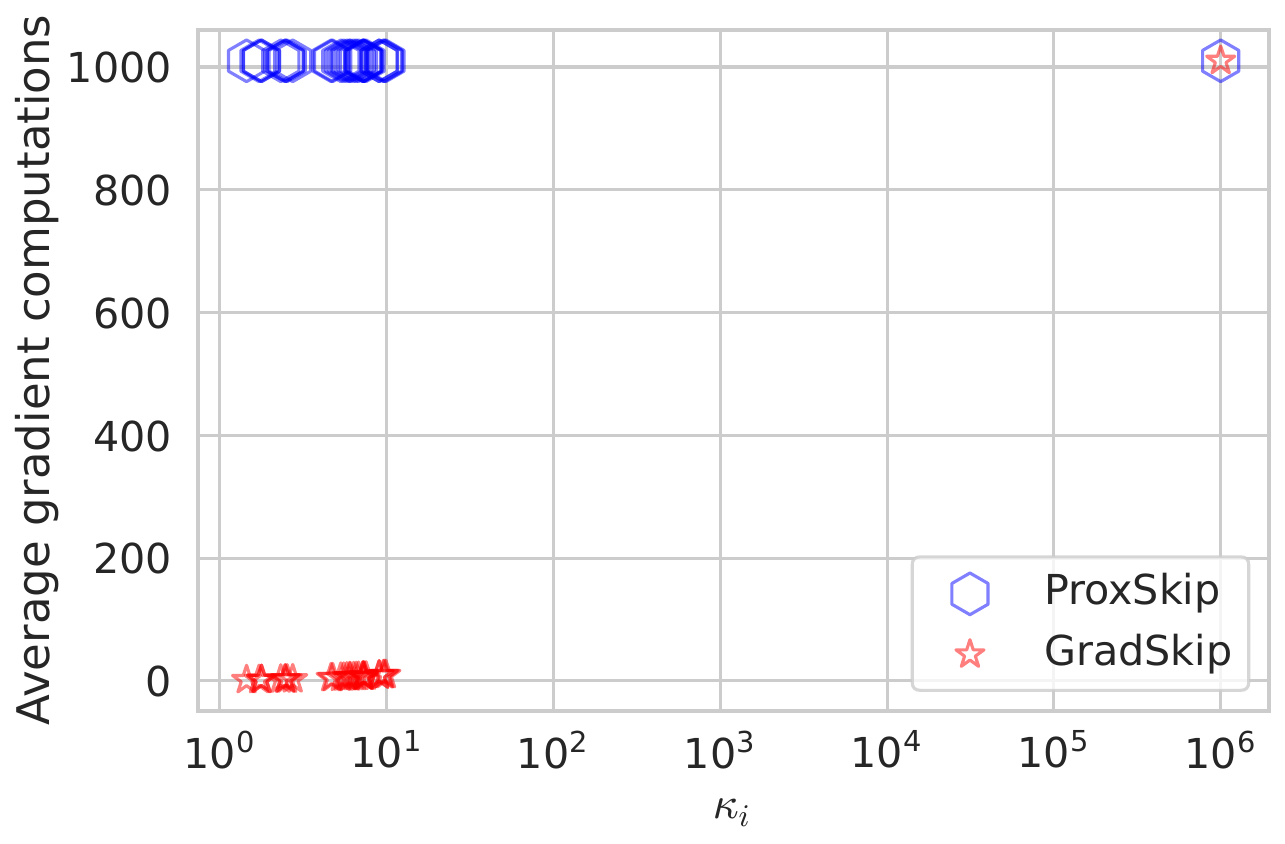}
      \\
      $\!\!$\includegraphics[width=0.23\textwidth]{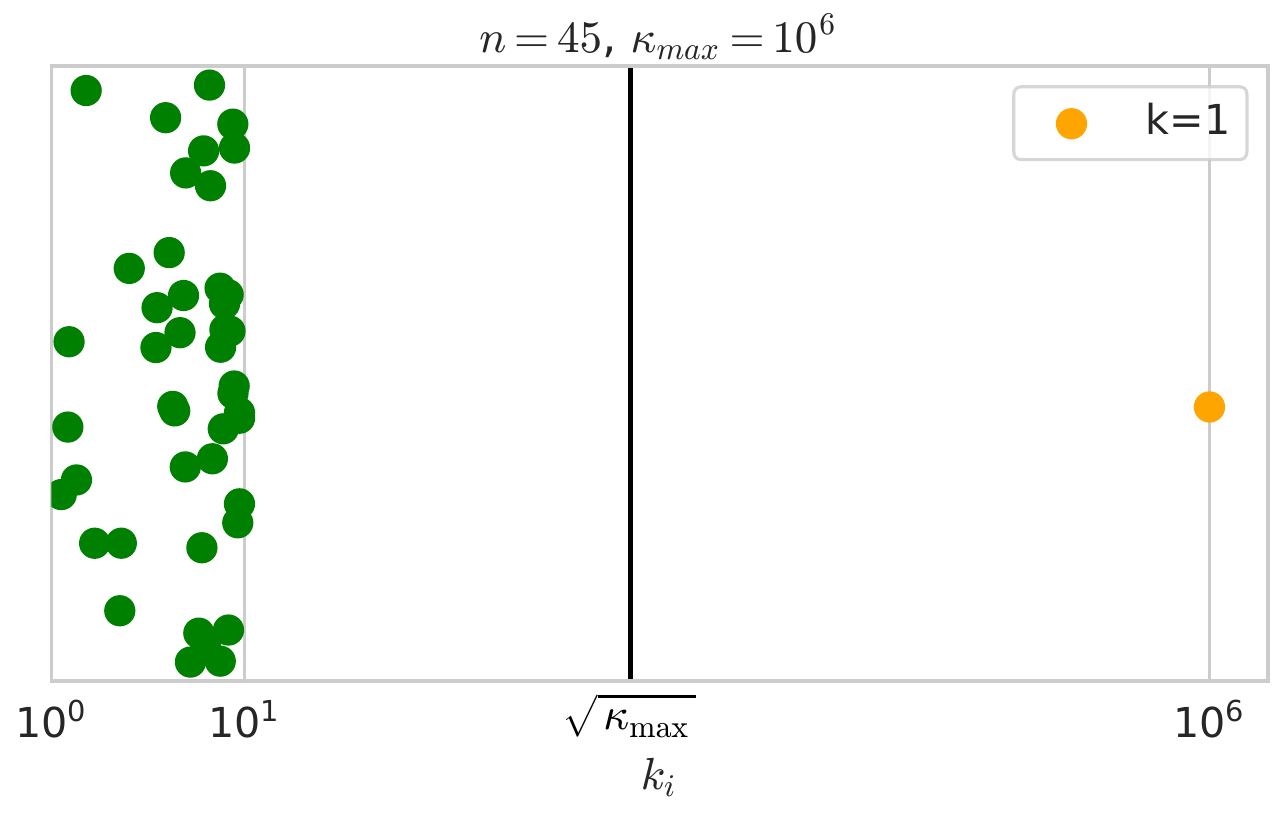}&
      $\!\!$\includegraphics[width=0.23\textwidth]{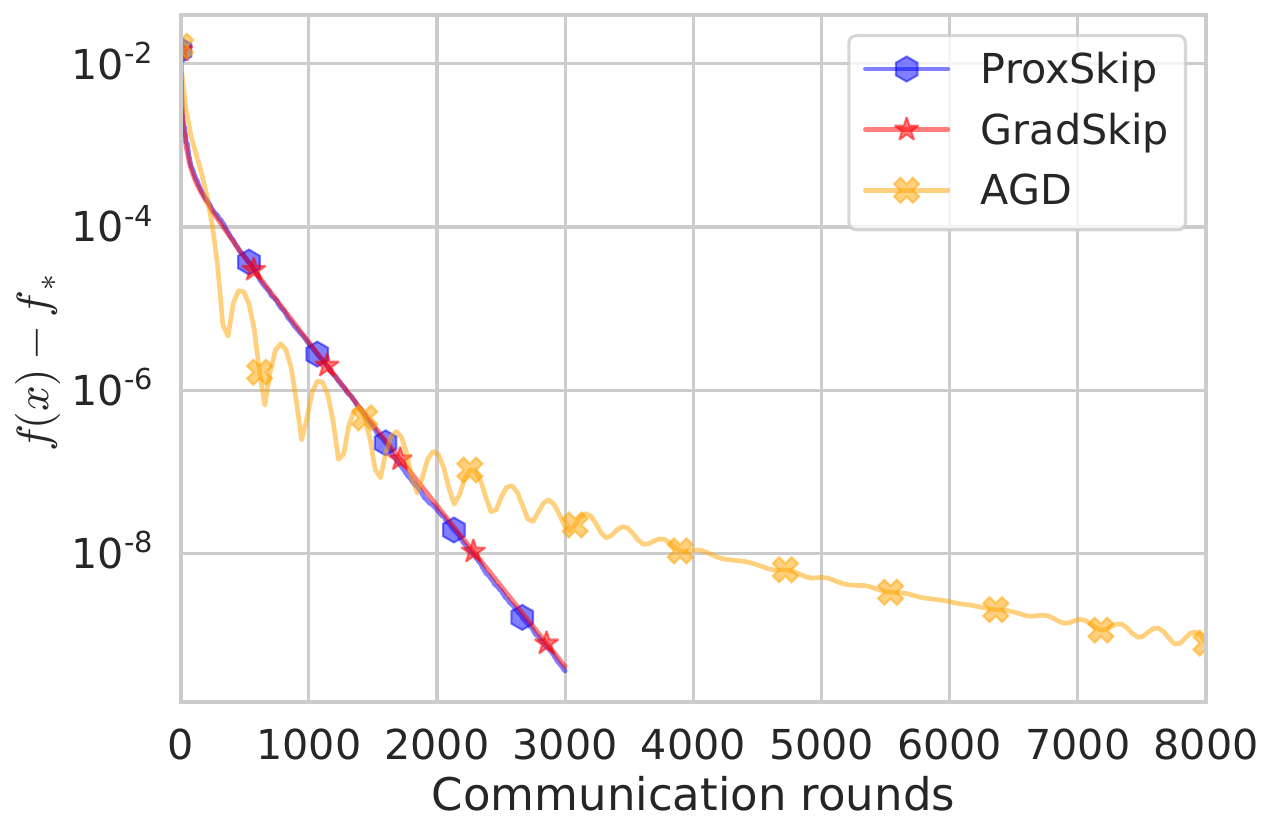}&
      $\!\!$\includegraphics[width=0.23\textwidth]{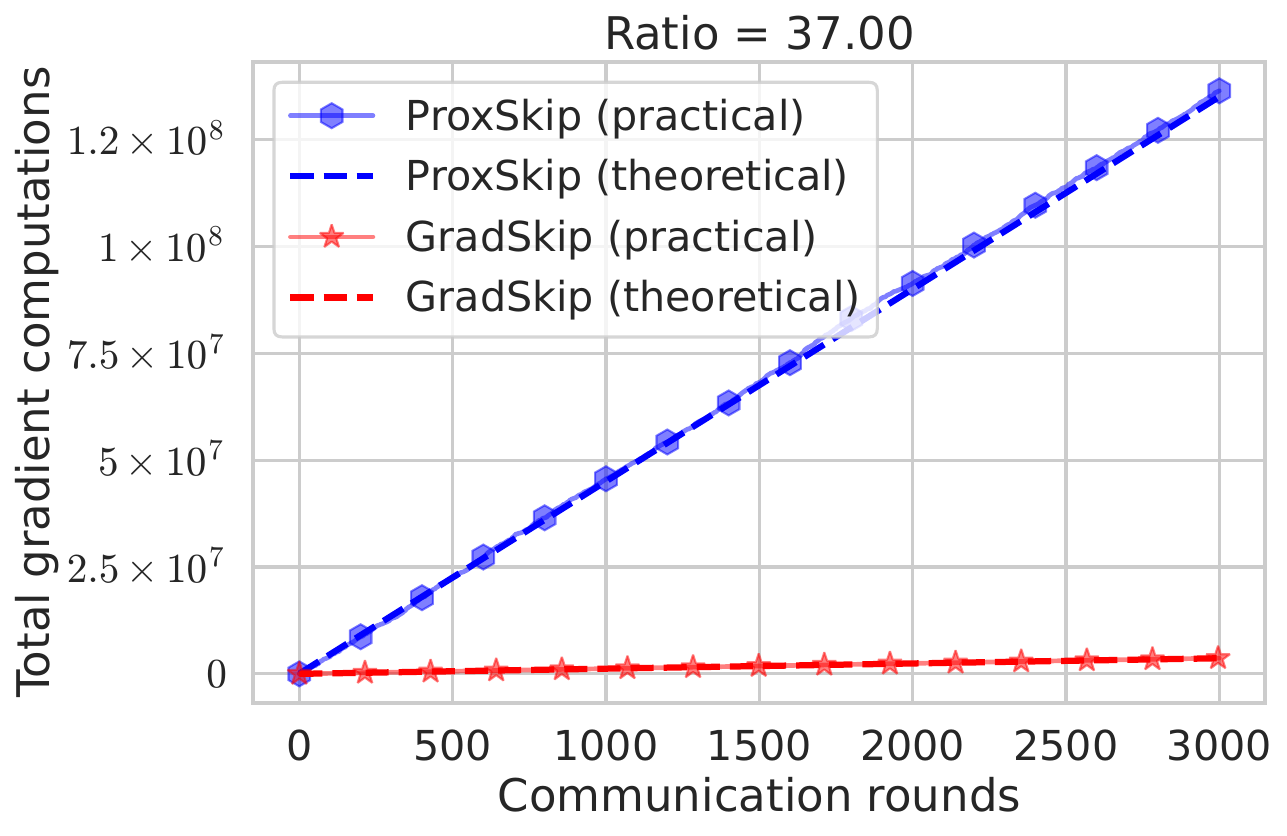}&
      $\!\!$\includegraphics[width=0.23\textwidth]{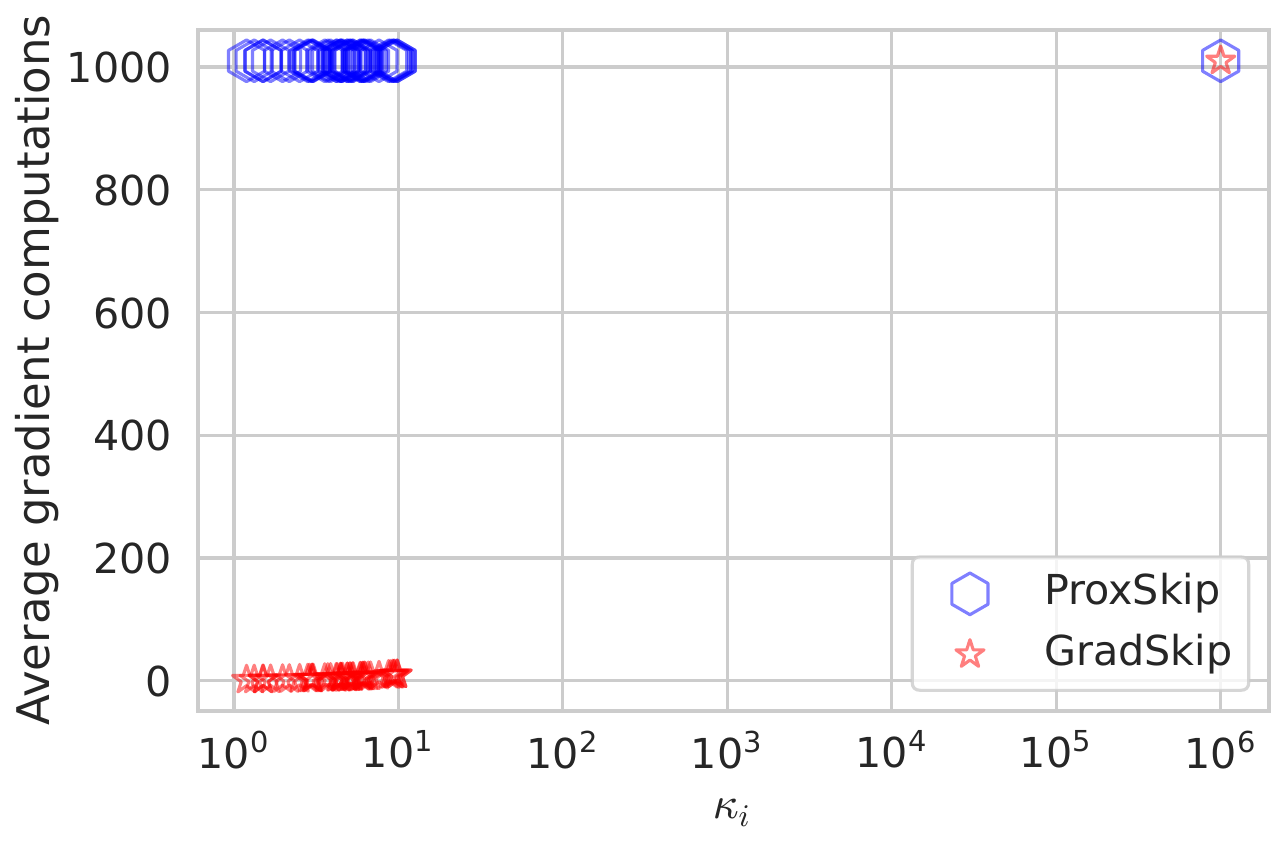}
      \\
  \end{tabular}
  \caption{The columns in this figure represent the same as those in Figure \ref{fig:L_max}.
  }
  \label{fig:n}
\end{figure*} 

To test the performance of \algname{GradSkip} and illustrate theoretical results, we use the classical logistic regression problem.

The loss function for this model has the following form:
\begin{align*}
    f(x)= \frac{1}{n} \sum\limits_{i=1}^{n}\frac{1}{m} \sum\limits_{j=1}^{m} \log \left(1+\exp \left(-b_{ij} a_{ij}^{\top} x\right)\right)+\frac{\lambda}{2}\|x\|^{2},
\end{align*}
where $n$ is the number of clients, $m$ is the number of data points per worker, $a_{ij} \in \mathbb{R}^{d}$ and $b_{ij} \in\{-1,+1\}$ are the data samples, and $\lambda$ is the regularization parameter.

Experiments were conducted on artificially generated data and on the {\em ``australian"} dataset from LibSVM library \citep{chang2011libsvm} (see \Cref{apx:exp-aus}). 
All algorithms were run using their theoretically optimal hyperparameters (stepsize, probabilities).
We compare \algname{GradSkip} with \algname{ProxSkip} and \algname{AGD}, which have SOTA accelerated communication complexity.  
Comparisons between \algname{VR-GradSkip+} and \algname{ProxSkip-VR} were omitted, as their computational complexity difference is similar to that of \algname{GradSkip} and \algname{ProxSkip}.

For \algname{GradSkip}, the expected local gradient computations per communication round are at most $\sum_{i=1}^{n}\min\left(\kappa_i, \sqrt{\kappa_{\max}}\right)$ (see \eqref{less-local-work}), while for \algname{ProxSkip}, it is $n\skm$. 
Therefore, the gradient computation ratio of \algname{ProxSkip} over \algname{GradSkip} depends on the number of devices with $\kappa_i\ge\skm$. 
With $k\le n$ such devices, this ratio for \algname{ProxSkip} over \algname{GradSkip} converges to $\nicefrac{n}{k}\ge1$ as $\km \to \infty$. 

In our experiments, only one device has an ill-conditioned local problem ($k=1$). To showcase this convergence, we generate data to control the smoothness constants and set the regularization parameter $\lambda = 10^{-1} = \mu$. We run \algname{GradSkip} and \algname{ProxSkip} algorithms for 3000 communication rounds.
\Cref{fig:L_max} features $n=20$ devices, one with a large $L_i=L_{max}$, and others with $L_i\sim\textrm{Uniform}(0.1,1)$. The second column illustrates similar convergence for \algname{GradSkip} and \algname{ProxSkip}. As we increment $L_{\max}$ row by row, the ratio converges to $n=20$, while \algname{AGD}'s performance drops with increasing data heterogeneity.
\Cref{fig:n} illustrates the growing ratio with more clients $n$, assigning one device $L_i=L_{max}=10^5$ and others $L_i\sim\textrm{Uniform}(0.1,1)$, showing the increase in $n$ row by row.

\subsection{Experiment on the {\em ``australian"} Dataset}
\label{apx:exp-aus}

In line with our experiments on synthetic data (section \ref{seq_experiments}), we conduct a parallel experiment using the {\em ``australian"} dataset from the LibSVM library \citep{chang2011libsvm}. 
This involves applying the \algname{GradSkip} and \algname{ProxSkip} algorithms to the logistic regression problem, characterized by the same loss function used previously:
\begin{align*}
f(x)= \frac{1}{n} \sum_{i=1}^{n}\frac{1}{m} \sum_{j=1}^{m} \log \left(1+\exp \left(-b_{ij} a_{ij}^{\top} x\right)\right)+\frac{\lambda}{2}|x|^{2}.
\end{align*}

We set the regularization parameter $\lambda = 10^{-4}L_{\max}$. 
We split the dataset equally into $n=20$ devices. 
In this case we get $k=8$ devices with ill-conditioned local problems, so the gradient computation ratio of \algname{ProxSkip} over \algname{GradSkip} should be close to $\nicefrac{n}{k}=2.5$. 
It can be seen in Figure \ref{fig:australian}.

\begin{figure*}[h]
    \centering
    \begin{tabular}{cccc}
        \includegraphics[width=0.24\textwidth]{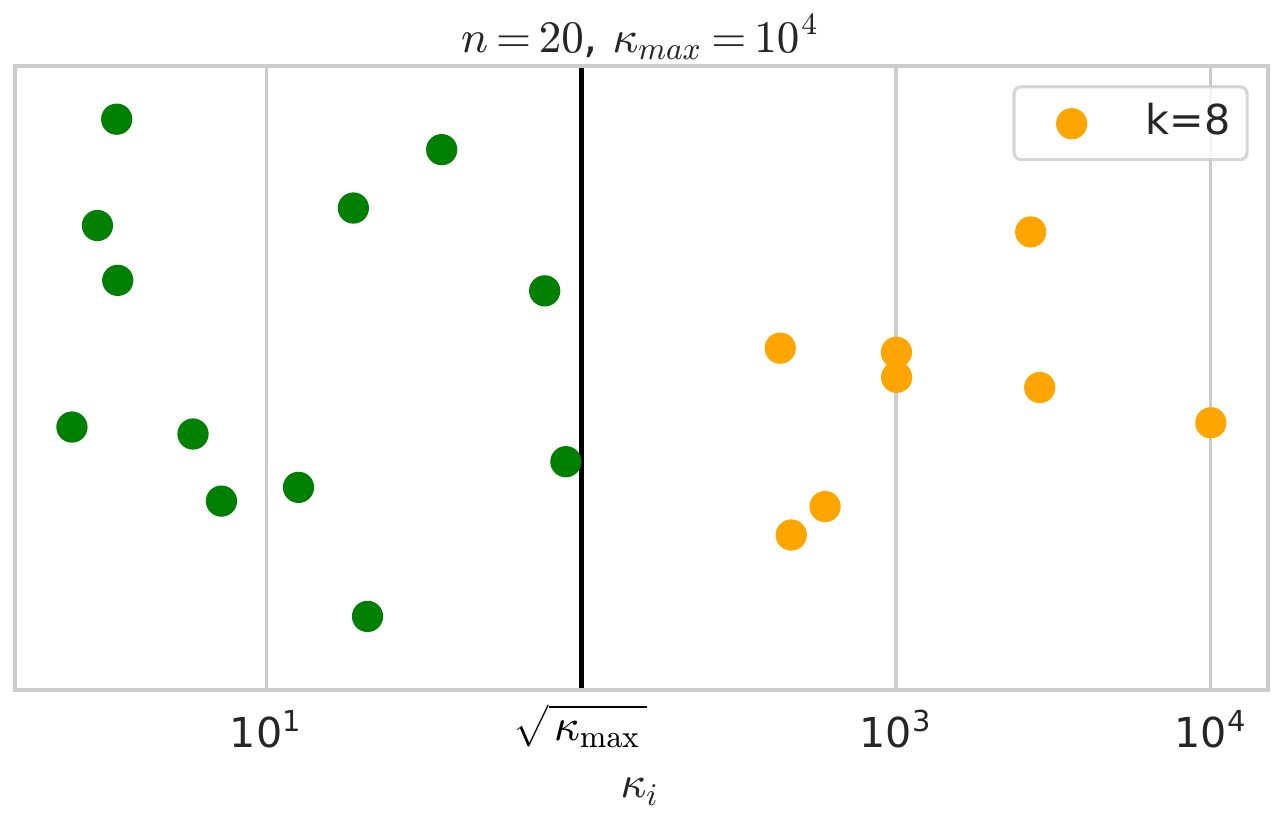}
        \includegraphics[width=0.24\textwidth]{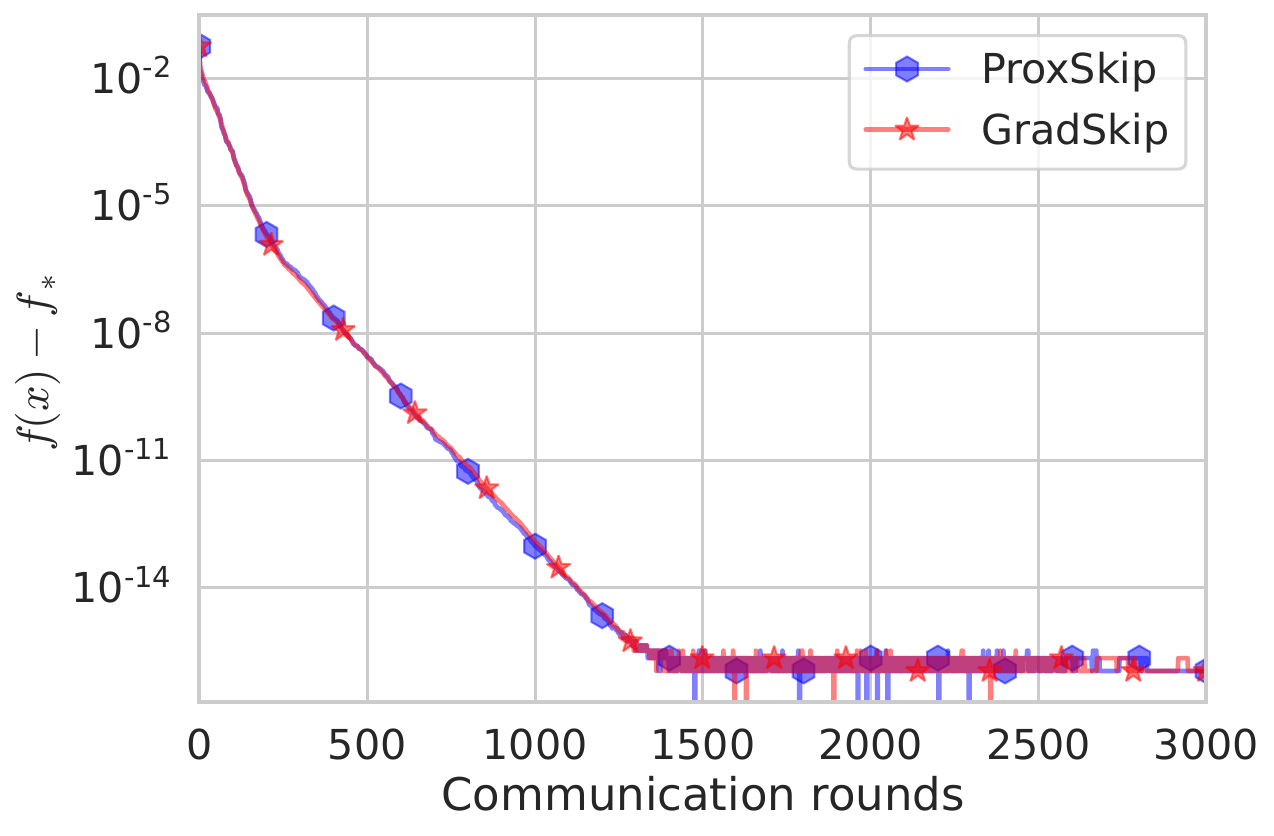}
         \includegraphics[width=0.24\textwidth]{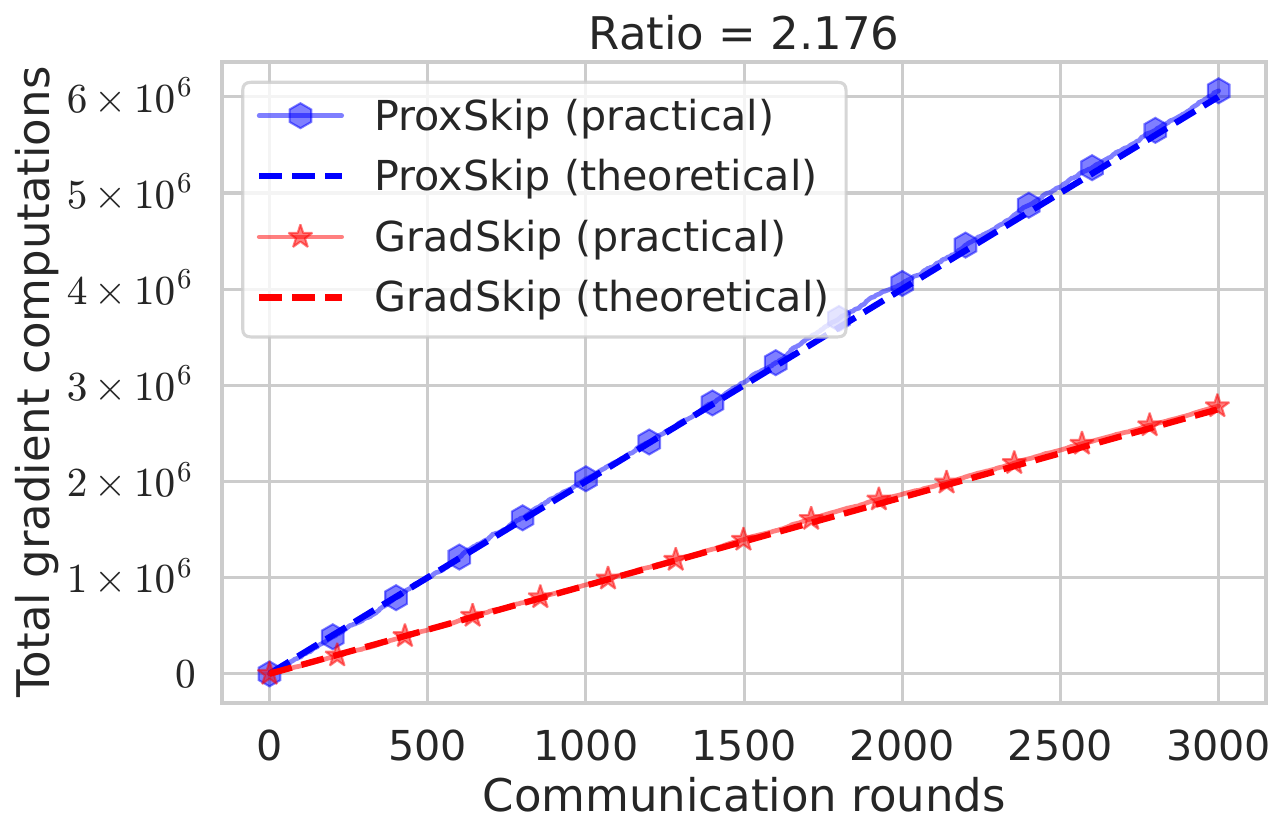}
          \includegraphics[width=0.24\textwidth]{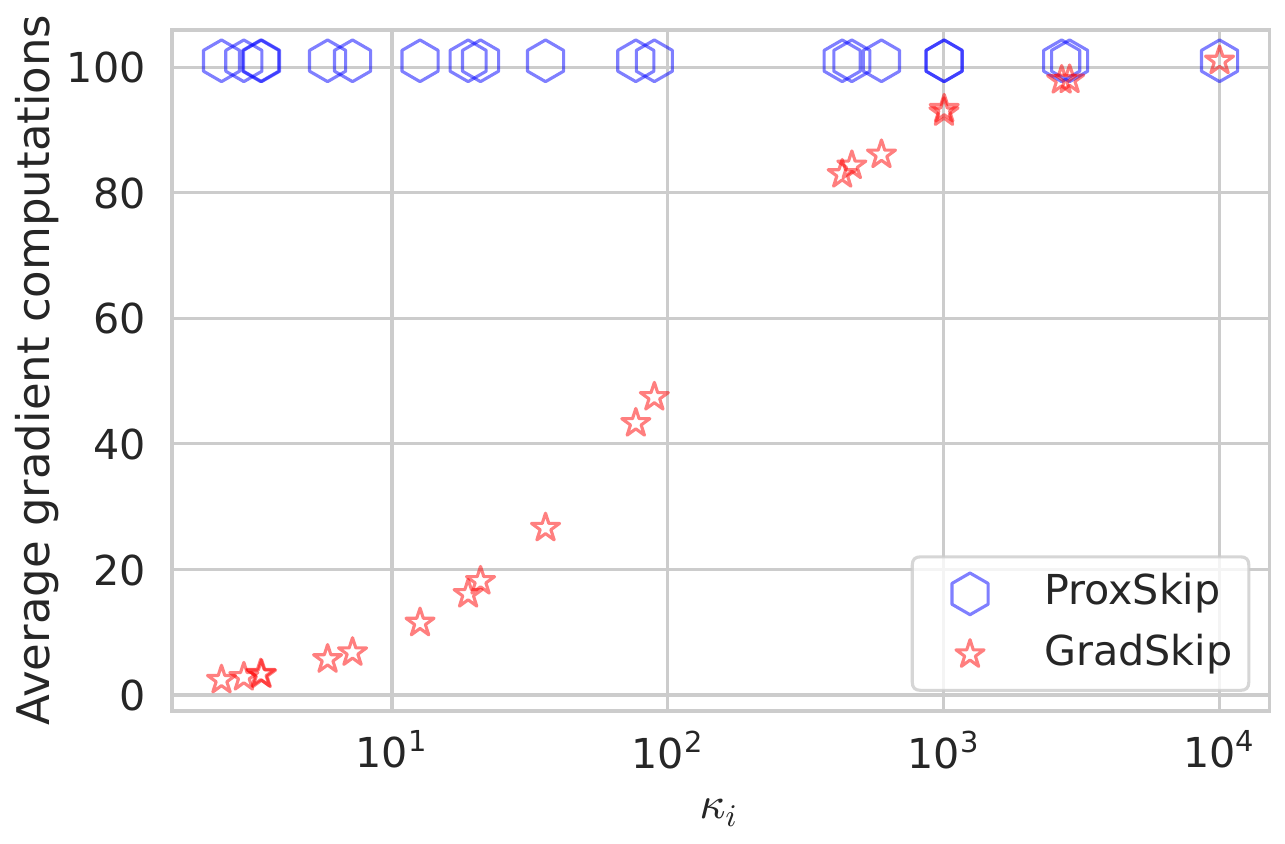}
    \end{tabular}
    \caption{The plots have the same meaning as in Figure \ref{fig:L_max}.
    }
    \label{fig:australian}
  \end{figure*}

\subsection{System Heterogeneity Case}\label{seq_time}
Let $T_i$ represent the time required for client $i$ to complete one local step.
We consider $T_i$ to be a random variable with the structure $T_i = \tau_i + \eta_i$. 
Here, $\tau_i$ is a scalar representing the minimum time to finish one local step on machine $i$, and $\eta_i$ is a random jitter (time delay) assumed to have an exponential distribution with scale parameter $\beta_i$. 
Practically, the distribution of $T_i$ can be estimated.

Our objective is to determine values for $q_i$ that minimize the wall training time (excluding communication time) in \algname{GradSkip}. 
The average expected time for local training before communication on client $i$ is:
$$
  \frac{\E{T_i}}{1-q_i(1-p)}, 
$$
given that, on average, device $i$ performs 
$$
  \frac{1}{1-q_i(1-p)}
$$
local steps (see Lemma \ref{lem:exp-local-steps}). 
To reduce waiting time, we initiate by setting $q_i=1$ for the fastest clients. 
For other clients, we set $q_i$ to make the average local training time before communication match with the fastest device. 
This condition can be mathematically expressed as: 
$$
  \frac{\E{T_i}}{1-q_i(1-p)} = \frac{\E{T_{min}}}{p}, 
$$
yielding the value of $q_i$ as 
$$
  q_i = \max \left \{ \frac{1-p\frac{\E{T_i}}{\E{T_{min}}}}{1-p}, 0 \right \}. 
$$
To assess the effectiveness of our $q_i$ selection strategy in \algname{GradSkip} compared to \algname{ProxSkip}, we ran experiments with two types of delay distributions for $\tau_i$: uniform and exponential.
In the first case, $\tau_i \sim \mathrm{Uniform}(0,1)$, and in the second, $\tau_i \sim \mathrm{Exponential}(1)$.
To introduce variability in communication delays, we also added noise $\eta_i \sim \mathrm{Exponential}(\beta_i)$ with $\beta_i \sim \mathrm{Uniform}(0,1)$.

The results, shown in \Cref{fig_time}, demonstrate that \algname{GradSkip}, with adaptively chosen $q_i$, achieves better performance than \algname{ProxSkip} with a fixed $q$ under both delay models.

\setcounter{footnote}{0}
\begin{figure}[h]
  \begin{center}
        \includegraphics[width=0.93\columnwidth]{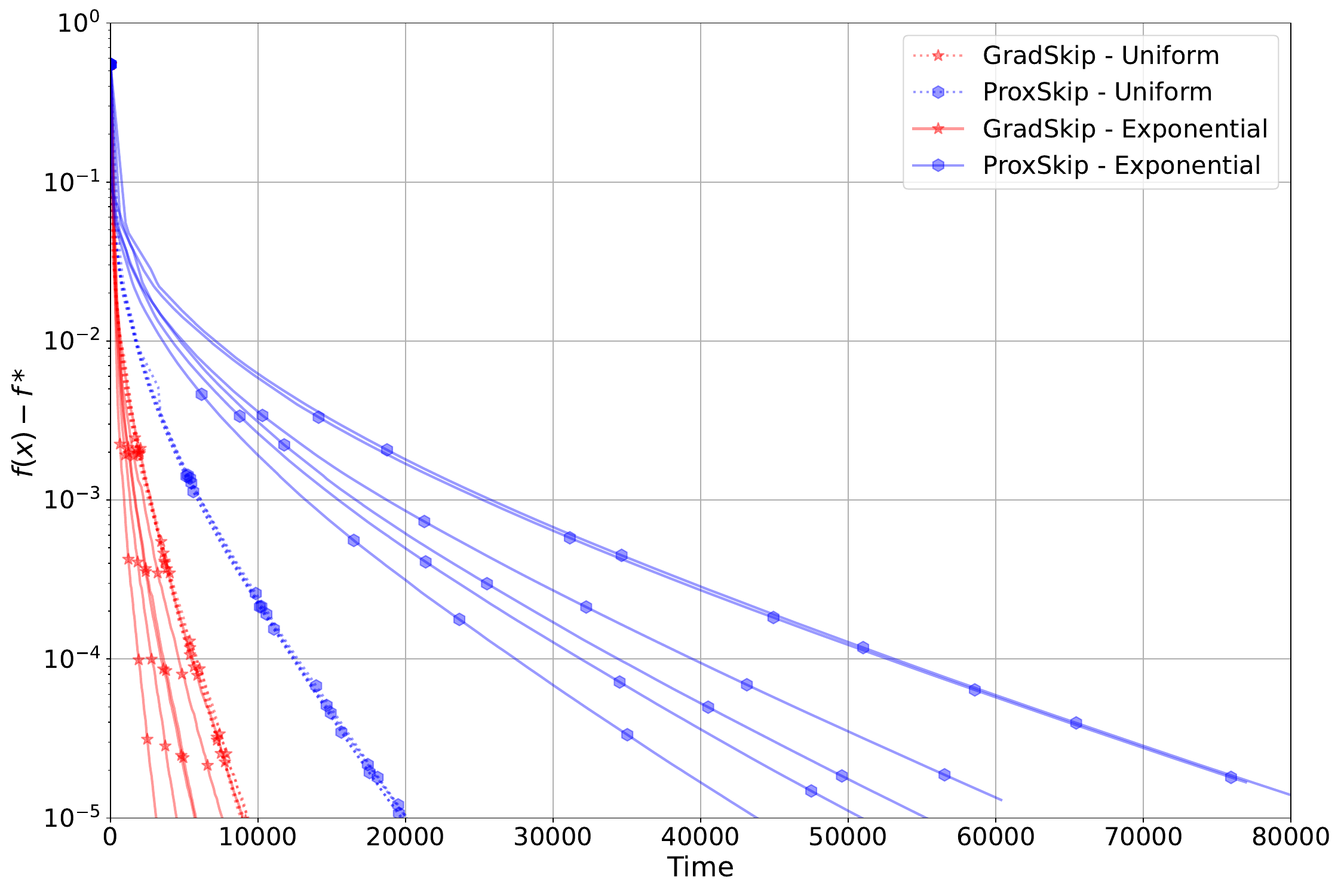}
      \end{center}
    \caption{We used the {\em ``w6a"} dataset from the LibSVM library \citep{chang2011libsvm}, which has $d=300$ features. 
    The number of clients is 153.\protect\footnotemark
    }
    \label{fig_time}
\end{figure}
\footnotetext{
  There is no particular reason for this choice, other than that $153$ is a ``nice'' number: $153 = 1! +2! +3! + 4! + 5! = 1^3 + 5^3 + 3^3$.
}

\subsubsection*{Acknowledgments}

  The research reported in this publication was supported by funding from King Abdullah University of Science and Technology (KAUST): i) KAUST Baseline Research Scheme, ii) Center of Excellence for Generative AI, under award number 5940, iii) SDAIA-KAUST Center of Excellence in Artificial Intelligence and Data Science.

\bibliography{bib}

\begin{thebibliography}{55}
\providecommand{\natexlab}[1]{#1}
\providecommand{\url}[1]{\texttt{#1}}
\expandafter\ifx\csname urlstyle\endcsname\relax
  \providecommand{\doi}[1]{doi: #1}\else
  \providecommand{\doi}{doi: \begingroup \urlstyle{rm}\Url}\fi

\bibitem[Alistarh et~al.(2017)Alistarh, Grubic, Li, Tomioka, and
  Vojnovic]{qsgd}
Dan Alistarh, Demjan Grubic, Jerry Li, Ryota Tomioka, and Milan Vojnovic.
\newblock {QSGD}: Communication-efficient {SGD} via gradient quantization and
  encoding.
\newblock In \emph{Advances in Neural Information Processing Systems}, pp.\
  1709--1720, 2017.

\bibitem[Alistarh et~al.(2018)Alistarh, Hoefler, Johansson, Konstantinov,
  Khirirat, and Renggli]{Alistarh2018:topk}
Dan Alistarh, Torsten Hoefler, Mikael Johansson, Nikola Konstantinov, Sarit
  Khirirat, and Cedric Renggli.
\newblock The convergence of sparsified gradient methods.
\newblock In S.~Bengio, H.~Wallach, H.~Larochelle, K.~Grauman, N.~Cesa-Bianchi,
  and R.~Garnett (eds.), \emph{Advances in Neural Information Processing
  Systems 31}, pp.\  5977--5987. Curran Associates, Inc., 2018.

\bibitem[Beutel et~al.(2022)Beutel, Topal, Mathur, Qiu, Fernandez-Marques, Gao,
  Sani, Li, Parcollet, de~Gusmão, and Lane]{beutel2022flower}
Daniel~J. Beutel, Taner Topal, Akhil Mathur, Xinchi Qiu, Javier
  Fernandez-Marques, Yan Gao, Lorenzo Sani, Kwing~Hei Li, Titouan Parcollet,
  Pedro Porto~Buarque de~Gusmão, and Nicholas~D. Lane.
\newblock Flower: A friendly federated learning research framework, 2022.

\bibitem[Chang \& Lin(2011)Chang and Lin]{chang2011libsvm}
Chih-Chung Chang and Chih-Jen Lin.
\newblock {LIBSVM}: A library for support vector machines.
\newblock \emph{ACM Transactions on Intelligent Systems and Technology},
  2:\penalty0 27:1--27:27, 2011.
\newblock Software available at \url{http://www.csie.ntu.edu.tw/~cjlin/libsvm}.

\bibitem[Collins et~al.(2022)Collins, Hassani, Mokhtari, and
  Shakkottai]{Collins2022FineTuning}
Liam Collins, Hamed Hassani, Aryan Mokhtari, and Sanjay Shakkottai.
\newblock {FedAvg with Fine Tuning: Local Updates Lead to Representation
  Learning}.
\newblock \emph{arXiv preprint arXiv:2205.13692}, 2022.

\bibitem[Condat \& Richt{\'a}rik(2022)Condat and Richt{\'a}rik]{RandProx}
Laurent Condat and Peter Richt{\'a}rik.
\newblock {RandProx: Primal-Dual Optimization Algorithms with Randomized
  Proximal Updates}.
\newblock In \emph{arXiv:2207.12891}, 2022.

\bibitem[Condat et~al.(2022)Condat, Yi, and Richt{\'a}rik]{Condat2022EF-BV}
Laurent Condat, Kai Yi, and Peter Richt{\'a}rik.
\newblock {EF-BV: A unified theory of error feedback and variance reduction
  mechanisms for biased and unbiased compression in distributed optimization}.
\newblock \emph{arXiv:2205.04180}, 2022.

\bibitem[Duchi et~al.(2011)Duchi, Hazan, and Singer]{DHS}
John Duchi, Elad Hazan, and Yoram Singer.
\newblock Adaptive subgradient methods for online learning and stochastic
  optimization.
\newblock In \emph{Journal of Machine Learning Research}, pp.\  2121–2159,
  2011.

\bibitem[Ghadimi \& Lan(2013)Ghadimi and Lan]{GhLa}
Saeed Ghadimi and Guanghui Lan.
\newblock Stochastic first-and zeroth-order methods for nonconvex stochastic
  programming.
\newblock In \emph{SIAM Journal on Optimization}, volume 23(4), pp.\
  2341–2368, 2013.

\bibitem[Gorbunov et~al.(2020{\natexlab{a}})Gorbunov, Hanzely, and
  Richt{\'a}rik]{Gorbunov2020sigma_k}
Eduard Gorbunov, Filip Hanzely, and Peter Richt{\'a}rik.
\newblock A unified theory of {SGD}: variance reduction, sampling, quantization
  and coordinate descent.
\newblock In \emph{The 23rd International Conference on Artificial Intelligence
  and Statistics (AISTATS)}, 2020{\natexlab{a}}.

\bibitem[Gorbunov et~al.(2020{\natexlab{b}})Gorbunov, Kovalev, Makarenko, and
  Richtarik]{gorbunov2020linearly}
Eduard Gorbunov, Dmitry Kovalev, Dmitry Makarenko, and Peter Richtarik.
\newblock Linearly converging error compensated sgd.
\newblock In H.~Larochelle, M.~Ranzato, R.~Hadsell, M.F. Balcan, and H.~Lin
  (eds.), \emph{Advances in Neural Information Processing Systems}, volume~33,
  pp.\  20889--20900. Curran Associates, Inc., 2020{\natexlab{b}}.
\newblock URL
  \url{https://proceedings.neurips.cc/paper/2020/file/ef9280fbc5317f17d480e4d4f61b3751-Paper.pdf}.

\bibitem[Gorbunov et~al.(2021)Gorbunov, Hanzely, and
  Richt{\'a}rik]{Gorbunov2021LocalSGD}
Eduard Gorbunov, Filip Hanzely, and Peter Richt{\'a}rik.
\newblock {Local SGD: unified theory and new efficient methods}.
\newblock \emph{The 24th International Conference on Artificial Intelligence
  and Statistics (AISTATS)}, 2021.

\bibitem[Gower et~al.(2019)Gower, Loizou, Qian, Sailanbayev, Shulgin, and
  Richt{\'a}rik]{gower2019sgd}
Robert~Mansel Gower, Nicolas Loizou, Xun Qian, Alibek Sailanbayev, Egor
  Shulgin, and Peter Richt{\'a}rik.
\newblock {SGD}: General analysis and improved rates.
\newblock In Kamalika Chaudhuri and Ruslan Salakhutdinov (eds.),
  \emph{Proceedings of the 36th International Conference on Machine Learning},
  volume~97 of \emph{Proceedings of Machine Learning Research}, pp.\
  5200--5209. PMLR, 09--15 Jun 2019.
\newblock URL \url{https://proceedings.mlr.press/v97/qian19b.html}.

\bibitem[Grudzień et~al.(2023)Grudzień, Malinovsky, and
  Richtárik]{Grudzien2023PP}
Michał Grudzień, Grigory Malinovsky, and Peter Richtárik.
\newblock {Can 5th generation local training methods support client sampling?
  Yes!}
\newblock \emph{25th International Conference on Artificial Intelligence and
  Statistics (AISTATS)}, 2023.

\bibitem[Haddadpour \& Mahdavi(2019)Haddadpour and Mahdavi]{Haddadpour2019}
F.~Haddadpour and M.~Mahdavi.
\newblock On the convergence of local descent methods in federated learning.
\newblock In \emph{arXiv preprint arXiv:1910.14425}, 2019.

\bibitem[Hanzely et~al.(2018)Hanzely, Mishchenko, and Richt{\'a}rik]{SEGA2019}
F.~Hanzely, K.~Mishchenko, and P.~Richt{\'a}rik.
\newblock {SEGA: variance reduction via gradient sketching}.
\newblock \emph{Advances in Neural Information Processing Systems 31, pages
  2082–2093}, 2018.

\bibitem[Hanzely \& Richtárik(2019{\natexlab{a}})Hanzely and
  Richtárik]{ACD-HanzRich}
Filip Hanzely and Peter Richtárik.
\newblock Accelerated coordinate descent with arbitrary sampling and best rates
  for minibatches.
\newblock In Kamalika Chaudhuri and Masashi Sugiyama (eds.), \emph{Proceedings
  of Machine Learning Research}, volume~89 of \emph{Proceedings of Machine
  Learning Research}, pp.\  304--312. PMLR, 16--18 Apr 2019{\natexlab{a}}.
\newblock URL \url{http://proceedings.mlr.press/v89/hanzely19a.html}.

\bibitem[Hanzely \& Richtárik(2019{\natexlab{b}})Hanzely and
  Richtárik]{GJS-HR}
Filip Hanzely and Peter Richtárik.
\newblock One method to rule them all: Variance reduction for data, parameters
  and many new methods.
\newblock preprint arXiv:1905.11266, 2019{\natexlab{b}}.

\bibitem[Hanzely \& Richtárik(2021)Hanzely and Richtárik]{Hanzely2021Mixture}
Filip Hanzely and Peter Richtárik.
\newblock {Federated Learning of a Mixture of Global and Local Models}.
\newblock \emph{arXiv preprint arXiv:2002.05516}, 2021.

\bibitem[Hanzely et~al.(2020)Hanzely, Hanzely, Horváth, and
  Richtárik]{hanzely2020lower}
Filip Hanzely, Slavomír Hanzely, Samuel Horváth, and Peter Richtárik.
\newblock Lower bounds and optimal algorithms for personalized federated
  learning, 2020.

\bibitem[He et~al.(2020)He, Li, So, Zeng, Zhang, Wang, Wang, Vepakomma, Singh,
  Qiu, Zhu, Wang, Shen, Zhao, Kang, Liu, Raskar, Yang, Annavaram, and
  Avestimehr]{he2020fedml}
Chaoyang He, Songze Li, Jinhyun So, Xiao Zeng, Mi~Zhang, Hongyi Wang, Xiaoyang
  Wang, Praneeth Vepakomma, Abhishek Singh, Hang Qiu, Xinghua Zhu, Jianzong
  Wang, Li~Shen, Peilin Zhao, Yan Kang, Yang Liu, Ramesh Raskar, Qiang Yang,
  Murali Annavaram, and Salman Avestimehr.
\newblock Fedml: A research library and benchmark for federated machine
  learning, 2020.

\bibitem[Horv{\'a}th et~al.(2022)Horv{\'a}th, Sanjabi, Xiao, Richt{\'a}rik, and
  Rabbat]{Horvath2022FedShuffle}
S.~Horv{\'a}th, M.~Sanjabi, L.~Xiao, P.~Richt{\'a}rik, and M.~Rabbat.
\newblock {FedShuffle: Recipes for better use of local work in federated
  learning}.
\newblock In \emph{arXiv preprint arXiv:2204.13169}, 2022.

\bibitem[Kairouz~et al(2019)]{FL-big}
Peter Kairouz~et al.
\newblock Advances and open problems in federated learning.
\newblock \emph{Foundations and Trends®in Machine Learning,
  14(1–2):1–210.}, 2019.

\bibitem[Karimireddy et~al.(2020)Karimireddy, Kale, Mohri, Reddi, Stich, and
  Suresh]{SCAFFOLD}
Sai~Praneeth Karimireddy, Satyen Kale, Mehryar Mohri, Sashank~J. Reddi,
  Sebastian~U. Stich, and Ananda~Theertha Suresh.
\newblock {SCAFFOLD}: {S}tochastic controlled averaging for on-device federated
  learning.
\newblock In \emph{International Conference on Machine Learning (ICML)}, 2020.

\bibitem[Khaled et~al.(2019{\natexlab{a}})Khaled, Mishchenko, and
  Richt{\'a}rik]{Khaled2019LocalGD}
A.~Khaled, K.~Mishchenko, and P.~Richt{\'a}rik.
\newblock First analysis of local gd on heterogeneous data.
\newblock In \emph{NeurIPS Workshop on Federated Learning for Data Privacy and
  Confidentiality, pages 1–11}, 2019{\natexlab{a}}.

\bibitem[Khaled et~al.(2019{\natexlab{b}})Khaled, Mishchenko, and
  Richt{\'a}rik]{Khaled2019LocalSGD}
A.~Khaled, K.~Mishchenko, and P.~Richt{\'a}rik.
\newblock Better communication complexity for local sgd.
\newblock In \emph{NeurIPS Workshop on Federated Learning for Data Privacy and
  Confidentiality, pages 1–11}, 2019{\natexlab{b}}.

\bibitem[Kingma \& Ba(2015)Kingma and Ba]{KiBa}
Diederik Kingma and Jimmy Ba.
\newblock Adam: A method for stochastic optimization.
\newblock In \emph{International Conference on Learning Representations}, 2015.

\bibitem[Li et~al.(2019{\natexlab{a}})Li, Sahu, Zaheer, Sanjabi, Talwalkar, ,
  and Smith]{Li2019}
T.~Li, A.~K. Sahu, M.~Zaheer, M.~Sanjabi, A.~Talwalkar, , and V.~Smith.
\newblock Federated optimization for heterogeneous networks.
\newblock In \emph{Proceedings of the 1st Adaptive Multitask Learning
  Workshop}, 2019{\natexlab{a}}.

\bibitem[Li et~al.(2019{\natexlab{b}})Li, Yang, Wang, and Zhang]{LiYang2019}
X.~Li, W.~Yang, S.~Wang, and Z.~Zhang.
\newblock {Communication-efficient local decentralized SGD methods}.
\newblock \emph{arXiv preprint arXiv:1910.09126}, 2019{\natexlab{b}}.

\bibitem[Li et~al.(2020)Li, Huang, Yang, Wang, and Zhang]{LiHuang2020}
X.~Li, K.~Huang, W.~Yang, S.~Wang, and Z.~Zhang.
\newblock {On the convergence of FedAvg on non-IID data}.
\newblock \emph{International Conference on Learning Representations}, 2020.

\bibitem[Luo et~al.(2021)Luo, Xiao, Wang, Huang, and Tassiulas]{LuoXiao2021}
Bing Luo, Wenli Xiao, Shiqiang Wang, Jianwei Huang, and Leandros Tassiulas.
\newblock {Tackling System and Statistical Heterogeneity for Federated Learning
  with Adaptive Client Sampling }.
\newblock In \emph{arXiv:2112.11256}, 2021.

\bibitem[Malinovsky et~al.(2022)Malinovsky, Yi, and Richt{\'a}rik]{VR-ProxSkip}
Grigory Malinovsky, Kai Yi, and Peter Richt{\'a}rik.
\newblock {Variance Reduced ProxSkip: Algorithm, Theory and Application to
  Federated Learning}.
\newblock In \emph{arXiv:2207.04338}, 2022.

\bibitem[Mangasarian(1995)]{Mangasarian1995}
O.~L. Mangasarian.
\newblock Parallel gradient distribution in unconstrained optimization.
\newblock In \emph{SIAM Journal on Control and Optimization,
  33(6):1916–1925}, 1995.

\bibitem[McMahan et~al.(2016)McMahan, Moore, Ramage, and Ag\"{u}era~y
  Arcas]{McMahan2016FL}
H~Brendan McMahan, Eider Moore, Daniel Ramage, and Blaise Ag\"{u}era~y Arcas.
\newblock Federated learning of deep networks using model averaging.
\newblock In \emph{arXiv preprint arXiv:1602.05629}, 2016.

\bibitem[McMahan et~al.(2017)McMahan, Moore, Ramage, Hampson, and Ag\"{u}era~y
  Arcas]{McMahan2017FL}
H~Brendan McMahan, Eider Moore, Daniel Ramage, Seth Hampson, and Blaise
  Ag\"{u}era~y Arcas.
\newblock Communication-efficient learning of deep networks from decentralized
  data.
\newblock In \emph{Proceedings of the 20th International Conference on
  Artificial Intelligence and Statistics (AISTATS)}, 2017.

\bibitem[Mishchenko et~al.(2020)Mishchenko, Hanzely, and Richt{\'a}rik]{99KFP}
Konstantin Mishchenko, Filip Hanzely, and Peter Richt{\'a}rik.
\newblock 99\% of worker-master communication in distributed optimization is
  not needed.
\newblock In Jonas Peters and David Sontag (eds.), \emph{Proceedings of the
  36th Conference on Uncertainty in Artificial Intelligence (UAI)}, volume 124
  of \emph{Proceedings of Machine Learning Research}, pp.\  979--988. PMLR,
  03--06 Aug 2020.

\bibitem[Mishchenko et~al.(2022)Mishchenko, Malinovsky, Stich, and
  Richt{\'a}rik]{Mishchenko2022ProxSkip}
Konstantin Mishchenko, Grigory Malinovsky, Sebastian Stich, and Peter
  Richt{\'a}rik.
\newblock {ProxSkip: Yes! Local gradient steps provably lead to communication
  acceleration! Finally!}
\newblock \emph{39th International Conference on Machine Learning (ICML)},
  2022.

\bibitem[Nesterov(2004)]{nesterovIntroductoryLecturesConvex2004}
Yurii Nesterov.
\newblock \emph{Introductory {{Lectures}} on {{Convex Optimization}}},
  volume~87 of \emph{Applied {{Optimization}}}.
\newblock {Springer US}, {Boston, MA}, 2004.
\newblock ISBN 978-1-4613-4691-3 978-1-4419-8853-9.
\newblock \doi{10.1007/978-1-4419-8853-9}.

\bibitem[Qian et~al.(2019)Qian, Richt\'arik, Gower, Sailanbayev, Loizou, and
  Shulgin]{QRGSLS}
Xun Qian, Peter Richt\'arik, Robert~Mansel Gower, Alibek Sailanbayev, Nicolas
  Loizou, and Egor Shulgin.
\newblock {SGD} with arbitrary sampling: General analysis and improved rates.
\newblock In \emph{International Conference on Machine Learning}, 2019.

\bibitem[Qu \& Richtárik(2016{\natexlab{a}})Qu and Richtárik]{QuRich}
Zheng Qu and Peter Richtárik.
\newblock Coordinate descent with arbitrary sampling ii: expected separable
  overapproximation.
\newblock \emph{Optimization Methods and Software}, 31:\penalty0 858--884,
  2016{\natexlab{a}}.
\newblock \doi{10.1080/10556788.2016.1190361}.

\bibitem[Qu \& Richtárik(2016{\natexlab{b}})Qu and Richtárik]{QuRich16-1}
Zheng Qu and Peter Richtárik.
\newblock Coordinate descent with arbitrary sampling i: algorithms and
  complexity.
\newblock \emph{Optimization Methods and Software}, 31:\penalty0 829--857,
  2016{\natexlab{b}}.

\bibitem[Reisizadeh et~al.(2020)Reisizadeh, Tziotis, Hassani, Mokhtari, and
  Pedarsani]{Reisizadeh2020}
Amirhossein Reisizadeh, Isidoros Tziotis, Hamed Hassani, Aryan Mokhtari, and
  Ramtin Pedarsani.
\newblock {Straggler-Resilient Federated Learning: Leveraging the Interplay
  Between Statistical Accuracy and System Heterogeneity }.
\newblock In \emph{arXiv:2012.14453}, 2020.

\bibitem[Richtárik \& Takáč(2016)Richtárik and Takáč]{Nsync}
Peter Richtárik and Martin Takáč.
\newblock On optimal probabilities in stochastic coordinate descent methods.
\newblock \emph{Optim Lett}, 10:\penalty0 1233--1243, 2016.
\newblock \doi{https://doi.org/10.1007/s11590-015-0916-1}.

\bibitem[Ro et~al.(2021)Ro, Suresh, and Wu]{ro2021fedjax}
Jae~Hun Ro, Ananda~Theertha Suresh, and Ke~Wu.
\newblock Fedjax: Federated learning simulation with jax, 2021.

\bibitem[Sadiev et~al.(2022)Sadiev, Kovalev, and Richt{\'a}rik]{APDA}
Abdurakhmon Sadiev, Dmitry Kovalev, and Peter Richt{\'a}rik.
\newblock {Communication Acceleration of Local Gradient Methods via an
  Accelerated Primal-Dual Algorithm with Inexact Prox}.
\newblock In \emph{arXiv:2207.03957}, 2022.

\bibitem[Safaryan et~al.(2021)Safaryan, Hanzely, and
  Richt{\'a}rik]{safaryan2021smoothness}
Mher Safaryan, Filip Hanzely, and Peter Richt{\'a}rik.
\newblock Smoothness matrices beat smoothness constants: Better communication
  compression techniques for distributed optimization.
\newblock \emph{Advances in Neural Information Processing Systems (NeurIPS)},
  2021.

\bibitem[Schmidt et~al.(2017)Schmidt, Roux, and Bach]{SRB}
Mark Schmidt, Nicolas~Le Roux, and Francis Bach.
\newblock Minimizing finite sums with the stochastic average gradient.
\newblock In \emph{Mathematical Programming}, volume 162(1-2), pp.\  83–112,
  2017.

\bibitem[Sun et~al.(2019)Sun, Chen, Giannakis, and Yang]{Sun2019LAG}
J.~Sun, T.~Chen, G.~Giannakis, and Z.~Yang.
\newblock {Communication-efficient distributed learning via lazily aggregated
  quantized gradients}.
\newblock \emph{Advances in Neural Information Processing Systems,
  32:3370–3380}, 2019.

\bibitem[Vogels et~al.(2019)Vogels, Karimireddy, and Jaggi]{vogels}
Thijs Vogels, Sai~Praneeth Karimireddy, and Martin Jaggi.
\newblock Power{SGD}: Practical low-rank gradient compression for distributed
  optimization.
\newblock In \emph{Advances in Neural Information Processing Systems 32
  (NeurIPS)}, 2019.

\bibitem[Wang et~al.(2022)Wang, Safaryan, and Richt{\'a}rik]{Wang2021Quantize}
B.~Wang, M.~Safaryan, and P.~Richt{\'a}rik.
\newblock {Theoretically Better and Numerically Faster Distributed Optimization
  with Smoothness-Aware Quantization Techniques}.
\newblock In \emph{Advances in Neural Information Processing Systems
  (NeurIPS)}, 2022.

\bibitem[Wang et~al.(2018)Wang, Sievert, Liu, Charles, Papailiopoulos, and
  Wright]{WSLCPW}
Hongyi Wang, Scott Sievert, Shengchao Liu, Zachary Charles, Dimitris
  Papailiopoulos, and Stephen Wright.
\newblock Atomo: Communication-efficient learning via atomic sparsification.
\newblock In \emph{Advances in Neural Information Processing Systems}, 2018.

\bibitem[Wang \& Joshi(2019)Wang and Joshi]{WangJoshi2019}
Jianyu Wang and Gauri Joshi.
\newblock {Adaptive Communication Strategies to Achieve the Best Error-Runtime
  Trade-off in Local-Update SGD }.
\newblock In \emph{arXiv:1810.08313}, 2019.

\bibitem[Wang(2021)]{FL-small}
Jianyu Wang, et~al.
\newblock A field guide to federated optimization.
\newblock \emph{arXiv preprint arXiv:2107.06917}, 2021.

\bibitem[Yu et~al.(2019)Yu, Jin, and Yang]{YuJinYang2019}
H.~Yu, R.~Jin, and S.~Yang.
\newblock {On the linear speedup analysis of communication efficient momentum
  SGD for distributed non-convex optimization}.
\newblock \emph{International Conference on Machine Learning (ICML)}, 2019.

\bibitem[{Zeiler}(2012)]{Zei}
Matthew~D. {Zeiler}.
\newblock {ADADELTA: An Adaptive Learning Rate Method}.
\newblock In \emph{arXiv e-prints, arXiv:1212.5701}, 2012.

\end{thebibliography}
\bibliographystyle{tmlr}

\appendix
\newpage


\section{Limitations and Future Work}\label{sec:limit}

In this part, we outline some limitations and future research directions related to our work.

\begin{itemize}
\item Similar to the previous works \cite{VR-ProxSkip,Mishchenko2022ProxSkip} on local gradient methods with communication acceleration, our theory does not cover non-strongly convex or non-convex objective functions. So far, the communication acceleration property of local steps has been proven only for a strongly convex setup.
\item Another key component for designing efficient distributed and federated learning algorithms is partial device participation. This extension seems rather tricky, and we leave this as a future work. A recent work by \citet{Grudzien2023PP} considers client sampling.
\item Finally, one can combine the local gradient methods with communication compression techniques to achieve even better communication complexity. Moreover, our proposed gradient skipping approach can be decoupled to address computational complexity, too.
\end{itemize}

\section{Extension to Stochastic Gradients with Variance Reduction: VR-GradSkip+}\label{VR}

Recently developed \algname{ProxSkip-VR} method \citep{VR-ProxSkip} reduces computational complexity by allowing computationally cheaper stochastic gradient estimators instead of full batch gradients. This approach of reducing computational complexity is blind to statistical heterogeneity and is entirely orthogonal to our approach of reducing computational complexity in \algname{GradSkip}. It is natural to ask the following question.

\begin{quote}
{\em Is it possible to combine these two methods (\algname{ProxSkip-VR} and \algname{GradSkip}) to achieve even better computational complexity?} 
\end{quote}

We give an affirmative answer to the question by developing our most general \algname{VR-GradSkip+} method.

\subsection{Algorithm Description}

We get \algname{VR-GradSkip+} method from \algname{GradSkip+} by replacing the gradient $\nabla f(x_t)$ by an unbiased estimator 
$$
  g_t = \textrm{StochasticGradient} (x_t, f),
$$
see Algorithm \ref{alg:VR-GradSkip+}.

Our next assumption, initially introduced by \citet{Gorbunov2020sigma_k}, postulates several parametric inequalities characterizing the behavior and, ultimately, the quality of a gradient estimator.  Similar assumptions appeared later in~\citep{gorbunov2020linearly, Gorbunov2021LocalSGD}.

\begin{assumption}\label{sigma_t}
Let $\{x_t\}$ be the iterates produced by \algname{VR-GradSkip+}. We first assume unbiasedness of the stochastic gradients $g_t$ for all iterations $t\ge0$, i.e.,
\begin{equation*}
  \E{g_t \mid x_t} = \nabla f(x_t).
\end{equation*}
Next, we assume that for some non-negative constants $A, B, C, \tilde{A}, \tilde{B}, \tilde{C}$, with $\tilde{B}<1$, and non-negative sequence $\{\sigma_t\}_{t\ge0}$ the following inequalities hold for all $t\ge0$:
\begin{eqnarray*}
  \E{\|g_t - \nabla f(x_\star)\|^2_{\mL^{-1}} \mid x_t} &\le& 2 A D_f(x_t,x_\star) + B\sigma_t + C, \\
  \E{\sigma_{t+1} \mid x_t} &\le& 2 \tilde A D_f(x_t,x_\star) + \tilde B \sigma_t + \tilde C.
\end{eqnarray*}
\end{assumption}

Assumption~\ref{sigma_t} covers a very large collection of gradient estimators, including an infinite variety of subsampling/minibatch estimators, gradient sparsification and quantization estimators, and their combinations; see \citep{Gorbunov2020sigma_k} for examples. VR estimators are characterized by $C=\tilde{C}=0$; most non-VR estimators by $\tilde{A}=\tilde{B}=\tilde{C}=B=0$ and $C>0$~\citep{gower2019sgd}. 

\subsection{Convergence Theory}

\begin{algorithm*}[t]
    \caption{\algname{VR-GradSkip+}}
    \label{alg:VR-GradSkip+}
    \begin{algorithmic}[1]
        \STATE {\bf Parameters:} stepsize $\gamma > 0$, compressors $\cC_{\omega} \in \B^d(\omega)$ and $\cC_{\mOmega} \in \B^d(\mOmega)$.
        \STATE {\bf Input:} initial iterate $x_0\in \R^d$, initial control variate ${\red h_0} \in \R^d$, number of iterations $T\geq 1$. 
        \FOR{$t=0,1,\dotsc,T-1$}
        \STATE ${\color{blue} g_t} = \textrm{StochasticGradient}(x_t, f)$ \hfill $\diamond$ Construct an unbiased estimator of $\nabla f(x_t)$
        \STATE ${\red \hh_{t+1} } = {\color{blue} g_t} - (\Id + \mOmega)^{-1}\cC_{\mOmega}\left({\color{blue} g_t} - {\red h_t}\right)$ \hfill $\diamond$ Update the shift ${\red\hat h_{t}}$ via shifted compression
        \STATE $\hx_{t+1} = x_t - \gamma ({\color{blue} g_t} - {\red \hh_{t+1} })$ \hfill $\diamond$ Update the iterate ${\hat x_{t}}$ via shifted stochastic gradient step
        \STATE $\hat g_t = \frac{1}{\gamma(1+\omega)}\cC_{\omega}\left(\hx_{t+1} - \prox_{\gamma(1+\omega)\psi}\left(\hat x_{t+1} - \gamma(1+\omega){\red \hh_{t+1}} \right) \right)$ \hfill $\diamond$ Estimate the proximal gradient
        \STATE $x_{t+1} = \hx_{t+1} - \gamma \hat g_t$ \hfill $\diamond$ Update the main iterate ${x_{t}}$
        \STATE ${\red h_{t+1}} = {\red \hh_{t+1}} + \frac{1}{\gamma(1+\omega)}(x_{t+1} - \hat x_{t+1})$ \hfill $\diamond$ Update the main shift ${\red h_{t}}$
        \ENDFOR
    \end{algorithmic}
\end{algorithm*} 

Consider the Lyapunov function:
\begin{equation*}
\Psi_t \eqdef \|x_{t} - x_{\star}\|^2 + \gamma^2(1+\omega)^2\|h_{t} -h_{\star}\|^2 + \gamma^2 W \sigma_t,
\end{equation*}
where $h_{*} = \nabla f(x_*)$.

\begin{theorem}[Proof in \Cref{proof:VR-GradSkip+}]
  \label{thm:VR-GradSkip+}
Let Assumption \ref{asm:convex-smooth} hold, and let $g_t$ be a gradient estimator satisfying Assumption \ref{sigma_t}. 
Let $\cC_{\omega} \in \B^d(\omega)$ and $\cC_{\mOmega} \in \B^d(\mOmega)$ be  the compression operators.
If $B>0$, choose any 
$$
  W > \frac{\lambda_{\max}(\mL\mtOmega) B}{1-\tilde B} 
\quad\text{and}\quad 
\beta = 1 - \tilde B - \frac{\lambda_{\max}(\mL\mtOmega) B}{W} > 0.
$$ 
In case of $B=0$, set $W=0$ and $\beta = \tilde B$. 
If the stepsize 
$$
  \gamma \le \frac{1}{A \lambda_{\max}(\mL\mtOmega) + W \tilde A},
$$
then the iterates of \algname{VR-GradSkip+} (Algorithm \ref{alg:VR-GradSkip+}) satisfy
\begin{equation*}
  \E{\Psi_t} 
  \le \left(1 - \min\left\{\gamma\mu, \delta, \beta\right\}\right)^{t} \Psi_0 
    + \gamma^2 \frac{\lambda_{\max}(\mL\mtOmega) C + W \tilde C}{\min\left\{\gamma\mu, \delta, \beta\right\}},
\end{equation*}
where
\begin{equation}\label{eq:delta-Omega}
  \delta = 1 - \frac{1}{1+\lambda_{\min}(\mOmega)}\left(1-\frac{1}{(1+\omega)^2}\right), \quad
  \mtOmega = \Id + \omega(\omega+2)\mOmega(\Id + \mOmega)^{-1}.
\end{equation}
\end{theorem}

\subsection{Special Cases}

\begin{itemize}
  \item \algname{GradSkip+}. 
  Consider the case when stochastic gradients are full batch gradients, i.e., $g_t = \nabla f(x_t)$ for all $t\ge0$. 
  Then Algorithm \ref{alg:VR-GradSkip+} reduces to \algname{GradSkip+}.

  \item \algname{ProxSkip-VR}.
  To recover \algname{ProxSkip-VR} from \algname{VR-GradSkip+}, we need the same conditions we had for recovering \algname{ProxSkip} from \algname{GradSkip+}. 
  That is, let $\cC_{\mOmega}$ be the identity compressor (i.e., $\mOmega = \Id$) and $\cC_{\omega}$ be the Bernoulli compressor $\cC_p$ with parameter $p\in(0,1]$ (note that here $\omega = \nicefrac{1}{p} - 1$). 
  In this case, ${\red \hh_{t+1}} \equiv {\red h_t}$ and 
  $$
  x_{t+1} = 
  \begin{cases}
    \prox_{\frac{\gamma}{p}\psi}\left(\hat x_{t+1} - \frac{\gamma}{p}h_{t} \right), & \text{with probability } p,\\
    \hat x_{t+1}, & \text{with probability } 1-p.
  \end{cases}
  $$
  Thus, we recover the \algname{ProxSkip-VR} algorithm.
\end{itemize}

\subsection{Proof of Theorem \ref{thm:VR-GradSkip+}}
\label{proof:VR-GradSkip+}

Here, we start proving the convergence of Algorithm \ref{alg:VR-GradSkip+} by first proving some auxiliary lemmas.
Let
\begin{equation*}
w_{t} \eqdef x_{t} - \gamma g_t, \quad \text{and} \quad w_{\star} \eqdef x_{\star} - \gamma \nabla f(x_{\star}).
\end{equation*} 

\begin{lemma}[Proof in \Cref{proof:VR:lem1}]
  \label{VR:lem1}
If $\gamma >0$ and $\cC_{\omega} \in \B^d(\omega)$, $\cC_{\mOmega} \in \B^d(\mOmega)$, then
\begin{eqnarray*}
\Et{ \Psi_{t+1} - \gamma^2W\sigma_{t+1} \mid g_t }
&\le& \norms{w_{t} - w_{\star}} \\
    && + \; \left(1-\frac{1}{(1+\omega)^2}\right)\gamma^2(1+\omega)^2 \norms{g_t - h_\star}_{\Id - (\Id + \mOmega)^{-1}}\\
    && + \; \left(1-\frac{1}{(1+\omega)^2}\right)\gamma^2(1+\omega)^2\norms{h_t - h_\star}_{(\Id + \mOmega)^{-1}},
\end{eqnarray*}
where the expectation is with respect to the randomness from $\cC_{\omega}$ and $\cC_{\mOmega}$.
\end{lemma}

Next, we upper bound the first two terms.

\begin{lemma}[Proof in \Cref{proof:VR:lem2__}]\label{VR:lem2__}
Denote $\mtOmega = \Id + \omega(\omega+2)\mOmega(\Id + \mOmega)^{-1}$. 
Then
\begin{eqnarray*}
&&\Et{\left\|w_{t} - w_{\star}\right\|^2} + \left(1-\frac{1}{(1+\omega)^2}\right)(1+\omega)^2 \gamma^2\Et{\norms{g_t - h_\star}_{\Id - (\Id + \mOmega)^{-1}}} \\ 
&&\quad\le (1-\gamma\mu)\norms{x_{t} - x_\star} - 2\gamma \left(1 - \gamma A \lambda_{\max}(\mL\mtOmega)\right)D_f(x_{t},x_\star) + \gamma^2 \lambda_{\max}(\mL\mtOmega) B \sigma_t\\
&&\qquad +\; \gamma^2 \lambda_{\max}(\mL\mtOmega) C.
\end{eqnarray*}

\end{lemma}

We are ready to prove the theorem.

\begin{proof}[Proof of Theorem \ref{thm:VR-GradSkip+}]
The proof is a direct combination of the two lemmas.
\begin{eqnarray*}
\E{\Psi_{t+1}}
&\le& (1-\gamma\mu)\norms{x_{t} - x_\star} - 2\gamma \left(1 - \gamma A \lambda_{\max}(\mL\mtOmega)\right)D_f(x_{t},x_\star) \\
&& + \; \gamma^2 \lambda_{\max}(\mL\mtOmega) B \sigma_t + \gamma^2 \lambda_{\max}(\mL\mtOmega) C\\
&& + \; \left(1-\frac{1}{(1+\omega)^2}\right)\gamma^2(1+\omega)^2\norms{h_t - h_\star}_{(\Id + \mOmega)^{-1}} \\
&& + \; \gamma^2 W \left(2 \tilde A D_f(x_t,x_\star) + \tilde B \sigma_t + \tilde C\right) \\
&=& (1-\gamma\mu)\norms{x_{t} - x_\star} - 2\gamma \left(1 - \gamma (A \lambda_{\max}(\mL\mtOmega) + W\tilde A)\right)D_f(x_{t},x_\star) \\
&& + \; \frac{\omega(\omega+2)}{(1+\lambda_{\min}(\mOmega))(1+\omega)^2} \gamma^2(1+\omega)^2\norms{h_t - h_\star} \\
&& + \; \left(\frac{\lambda_{\max}(\mL\mtOmega) B}{W} + \tilde B\right)\gamma^2 W \sigma_t + \gamma^2 (\lambda_{\max}(\mL\mtOmega) C + W \tilde C).
\end{eqnarray*}

Next we choose the stepsize 
$$
\gamma \le \frac{1}{A \lambda_{\max}(\mL\mtOmega) + W\tilde A}
$$ 
so that the term with $D_f(x_t,x_\star)$ is non-negative and can be suppressed for further steps. 
Let 
$$\delta = 1 - \frac{\omega(\omega+2)}{(1+\lambda_{\min}(\mOmega))(\omega+1)^2} = 1 - \frac{1}{1+\lambda_{\min}(\mOmega)}\left(1-\frac{1}{(1+\omega)^2}\right)\in[0,1],$$
$$\beta = 1 - \tilde B - \frac{\lambda_{\max}(\mL\mtOmega) B}{W} > 0,$$ 
provided that $W > \frac{\lambda_{\max}(\mL\mtOmega) B}{1-\tilde B}$, and continue the above derivation
\begin{eqnarray*}
\E{\Psi_{t+1}}
&\le& \max\left\{1-\gamma\mu, 1-\delta, 1-\beta \right\} \Psi_t + \gamma^2 (\lambda_{\max}(\mL\mtOmega) C + W \tilde C) \\
&=& \left(1 - \min\left\{\gamma\mu, \delta, \beta\right\}\right) \Psi_t + \gamma^2 (\lambda_{\max}(\mL\mtOmega) C + W \tilde C) \\
&\le& \left(1 - \min\left\{\gamma\mu, \delta, \beta\right\}\right)^{t+1} \Psi_0 + \gamma^2 \frac{\lambda_{\max}(\mL\mtOmega) C + W \tilde C}{\min\left\{\gamma\mu, \delta, \beta\right\}}.
\end{eqnarray*}
\end{proof}

\subsection{Proof of Auxiliary Lemmas}

\subsubsection{Proof of Lemma \ref{VR:lem1}}
\label{proof:VR:lem1}

\begin{restate-lemma}{\ref{VR:lem1}}
If $\gamma >0$ and $\cC_{\omega} \in \B^d(\omega)$, $\cC_{\mOmega} \in \B^d(\mOmega)$, then
\begin{eqnarray*}
\Et{ \Psi_{t+1} - \gamma^2W\sigma_{t+1} \mid g_t }
&\le& \norms{w_{t} - w_{\star}} \\
    && + \; \left(1-\frac{1}{(1+\omega)^2}\right)\gamma^2(1+\omega)^2 \norms{g_t - h_\star}_{\Id - (\Id + \mOmega)^{-1}}\\
    && + \; \left(1-\frac{1}{(1+\omega)^2}\right)\gamma^2(1+\omega)^2\norms{h_t - h_\star}_{(\Id + \mOmega)^{-1}},
\end{eqnarray*}
where the expectation is with respect to the randomness from $\cC_{\omega}$ and $\cC_{\mOmega}$.
\end{restate-lemma}

\begin{proof}
  In order to simplify notation, let $P(\cdot)\eqdef\prox_{\gamma(1+\omega)\psi}(\cdot)$, and
  \begin{equation}\label{VR:x_and_y__}
  x\eqdef \hx_{t+1} - \gamma(1+\omega) \hh_{t+1}, \qquad y\eqdef x_{\star} - \gamma(1+\omega) h_{\star}.
  \end{equation}
  
  {\bf STEP 1 (Optimality conditions).} Using the first-order optimality conditions for $f+\psi$ and using $h_{\star}\eqdef \nabla f(x_{\star})$, we obtain the following fixed-point identity for $x_{\star}$:
      \begin{equation}\label{VR:P(y)__}
          x_{\star}
          = \prox_{\gamma(1+\omega) \psi}\left(x_{\star} - \gamma(1+\omega) h_{\star} \right) \overset{\eqref{VR:x_and_y__}}{=} P(y).
      \end{equation}
      
  {\bf STEP 2 (Recalling the steps of the method).}
  Recall that the vectors $x_{t+1}$ and $h_{t+1}$ are in Algorithm \ref{alg:VR-GradSkip+} updated as follows:
  \begin{equation} \label{VR:eq:step2a__} 
  x_{t+1} = \hx_{t+1} - \gamma \hat g_t = \hx_{t+1} - \frac{1}{1+\omega}\cC_{\omega}\left(\hx_{t+1} - P(x) \right),
  \end{equation}
  and
  \begin{equation}\label{VR:eq:step2b__}  
  h_{t+1} = \hh_{t+1} + \frac{1}{\gamma(1+\omega)}(x_{t+1} - \hat x_{t+1}) = \hh_{t+1} - \frac{1}{\gamma(1+\omega)^2}\cC_{\omega}\left(\hx_{t+1} - P(x)\right).
  \end{equation}
  
  {\bf STEP 3 (One-step expectation of the Lyapunov function).} The expected value of the Lyapunov function 
  \begin{equation}\label{VR:eq:Lyapunov-proof__}
  \Psi_t \eqdef \|x_{t} - x_{\star}\|^2 + \gamma^2(1+\omega)^2\|h_{t} - h_{\star}\|^2 + \gamma^2 W \sigma_t
  \end{equation}
  at time $t+1$, with respect to the randomness of $\cC_{\omega}$, is
  {\footnotesize
  \begin{eqnarray*}
  && \Et{\Psi_{t+1} - \gamma^2 W \sigma_{t+1} \mid \cC_{\mOmega}, g_t} \\
  &&\quad= \Et{\norms{\hx_{t+1} - \frac{1}{1+\omega}\cC_{\omega}\left(\hx_{t+1} - P(x) \right) - x_\star} \mid \cC_{\mOmega}, g_t} \\
      &&\qquad + \;\; \Et{\gamma^2(1+\omega)^2\norms{\hh_{t+1} - \frac{1}{\gamma(1+\omega)^2}\cC_{\omega}\left(\hx_{t+1} - P(x) \right) - h_\star} \mid \cC_{\mOmega}, g_t}\\
  &&\quad= \mathbb{E}_t\left[\norms{\hx_{t+1} -x_\star} - \frac{2}{1+\omega}\left\langle\cC_{\omega}\left(\hx_{t+1} - P(x)\right), \hx_{t+1} - x_\star\right\rangle\right.\\
  && \qquad\quad +\; \left.  \frac{1}{(1+\omega)^2}\norms{\cC_{\omega}\left(\hx_{t+1} - P(x)\right)} \mid \cC_{\mOmega}, g_t\right]\\
      &&\qquad + \; \mathbb{E}_t\left[\gamma^2(1+\omega)^2\norms{\hh_{t+1} -h_\star} - 2\gamma\left\langle\cC_{\omega}\left(\hx_{t+1} - P(x)\right), \hh_{t+1} - h_\star\right\rangle \right.\\
      &&\qquad\quad +\; \left. \frac{1}{(1+\omega)^2}\norms{\cC_{\omega}\left(\hx_{t+1} - P(x)\right)} \mid \cC_{\mOmega}, g_t\right]\\
  &&\quad \le \norms{\hx_{t+1} -x_\star} + \frac{2}{1+\omega}\left\langle P(x) - \hx_{t+1}, \hx_{t+1} - x_\star\right\rangle + \frac{1}{1+\omega}\norms{P(x
  ) - \hx_{t+1}}\\
      &&\qquad + \; \gamma^2(1+\omega)^2\norms{\hh_{t+1} -h_\star} + \frac{2}{1+\omega}\left\langle P(x) - \hx_{t+1}, \gamma(1+\omega) (\hh_{t+1} - h_\star)\right\rangle\\
      &&\qquad +\; \frac{1}{1+\omega}\norms{P(x
      ) - \hx_{t+1}}\\
  &&\quad= \norms{\hx_{t+1} -x_\star} + \frac{1}{1+\omega}\left(\norms{P(x) - x_\star} - \norms{\hx_{t+1} - x_\star} \right)\\
      &&\qquad + \; \gamma^2(1+\omega)^2\norms{\hh_{t+1} -h_\star}\\
      &&\qquad +\; \frac{1}{1+\omega}\left(\norms{P(x) - \hx_{t+1} + \gamma(1+\omega) (\hh_{t+1} - h_\star)} - \gamma^2(1+\omega)^2\norms{\hh_{t+1} -h_\star} \right) \\
  &&\quad= \left(1 - \frac{1}{1+\omega}\right)\left( \norms{\hx_{t+1} -x_\star} + \gamma^2(1+\omega)^2\norms{\hh_{t+1} -h_\star} \right)\\
      &&\qquad + \; \frac{1}{1+\omega}\left(\norms{P(x) - x_\star} + \norms{P(x) - \hx_{t+1} + \gamma(1+\omega) (\hh_{t+1} - h_\star)} \right) \\
  &&\quad= \left(1 - \frac{1}{1+\omega}\right)\left( \norms{\hx_{t+1} -x_\star} + \gamma^2(1+\omega)^2\norms{\hh_{t+1} -h_\star} \right) \\
  &&\qquad + \; \frac{1}{1+\omega}\left(\norms{P(x) - P(y)} + \norms{P(x) - x + y - P(y)} \right).
  \end{eqnarray*}
  }
  
  {\bf STEP 4 (Applying firm non-expansiveness).}
  Applying firm non-expansiveness of $\prox$ operator $P$, this leads to the inequality
  \begin{eqnarray*}
  &&\Et{\Psi_{t+1} - \gamma^2 W \sigma_{t+1}\mid \cC_{\mOmega}, g_t}\\
  &&\quad\le \left(1 - \frac{1}{1+\omega}\right)\left( \norms{\hx_{t+1} -x_\star} + \gamma^2(1+\omega)^2\norms{\hh_{t+1} -h_\star} \right)\\
      &&\qquad+ \;\; \frac{1}{1+\omega}\norms{x - y}\\
  &&\quad = \left(1 - \frac{1}{1+\omega}\right)\left( \norms{\hx_{t+1} -x_\star} + \gamma^2(1+\omega)^2\norms{\hh_{t+1} -h_\star} \right)\\
      &&\qquad + \;\; \frac{1}{1+\omega}\norms{\hx_{t+1} - \gamma(1+\omega) \hh_{t+1} - \left(x_{\star} - \gamma(1+\omega) h_{\star}\right) }\\
  &&\quad= \left(1 - \frac{1}{1+\omega}\right)\left( \norms{\hx_{t+1} -x_\star} + \gamma^2(1+\omega)^2\norms{\hh_{t+1} -h_\star} \right)\\
      &&\qquad + \;\; \frac{1}{1+\omega}\norms{\hx_{t+1} - x_{\star} - \gamma(1+\omega)\left( \hh_{t+1} - h_{\star}\right) }.
  \end{eqnarray*}
      
  {\bf STEP 5 (Simple algebra).}  
  Next, we expand the squared norm and collect the terms, obtaining
    \begin{eqnarray*}
      &&\Et{\Psi_{t+1} - \gamma^2 W \sigma_{t+1} \mid \cC_{\mOmega}, g_t}\\
      &&\quad\le \left(1 - \frac{1}{1+\omega}\right)\left( \norms{\hx_{t+1} -x_\star} + \gamma^2(1+\omega)^2\norms{\hh_{t+1} -h_\star} \right)\\
          && \qquad +\; \frac{1}{1+\omega}\norms{\hx_{t+1} - x_{\star}} - 2\gamma \langle\hat x_{t+1}-x_{\star}, \hh_{t+1}-h_{\star}\rangle +\gamma^2(1+\omega)\|\hh_{t+1}-h_{\star}\|^2 \\
      &&\quad=
      \|\hat x_{t+1} - x_{\star}\|^2 - 2\gamma \langle\hat x_{t+1}-x_{\star}, \hh_{t+1}-h_{\star}\rangle + \gamma^2(1+\omega)^2\|\hh_{t+1}-h_{\star}\|^2 \\
      &&\quad=
      \|\hat x_{t+1} - x_{\star} - \gamma  (\hh_{t+1}-h_{\star})\|^2 -\gamma^2\|\hh_{t+1}-h_{\star}\|^2  + \gamma^2(1+\omega)^2\|\hh_{t+1}-h_{\star}\|^2 \\
      &&\quad=
      \|\hat x_{t+1} - x_{\star} - \gamma  (\hh_{t+1}-h_{\star})\|^2 + \left(1-\frac{1}{(1+\omega)^2}\right)\gamma^2(1+\omega)^2\|\hh_{t+1}-h_{\star}\|^2.
    \end{eqnarray*}
  
  {\bf STEP 6 (Tower property).}  Applying the expectation with respect to the randomness of $\cC_{\mOmega}$ and using the tower property, we get
  \begin{eqnarray*}
    && \Et{\Psi_{t+1} - \gamma^2 W \sigma_{t+1} \mid g_t} \\
      &&\;= \Et{\norms{x_t - \gamma (g_t - \hat h_{t+1}) - x_{\star} - \gamma  (\hh_{t+1}-h_{\star})} \mid g_t}\\
          && \quad + \left(1-\frac{1}{(1+\omega)^2}\right)\gamma^2(1+\omega)^2\Et{\norms{g_t - (\Id + \mOmega)^{-1}\cC_{\mOmega}\left(g_t - h_t \right)-h_{\star}}\mid g_t}\\
      &&\;= \norms{x_t - \gamma g_t - (x_{\star} - \gamma  h_{\star})} \\
          && \quad + \left(1-\frac{1}{(1+\omega)^2}\right)\gamma^2(1+\omega)^2\Et{\norms{g_t - h_{\star} - (\Id + \mOmega)^{-1}\cC_{\mOmega}\left(g_t - h_t \right)}\mid g_t}\\
      &&\;\le \norms{w_{t} - w_{\star}} + \left(1-\frac{1}{(1+\omega)^2}\right)\gamma^2(1+\omega)^2 \norms{g_t - h_\star}\\
          && \quad + \left(1-\frac{1}{(1+\omega)^2}\right)\gamma^2(1+\omega)^2 \left(2\left< g_t - h_\star, h_t - g_t \right>_{(\Id + \mOmega)^{-1}} + \norms{g_t - h_t}_{(\Id + \mOmega)^{-1}}\right)\\
      &&\;= \norms{w_{t} - w_{\star}} + \left(1-\frac{1}{(1+\omega)^2}\right)\gamma^2(1+\omega_2)^2 \norms{g_t - h_\star}\\
          && \quad + \left(1-\frac{1}{(1+\omega)^2}\right)\gamma^2(1+\omega_2)^2 \left(\norms{h_t - h_\star}_{(\Id + \mOmega)^{-1}} - \norms{g_t - h_\star}_{(\Id + \mOmega)^{-1}}\right)\\
      &&\;= \norms{w_{t} - w_{\star}} + \left(1-\frac{1}{(1+\omega)^2}\right)\gamma^2(1+\omega)^2 \norms{g_t - h_\star}_{\Id - (\Id + \mOmega)^{-1}}\\
          && \quad + \left(1-\frac{1}{(1+\omega)^2}\right)\gamma^2(1+\omega)^2\norms{h_t - h_\star}_{(\Id + \mOmega)^{-1}}.
  \end{eqnarray*}
\end{proof}
  
\subsubsection{Proof of Lemma \ref{VR:lem2__}}
\label{proof:VR:lem2__}

\begin{restate-lemma}{\ref{VR:lem2__}}
  Denote $\mtOmega = \Id + \omega(\omega+2)\mOmega(\Id + \mOmega)^{-1}$. 
  Then
  \begin{eqnarray*}
  &&\Et{\left\|w_{t} - w_{\star}\right\|^2} + \left(1-\frac{1}{(1+\omega)^2}\right)(1+\omega)^2 \gamma^2\Et{\norms{g_t - h_\star}_{\Id - (\Id + \mOmega)^{-1}}} \\ 
  &&\quad\le (1-\gamma\mu)\norms{x_{t} - x_\star} - 2\gamma \left(1 - \gamma A \lambda_{\max}(\mL\mtOmega)\right)D_f(x_{t},x_\star) + \gamma^2 \lambda_{\max}(\mL\mtOmega) B \sigma_t\\
  &&\qquad +\; \gamma^2 \lambda_{\max}(\mL\mtOmega) C.
  \end{eqnarray*}
  
\end{restate-lemma}

\begin{proof}
  Expanding the first term and rearranging terms, we get
  \begin{eqnarray*}
  &&
  \Et{\left\|w_{t} - w_{\star}\right\|^2} + \left(1-\frac{1}{(1+\omega)^2}\right)(1+\omega)^2 \gamma^2\Et{\norms{g_t - h_\star}_{\Id - (\Id + \mOmega)^{-1}}}\\
  &&\quad =
  \Et{\left\|x_{t} - x_{\star} - \gamma \left(g_t - \nabla f(x_\star)\right)\right\|^2} + 
  \omega(\omega+2) \gamma^2\Et{\norms{g_t - \nabla f(x_\star)}_{\mOmega(\Id + \mOmega)^{-1}}}\\
  &&\quad =
  \norms{x_{t} - x_\star} - 2\gamma\left<x_{t} - x_\star, \nabla f(x_{t})- \nabla f(x_\star)\right>\\
      &&\qquad+\; \gamma^2 \Et{\norms{g_t- \nabla f(x_\star)}} + \omega(\omega+2) \gamma^2\Et{\norms{g_t - \nabla f(x_\star)}_{\mOmega(\Id + \mOmega)^{-1}}} \\
  &&\quad\le
  (1-\gamma\mu)\norms{x_{t} - x_\star} - 2\gamma D_f(x_{t},x_\star) + \gamma^2 \Et{\norms{g_t - \nabla f(x_\star)}_{\mtOmega}}\\
  &&\quad\le
  (1-\gamma\mu)\norms{x_{t} - x_\star} - 2\gamma D_f(x_{t},x_\star) + \gamma^2 \lambda_{\max}(\mL\mtOmega) \Et{\norms{g_t - \nabla f(x_\star)}_{\mL^{-1}}}\\
  &&\quad\le (1-\gamma\mu)\norms{x_{t} - x_\star} - 2\gamma D_f(x_{t},x_\star) + \gamma^2 \lambda_{\max}(\mL\mtOmega) \left(2A D_f(x_t,x_\star) + B\sigma_t + C\right)\\
  &&\quad=
  (1-\gamma\mu)\norms{x_{t} - x_\star} - 2\gamma \left(1 - \gamma A \lambda_{\max}(\mL\mtOmega)\right)D_f(x_{t},x_\star) + \gamma^2 \lambda_{\max}(\mL\mtOmega) B \sigma_t\\
  &&\qquad +\; \gamma^2 \lambda_{\max}(\mL\mtOmega) C.
  \end{eqnarray*}
\end{proof}

\section{Proofs for Section \ref{sec:gradskip} (GradSkip)}

\subsection{Proof of Lemma \ref{lem:fake-local-steps}}
\label{proof:lemma-fake-local-steps}

\begin{restate-lemma}{\ref{lem:fake-local-steps}}[Fake local steps]
Suppose that Algorithm \ref{alg:dist-proxskip+} does not communicate for $\tau\ge1$ consecutive iterates, i.e., $\theta_t=\theta_{t+1}=\dots=\theta_{t+\tau-1}=0$ for some fixed $t\ge0$. 
Besides, let for some client $i\in[n]$ we have $\eta_{i,t} = 0$. 
Then, regardless of the coin tosses $\{\eta_{i,t+j}\}_{j=1}^{\tau}$, client $i$ does fake local steps without any gradient computation in $\tau$ iterates. Formally, for all $j=1,2,\dots,\tau+1$, we have
\begin{equation}\label{fake-steps}
  \begin{aligned}
    & \hat x_{i,t+j} = x_{i,t+j} = x_{i,t}, \\
    & {\red \hh_{i,t+j}} = {\red h_{i,t+j}} = {\red h_{i,t}} = \nabla f_i(x_{i,t}).
    \end{aligned}
\end{equation}
\end{restate-lemma}

\begin{proof}
The proof is rather straightforward and follows by following the corresponding lines of the algorithm. 
Note that $\eta_{i,t}=\theta_t=0$ implies (see lines 6 and 7 in Algorithm \ref{alg:dist-proxskip+}) that
\begin{eqnarray}
&&\hat x_{i,t+1} = x_{i,t+1} = x_{i,t}, \label{fake-steps-model-1} \\
&&\hat h_{i,t+1} = h_{i,t+1} = h_{i,t} = \nabla f_i(x_{i,t}), \label{fake-steps-shift-1}
\end{eqnarray}
which proves \eqref{fake-steps} when $j=1$. 
Consider the two possible cases for $\eta_{i,t+1}$ coupled with $\theta_{t+1}=0$. 
If $\eta_{i,t+1} = 1$, then
\begin{eqnarray*}
\hat x_{i,t+2}
&=& x_{i,t+1} - \gamma(\nabla f_i(x_{i,t+1}) - h_{i,t+1}) \\
&\overset{\eqref{fake-steps-model-1}}{=}& x_{i,t+1} - \gamma(\nabla f_i(x_{i,t}) - h_{i,t+1}) \\
&\overset{\eqref{fake-steps-shift-1}}{=}& x_{i,t+1} \\
&\overset{\eqref{fake-steps-model-1}}{=}& x_{i,t},
\end{eqnarray*}
and
$$
\hat h_{i,t+2} = h_{i,t+1} \overset{\eqref{fake-steps-shift-1}}{=} h_{i,t} = \nabla f_i(x_{i,t}).
$$

In case of $\eta_{i,t+1}=0$, we have
$$
\hat x_{i,t+2} = x_{i,t+1} \overset{\eqref{fake-steps-model-1}}{=} x_{i,t}
$$
and
$$
\hat h_{i,t+2}
= \nabla f_i(x_{i,t+1})
\overset{\eqref{fake-steps-model-1}}{=} \nabla f_i(x_{i,t})
\overset{\eqref{fake-steps-model-1}}{=} h_{i,t}.
$$
Hence, in both cases, we get
\begin{eqnarray}
&&\hat x_{i,t+2} = x_{i,t+1} = x_{i,t}, \label{fake-steps-model-2} \\
&&\hat h_{i,t+2} = h_{i,t} = \nabla f_i(x_{i,t}). \label{fake-steps-shift-2}
\end{eqnarray}

It remains to combine \eqref{fake-steps-model-2}--\eqref{fake-steps-shift-2} with the condition that $\theta_{t+1}=0$, which implies 
$$
  x_{i,t+2} = \hat x_{i,t+2}, \quad h_{i,t+2} = \hat h_{i,t+2}.
$$ 
Thus, we proved \eqref{fake-steps} when $j=2$. 
The proof can be completed by applying induction on $j$.
\end{proof}

\subsection{Proof of Lemma \ref{lem:exp-local-steps}}
\label{proof:lemma-exp-local-steps}

\begin{restate-lemma}{\ref{lem:exp-local-steps}}
  [Expected number of local steps]
The expected number of local gradient computations in each communication round of \algname{GradSkip} is $\nicefrac{1}{(1-q_i(1-p))}$ for all clients $i\in[n]$.
\end{restate-lemma}

\begin{proof}
As mentioned in the text preceding the lemma, the proof follows from the fact that for two geometric random variables $\Theta\sim\textrm{Geo}(p)$ and $H\sim\textrm{Geo}(q)$, their minimum $\min\{\Theta, H\}$ is also a geometric random variable with parameter $1-(1-p)(1-q)$. 
To see this, consider the corresponding Bernoulli trials with success probability $p$ and $q$ for each geometric random variable. 
Notice that the probability that both trials fail is $(1-p)(1-q)$. 
Hence, $\min\{\Theta, H\}$ is the number of joint trials of the two Bernoulli variables until one of them succeeds with probability $1 - (1-p)(1-q)$. 
Therefore, $\min\{\Theta, H\}$ is also a geometric random variable with success probability $1 - (1-p)(1-q)$.
\end{proof}

\subsection{Proof of Theorem \ref{asymmetric:thm}}
\label{proof:thm-asymmetric}

\begin{restate-theorem}{\ref{asymmetric:thm}}
  Let \Cref{asm:main} hold.
  If the stepsize satisfies 
  $$
  \gamma \le \min_i\left\{\frac{1}{L_i}\frac{p^2}{1-q_i\left(1-p^2\right)}\right\}
  $$
  and probabilities are chosen so that $0<p,\, q_i \le 1$, then the iterates of \algname{GradSkip} (Algorithm \ref{alg:dist-proxskip+}) satisfy
  \begin{equation*}
      \E{\Psi_t}
      \le (1 - \rho)^t \Psi_0,
  \end{equation*}
  for all $t\ge1$ with $\rho \eqdef \min\left\{\gamma\mu,1 - q_{max}(1-p^2)\right\} > 0$.
\end{restate-theorem}

We use the following two auxiliary lemmas to prove the theorem.

Denote $\Et{{}\cdot{}} \eqdef \E{\ \cdot \mid x_{1,t},\cdots,x_{n,t}}$ the conditional expectation with respect to the randomness of all local models $x_{1,t},\cdots,x_{n,t}$ at $t^{th}$ iterate. 

\begin{lemma}[Proof in \Cref{proof:asymmetric-lem1}]
  \label{asymmetric:lem1}
If $\gamma >0$ and $0\le p,q_i \le 1$, then
\begin{eqnarray*}
\Et{ \Psi_{t+1} } &=& \sum_{i=1}^n \left[\|w_{i,t} - w_{i,\star}\|^2 + (1-q_i)\left(1-p^2\right)\frac{\gamma^2}{p^2}\|\nabla f(x_{i,t})-h_{i,\star}\|^2 \right. \\
&&\qquad\quad +\; \left. q_i\left(1-p^2\right)\frac{\gamma^2}{p^2}\|h_{i,t} - h_{i,\star}\|^2 \right],
\end{eqnarray*}
where the expectation is taken over $\theta_t$ and $\eta_{i,t}$ in \Cref{alg:dist-proxskip+}.
\end{lemma}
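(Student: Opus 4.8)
The plan is to compute $\Et{\Psi_{t+1}}$ by iterated expectation, exploiting that the coins $\theta_t$ and $\{\eta_{i,t}\}_{i\in[n]}$ are mutually independent: first I condition on the $\eta_{i,t}$'s, so that the auxiliary quantities $\hat x_{i,t+1}$ and $\hat h_{i,t+1}$ from lines~6--7 are frozen, and average over $\theta_t$; afterwards I average over the $\eta_{i,t}$'s. To lighten notation I set $a_i \eqdef \hat x_{i,t+1} - x_{\star}$ and $b_i \eqdef \hat h_{i,t+1} - h_{i,\star}$, with averages $\bar a \eqdef \frac1n\sum_i a_i$ and $\bar b \eqdef \frac1n\sum_i b_i$, and I record the optimality identity $\frac1n\sum_i h_{i,\star} = \nabla f(x_{\star}) = 0$, which is exactly what makes the averaging step consistent with the fixed point.

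First I average over $\theta_t$ (conditionally on the $\eta$'s). When $\theta_t=0$ (probability $1-p$), lines~11 and~13 give $x_{i,t+1}-x_{\star}=a_i$ and $h_{i,t+1}-h_{i,\star}=b_i$. When $\theta_t=1$ (probability $p$), line~9 enforces consensus, $x_{i,t+1}-x_{\star}=\bar a-\tfrac{\gamma}{p}\bar b$ for every $i$ (here I use $\frac1n\sum_i h_{i,\star}=0$), and line~13 gives $h_{i,t+1}-h_{i,\star}=(b_i-\bar b)-\tfrac{p}{\gamma}(a_i-\bar a)$. I then form the per-client Lyapunov contribution, expand the squared norms in the $\theta_t=1$ block, and use the variance identities $\sum_i\|u_i-\bar u\|^2=\sum_i\|u_i\|^2-n\|\bar u\|^2$ together with its bilinear analogue $\sum_i\langle a_i-\bar a,b_i-\bar b\rangle=\sum_i\langle a_i,b_i\rangle-n\langle\bar a,\bar b\rangle$. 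All terms carrying $\bar a$, $\bar b$, or $\langle\bar a,\bar b\rangle$ cancel, so the $\theta_t=1$ block collapses to $\sum_i\|a_i-\tfrac{\gamma}{p}b_i\|^2$. Recombining the two branches, the coefficients of $\sum_i\|a_i\|^2$ and $\tfrac{\gamma^2}{p^2}\sum_i\|b_i\|^2$ add back up to $1$, leaving
$$\sum_{i}\mathbb{E}_{\theta_t}\!\left[\|x_{i,t+1}-x_{\star}\|^2 + \tfrac{\gamma^2}{p^2}\|h_{i,t+1}-h_{i,\star}\|^2\right] = \sum_{i}\left(\|a_i\|^2 + \tfrac{\gamma^2}{p^2}\|b_i\|^2 - 2\gamma\langle a_i,b_i\rangle\right).$$

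Next I remove the cross term using the shifted gradient step. From line~7, $a_i=v_i+\gamma b_i$ with $v_i\eqdef(x_{i,t}-x_{\star})-\gamma(\nabla f_i(x_{i,t})-h_{i,\star})=w_{i,t}-w_{i,\star}$, a quantity that is \emph{independent of} $\eta_{i,t}$. A direct expansion gives $\|a_i\|^2-2\gamma\langle a_i,b_i\rangle=\|v_i\|^2-\gamma^2\|b_i\|^2$, so the $\theta$-averaged expression simplifies to $\sum_i\big(\|w_{i,t}-w_{i,\star}\|^2+\tfrac{\gamma^2(1-p^2)}{p^2}\|b_i\|^2\big)$. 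Finally I average over $\eta_{i,t}$: since $w_{i,t}-w_{i,\star}$ does not depend on it, I only need $\mathbb{E}_{\eta_{i,t}}\!\left[\|b_i\|^2\right]=\mathbb{E}_{\eta_{i,t}}\!\left[\|\hat h_{i,t+1}-h_{i,\star}\|^2\right]$, and line~6 with $\Prob(\eta_{i,t}=1)=q_i$ yields $q_i\|h_{i,t}-h_{i,\star}\|^2+(1-q_i)\|\nabla f_i(x_{i,t})-h_{i,\star}\|^2$. Substituting produces exactly the claimed identity.

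I expect the main obstacle to be the cancellation inside the $\theta_t=1$ branch: one must carefully track the $\bar a$, $\bar b$, and $\langle\bar a,\bar b\rangle$ terms arising both from the $n\|\bar a-\tfrac{\gamma}{p}\bar b\|^2$ piece and from $\tfrac{\gamma^2}{p^2}\sum_i\|(b_i-\bar b)-\tfrac{p}{\gamma}(a_i-\bar a)\|^2$, and verify they annihilate. This step crucially relies on the consensus produced by the prox/averaging step and on $\frac1n\sum_i h_{i,\star}=0$. Everything after it---the substitution $a_i=v_i+\gamma b_i$ and the Bernoulli average over $\eta_{i,t}$---is routine.
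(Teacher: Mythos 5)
Your proof is correct and follows essentially the same route as the paper's: condition on the local coins $\eta_{i,t}$, average over $\theta_t$, collapse the consensus branch via the variance decomposition (the paper's identity $\sum_i\bigl(\|\bar x-\bar y\|^2+\|\bar x - x_i + y_i - \bar y\|^2\bigr)=\sum_i\|x_i-y_i\|^2$, which likewise relies on $\tfrac{1}{n}\sum_i h_{i,\star}=0$), and then eliminate the cross term before applying the tower property over $\eta_{i,t}$. The only cosmetic difference is that you substitute $a_i=v_i+\gamma b_i$ where the paper completes the square to $\|\hat x_{i,t+1}-x_\star-\gamma(\hat h_{i,t+1}-h_{i,\star})\|^2$; both hinge on the same key cancellation making that term independent of $\eta_{i,t}$.
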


Next, we upper bound the first two terms of the above equality by adjusting the stepsize.

\begin{lemma}[Proof in \Cref{proof:asymmetric-lem2}]
  \label{asymmetric:lem2}
If 
$$
  0< \gamma \leq \min_{i}\left\{\frac{1}{L_i}\frac{p^2}{1-q_i\left(1-p^2\right)}\right\},
$$
then
\begin{equation*}
  \|w_{i,t} - w_{i,\star}\|^2 + (1-q_i)\left(1-p^2\right)\frac{\gamma^2}{p^2}\|\nabla f(x_{i,t})-h_{i,\star}\|^2
  \le (1-\gamma\mu)\|x_{i,t} - x_{\star}\|^2.
\end{equation*}
\end{lemma}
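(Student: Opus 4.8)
The plan is to reduce the claimed estimate to a standard interpolation inequality for strongly convex and smooth functions, with the step-size bound entering only to control one coefficient. First I would unpack the shorthand inherited from the proof of Lemma~\ref{asymmetric:lem1}: we have $w_{i,t}-w_{i,\star} = (x_{i,t}-x_\star) - \gamma(\nabla f_i(x_{i,t}) - h_{i,\star})$ with $h_{i,\star} = \nabla f_i(x_\star)$. Setting $\delta_i \eqdef x_{i,t}-x_\star$ and $g_i \eqdef \nabla f_i(x_{i,t}) - \nabla f_i(x_\star)$, so that $\|w_{i,t}-w_{i,\star}\|^2 = \|\delta_i\|^2 - 2\gamma\langle\delta_i,g_i\rangle + \gamma^2\|g_i\|^2$ and $\|\nabla f_i(x_{i,t})-h_{i,\star}\|^2 = \|g_i\|^2$, the left-hand side becomes $\|\delta_i\|^2 - 2\gamma\langle\delta_i,g_i\rangle + \gamma^2\bigl(1 + (1-q_i)\tfrac{1-p^2}{p^2}\bigr)\|g_i\|^2$.

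Next I would simplify the coefficient of $\|g_i\|^2$. A direct computation gives $1 + (1-q_i)\tfrac{1-p^2}{p^2} = \tfrac{1-q_i(1-p^2)}{p^2}$, so the coefficient equals $\gamma^2\tfrac{1-q_i(1-p^2)}{p^2}$. The step-size assumption $\gamma \le \tfrac{1}{L_i}\tfrac{p^2}{1-q_i(1-p^2)}$ is precisely the statement that this coefficient is at most $\gamma/L_i$. Since $\|g_i\|^2 \ge 0$, replacing the coefficient by the larger value $\gamma/L_i$ only enlarges the left-hand side, so it suffices to prove the inequality with $\gamma/L_i$ in place of the true coefficient. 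Cancelling the common $\|\delta_i\|^2$ against the leading term of $(1-\gamma\mu)\|\delta_i\|^2$ and dividing through by $\gamma>0$, the claim reduces to the single inequality $\mu\|\delta_i\|^2 + \tfrac{1}{L_i}\|g_i\|^2 \le 2\langle\delta_i, g_i\rangle$.

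Finally I would establish this inequality by summing two standard consequences of Assumption~\ref{asm:main}, using the identity $D_{f_i}(x,y)+D_{f_i}(y,x) = \langle\nabla f_i(x)-\nabla f_i(y),\,x-y\rangle$ applied at $x=x_{i,t}$, $y=x_\star$. Symmetrizing the lower Bregman bound $D_{f_i}(x,y)\ge\tfrac{\mu}{2}\|x-y\|^2$ yields the strong-convexity estimate $\langle g_i,\delta_i\rangle \ge \mu\|\delta_i\|^2$, while symmetrizing the co-coercivity bound $D_{f_i}(x,y)\ge\tfrac{1}{2L_i}\|\nabla f_i(x)-\nabla f_i(y)\|^2$ (which holds for convex $L_i$-smooth functions and is equivalent to the upper Bregman bound $D_{f_i}(x,y)\le\tfrac{L_i}{2}\|x-y\|^2$) yields $\langle g_i,\delta_i\rangle \ge \tfrac{1}{L_i}\|g_i\|^2$. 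Adding these gives $2\langle\delta_i,g_i\rangle \ge \mu\|\delta_i\|^2 + \tfrac{1}{L_i}\|g_i\|^2$, which is exactly what is required.

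The only genuinely non-routine ingredient is the co-coercivity bound, since it is the one place where $L_i$-smoothness (rather than mere convexity) is used and it must be derived from the upper Bregman form of the assumption; the remaining work is the algebraic identity for the $\|g_i\|^2$ coefficient and bookkeeping. I expect no real obstacle, as all three facts are classical and the step-size condition was evidently chosen to make the coefficient match $\gamma/L_i$ exactly.
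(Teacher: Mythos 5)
Your proposal is correct and follows essentially the same route as the paper's proof: both expand $\|w_{i,t}-w_{i,\star}\|^2$, identify the coefficient of $\|\nabla f_i(x_{i,t})-h_{i,\star}\|^2$ as $\gamma^2\tfrac{1-q_i(1-p^2)}{p^2}$, and combine the strong-convexity lower Bregman bound with co-coercivity (the paper's $\|\nabla f_i(x_{i,t})-h_{i,\star}\|^2\le 2L_iD_{f_i}(x_{i,t},x_\star)$ is exactly your symmetrized co-coercivity estimate), with the step-size condition entering only to neutralize that coefficient. The only difference is cosmetic bookkeeping: you enlarge the coefficient to $\gamma/L_i$ first and then apply the combined inequality $2\langle\delta_i,g_i\rangle\ge\mu\|\delta_i\|^2+\tfrac{1}{L_i}\|g_i\|^2$, while the paper keeps the $D_{f_i}$ term explicit and discards it at the end.
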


\begin{proof}[Proof of Theorem \ref{asymmetric:thm}]
  The proof of the theorem is direct combination of the above lemmas.
  \begin{eqnarray*}
    \Et{\Psi_{t+1}}
    &=&
    \sum_{i=1}^n \left[ \left\|x_{i,t} - x_\star - \gamma\left(\nabla f_i(x_{i,t})- h_{i,\star}\right)\right\|^2 \right.\\
    &&\quad +\; (1-q_i)\left(1-p^2\right)\frac{\gamma^2}{p^2}\|\nabla f(x_{i,t})-h_{i,\star}\|^2 \\
      &&\quad + \; \left. q_i\left(1-p^2\right)\frac{\gamma^2}{p^2}\|h_{i,t}-h_{i,\star}\|^2\right]\\
    &\le &
      \sum_{i=1}^n \left[ (1-\gamma\mu)\norms{x_{i,t}-x_\star} + q_i\left(1-p^2\right)\frac{\gamma^2}{p^2}\|h_{i,t}-h_{i,\star}\|^2\right]\\
  &\le &
      (1-\gamma\mu) \sum_{i=1}^n \norms{x_{i,t}-x_\star} + q_{max}\left(1-p^2\right) \frac{\gamma^2}{p^2}\sum_{i=1}^n\|h_{i,t}-h_{i,\star}\|^2\\
  &\le &
  \max\left\{1-\gamma\mu,q_{max}\left(1-p^2\right)\right\} \Psi_t\\
  &=&
  \left(1 - \min\left\{\gamma\mu,1 - q_{max}\left(1-p^2\right)\right\}\right) \Psi_t.
  \end{eqnarray*}
  
\end{proof}

\subsection{Proof of Auxiliary Lemmas}

\subsubsection{Proof of Lemma \ref{asymmetric:lem1}}
\label{proof:asymmetric-lem1}

\begin{restate-lemma}{\ref{asymmetric:lem1}}
If $\gamma >0$ and $0\le p,q_i \le 1$, then
\begin{eqnarray*}
  \Et{ \Psi_{t+1} } &=& \sum_{i=1}^n \left[\|w_{i,t} - w_{i,\star}\|^2 + (1-q_i)\left(1-p^2\right)\frac{\gamma^2}{p^2}\|\nabla f(x_{i,t})-h_{i,\star}\|^2 \right. \\
  &&\qquad\quad +\; \left. q_i\left(1-p^2\right)\frac{\gamma^2}{p^2}\|h_{i,t} - h_{i,\star}\|^2 \right],
\end{eqnarray*}
where the expectation is taken over $\theta_t$ and $\eta_{i,t}$ in \Cref{alg:dist-proxskip+}.
\end{restate-lemma}

\begin{proof}

In order to simplify notation, denote
\begin{equation}\label{asymmetric:x_and_y}
x_i\eqdef\hat x_{i,t+1} - \frac{\gamma}{p}\hh_{i,t+1}, \qquad y_i\eqdef x_{\star} - \frac{\gamma}{p}h_{i,\star}.
\end{equation}
\begin{equation}\label{asymmetric:eq:bar_x_and_y}
\bar x \eqdef \avein{x_i}, \qquad \bar y \eqdef \avein{y_i} = x_*.
\end{equation}

{\bf STEP 1 (Recalling the steps of the method).}
Recall that
\begin{equation} \label{asymmetric:eq:step2a}
x_{i,t+1} = \begin{cases} 
\bx, & \text{with probability} \quad p, \\
\hat x_{i,t+1}, & \text{with probability} \quad 1- p, 
\end{cases}
\end{equation}
and
\begin{equation}\label{asymmetric:eq:step2b}
h_{i,t+1} =
\begin{cases}
\hh_{i,t+1} + \frac{p}{\gamma}(\bx - \hat x_{i,t+1}), & \text{with probability} \quad p,\\
\hh_{i,t+1}, &  \text{with probability} \quad 1-p .
\end{cases}
\end{equation}

{\bf STEP 2 (One-step expectation w.r.t. the global coin toss $\theta_t$).} The expected value of the Lyapunov function 
\begin{equation}\label{asymmetric:eq:Lyapunov-proof}
\Psi_t \eqdef \sum_{i=1}^n\|x_{i,t} - x_{\star}\|^2 + \frac{\gamma^2}{p^2}\sum_{i=1}^n\|h_{i,t} - h_{i,\star}\|^2
\end{equation}
at $(t+1)^{th}$ iterate with respect to the coin toss $\theta_t$ is

\begin{eqnarray*}
&& \Et{\Psi_{t+1} \mid \eta_{1,t},\ldots,\eta_{n,t}} \\
&&\quad \overset{\eqref{asymmetric:eq:step2a}-\eqref{asymmetric:eq:Lyapunov-proof}}{=}
    p\sum_{i=1}^n \left( \left\|\bx - x_{\star}\right\|^2
     + \; \frac{\gamma^2}{p^2}\left\|\hh_{i,t+1} + \frac{p}{\gamma}(\bx - \hat x_{i,t+1})- h_{i,\star} \right\|^2\right) \\
     &&\qquad\qquad +\; (1-p) \sum_{i=1}^n \left(\| \hat x_{i,t+1}  - x_{\star}\|^2 + \frac{\gamma^2}{p^2}\|\hh_{i,t+1} - h_{i,\star}\|^2\right)\\
&&\quad \overset{\eqref{asymmetric:eq:bar_x_and_y}}{=}
    p\sum_{i=1}^n \left( \left\|\bar x - \bar y\right\|^2 + \left\|\bar x - x_i + y_i - \bar y \right\|^2\right)\\
     &&\qquad\qquad +\; (1-p) \sum_{i=1}^n \left(\| \hat x_{i,t+1}  - x_{\star}\|^2 + \frac{\gamma^2}{p^2}\|\hh_{i,t+1} - h_{i,\star}\|^2\right)\\
&&\quad =
    p\sum_{i=1}^n \|x_i-y_i\|^2 + (1-p) \sum_{i=1}^n \left(\| \hat x_{i,t+1}  - x_{\star}\|^2 + \frac{\gamma^2}{p^2}\|\hh_{i,t+1} - h_{i,\star}\|^2\right)\\
&&\quad =
    \sum_{i=1}^n \left[ p \left\|\hat x_{i,t+1} - \frac{\gamma}{p}\hh_{i,t+1} - \left(x_{\star} - \frac{\gamma}{p}h_{i,\star} \right) \right\|^2 \right.\\
    &&\qquad\qquad +\; \left. (1-p) \left(\| \hat x_{i,t+1}  - x_{\star}\|^2 + \frac{\gamma^2}{p^2}\|\hh_{i,t+1} - h_{i,\star}\|^2\right) \right].
\end{eqnarray*}

{\bf STEP 3 (Simple algebra).}  
Next, we expand the squared norm and collect the terms, obtaining
  \begin{eqnarray*}
    &&\Et{\Psi_{t+1} \mid \eta_{1,t},\ldots,\eta_{n,t}} \\
    && = \sum_{i=1}^n \left[ p\|\hat x_{i,t+1} - x_{\star}\|^2 + p\frac{\gamma^2}{p^2}\|\hh_{i,t+1}-h_{i,\star}\|^2 - 2\gamma \langle\hat x_{i,t+1}-x_{\star}, \hh_{i,t+1}-h_{i,\star}\rangle \right.\\
    && \quad +\; \left. (1-p)\left( \|\hat x_{i,t+1} - x_{\star}\|^2 + \frac{\gamma^2}{p^2}\|\hh_{i,t+1} - h_{i,\star}\|^2\right) \right]  \\
    &&= \sum_{i=1}^n \left[ \|\hat x_{i,t+1} - x_{\star}\|^2 - 2\gamma \langle\hat{x}_{i,t+1} - x_{\star}, \hh_{i,t+1} - h_{i,\star}\rangle + \frac{\gamma^2}{p^2}\|\hh_{i,t+1}-h_{i,\star}\|^2 \right]\\
    &&= \sum_{i=1}^n \left[ \left\|\hat x_{i,t+1} - x_{\star} - \gamma\left(\hh_{i,t+1} - h_{i,\star}\right)\right\|^2 - \gamma^2 \norms{\hh_{i,t+1} - h_{i,\star}} + \frac{\gamma^2}{p^2}\|\hh_{i,t+1}-h_{i,\star}\|^2 \right]\\
    &&= \sum_{i=1}^n \left[ \left\|\hat x_{i,t+1} - x_{\star} - \gamma\left(\hh_{i,t+1} - h_{i,\star}\right)\right\|^2 + \left(1-p^2\right)\frac{\gamma^2}{p^2}\|\hh_{i,t+1}-h_{i,\star}\|^2 \right].
  \end{eqnarray*}

{\bf STEP 4 (One-step expectation w.r.t. local coin tosses $\eta_{i,t}$).}  
Applying the expectation with respect to (independent) coin tosses $\eta_{i,t}$ and using the tower property we get

\begin{eqnarray*}
 &&\Et{\Psi_{t+1}}\\
  &&\quad=
  \sum_{i=1}^n \left[ q_i\left(\left\|x_{i,t} - \gamma (\nabla f_i(x_{i,t}) -  h_{i,t} ) - x_{\star} - \gamma\left(h_{i,t} - h_{i,\star}\right)\right\|^2 \right. \right.\\
  &&\quad\qquad +\; \left. \left(1-p^2\right)\frac{\gamma^2}{p^2}\|h_{i,t}-h_{i,\star}\|^2\right)\\
    &&\quad\quad + \; \left. (1-q_i)\left(\left\|x_{i,t} - x_{\star} - \gamma\left(\nabla f(x_{i,t}) - h_{i,\star}\right)\right\|^2 + \left(1-p^2\right)\frac{\gamma^2}{p^2}\|\nabla f(x_{i,t})-h_{i,\star}\|^2\right)\right]\\
  &&\quad=
  \sum_{i=1}^n \left[ q_i\left(\left\|x_{i,t} - x_\star - \gamma\left(\nabla f_i(x_{i,t})- h_{i,\star}\right)\right\|^2 + \left(1-p^2\right)\frac{\gamma^2}{p^2}\|h_{i,t}-h_{i,\star}\|^2\right)\right.\\
    &&\quad\quad + \; \left.(1-q_i)\left(\left\|x_{i,t} - x_{\star} - \gamma\left(\nabla f(x_{i,t}) - h_{i,\star}\right)\right\|^2 + \left(1-p^2\right)\frac{\gamma^2}{p^2}\|\nabla f(x_{i,t})-h_{i,\star}\|^2\right)\right]\\
  &&\quad=
  \sum_{i=1}^n \left[ \left\|x_{i,t} - x_\star - \gamma\left(\nabla f_i(x_{i,t})- h_{i,\star}\right)\right\|^2 + (1-q_i)\left(1-p^2\right)\frac{\gamma^2}{p^2}\|\nabla f(x_{i,t})-h_{i,\star}\|^2 \right. \\
    &&\quad\quad + \; \left. q_i\left(1-p^2\right)\frac{\gamma^2}{p^2}\|h_{i,t}-h_{i,\star}\|^2 \right] \\
  &&\quad=
  \sum_{i=1}^n \left[ \left\|w_{i,t} - w_{i,\star}\right\|^2 + (1-q_i)\left(1-p^2\right)\frac{\gamma^2}{p^2}\|\nabla f(x_{i,t})-h_{i,\star}\|^2 \right. \\
  &&\quad\quad +\; \left. q_i\left(1-p^2\right)\frac{\gamma^2}{p^2}\|h_{i,t}-h_{i,\star}\|^2 \right].
\end{eqnarray*}
\end{proof}

\subsubsection{Proof of Lemma \ref{asymmetric:lem2}}
\label{proof:asymmetric-lem2}

\begin{restate-lemma}{\ref{asymmetric:lem2}}
If 
$$
  0< \gamma \leq \min_{i}\left\{\frac{1}{L_i}\frac{p^2}{1-q_i\left(1-p^2\right)}\right\},
$$
then
\begin{equation*}
  \|w_{i,t} - w_{i,\star}\|^2 + (1-q_i)\left(1-p^2\right)\frac{\gamma^2}{p^2}\|\nabla f(x_{i,t})-h_{i,\star}\|^2
  \le (1-\gamma\mu)\|x_{i,t} - x_{\star}\|^2.
\end{equation*}
\end{restate-lemma}

\begin{proof}
After some algebraic transformations we get
\begin{eqnarray*}
&&
\|w_{i,t} - w_{i,\star}\|^2 + (1-q_i)\left(1-p^2\right)\frac{\gamma^2}{p^2}\|\nabla f(x_{i,t})-h_{i,\star}\|^2 \\
&&=
\left\|x_{i,t} - x_\star - \gamma\left(\nabla f_i(x_{i,t})- h_{i,\star}\right)\right\|^2 + (1-q_i)\left(1-p^2\right)\frac{\gamma^2}{p^2}\|\nabla f(x_{i,t})-h_{i,\star}\|^2 \\
&&=
\norms{x_{i,t} - x_\star} - 2\gamma\left<x_{i,t} - x_\star, \nabla f_i(x_{i,t})- h_{i,\star}\right>\\
    &&\quad+\; \gamma^2\norms{\nabla f_i(x_{i,t})- h_{i,\star}} + (1-q_i)\left(1-p^2\right)\frac{\gamma^2}{p^2}\|\nabla f(x_{i,t})-h_{i,\star}\|^2 \\
&&\le
(1-\gamma\mu)\norms{x_{i,t} - x_\star} - 2\gamma D_{f_i}(x_{i,t},x_\star)\\
&&\quad +\;  \gamma^2\left(1+\frac{(1-q_i)\left(1-p^2\right)}{p^2}\right)\norms{\nabla f_i(x_{i,t})- h_{i,\star}}\\
&&\le
(1-\gamma\mu)\norms{x_{i,t} - x_\star} - 2\gamma D_{f_i}(x_{i,t},x_\star) \left(1 - \gamma L_i\left(\frac{p^2+(1-q_i)\left(1-p^2\right)}{p^2}\right)\right)\\
&&\le
(1-\gamma\mu)\norms{x_{i,t}-x_\star},
\end{eqnarray*}
where we used the bound 
$$
  \norms{\nabla f_i(x_{i,t})- h_{i,\star}} \le 2L_i D_{f_i}(x_{i,t},x_\star)
$$
and the last inequality holds since 
$$
  \gamma\le\frac{1}{L_i}\frac{p^2}{1-q_i\left(1-p^2\right)}.
$$
\end{proof}

\subsection{Proof of Theorem \ref{thm:optimal-rate}}
\label{proof:thm-optimal-rate}

\begin{restate-theorem}{\ref{thm:optimal-rate}}[Optimal parameter choices]
  Let Assumption \ref{asm:main} hold and choose probabilities 
  $$
    q_i = \frac{1-\frac{1}{\kappa_i}}{1-\frac{1}{\kappa_{\max}}}\le 1 \quad\text{and}\quad p = \frac{1}{\sqrt{\kappa_{\max}}}.
  $$ 
  Then, with the largest admissible stepsize $\gamma = \nicefrac{1}{L_{\max}}$, \mbox{\algname{GradSkip}} enjoys the following properties:
  \begin{itemize}
  \item [(i)] $\mathcal{O}\left(\kappa_{\max}\log\nicefrac{1}{\varepsilon}\right)$ iteration complexity,
  \item [(ii)] $\mathcal{O}\left(\sqrt{\kappa_{\max}}\log\nicefrac{1}{\varepsilon}\right)$ communication complexity,
  \item [(iii)] for each client $i\in[n]$, the expected number of local gradient computations per communication round is
  \begin{equation*}
    \frac{1}{1 - q_i(1-p)} = \frac{\kappa_i(1+\sqrt{\kappa_{\max}})}{\kappa_i + \sqrt{\kappa_{\max}}}
    \le \min\left\{\kappa_i, \sqrt{\kappa_{\max}}\right\}.
  \end{equation*}
  \end{itemize}
\end{restate-theorem}

\begin{proof}
  
From the choice of $q_i = \frac{1-\nicefrac{1}{\kappa_i}}{1-\nicefrac{1}{\kappa_{\max}}}$, we immediately imply $q_{\max} = 1$. 
Furthermore, choosing the optimal $p = \frac{1}{\sqrt{\kappa_{\max}}}$, we get
$$
  \gamma =  \min\limits_i\left\{\frac{1}{L_i}\frac{p^2}{1-q_i\left(1-p^2\right)}\right\} = \min\limits_i\left\{\frac{L_ip^2}{L_i\mu}\right\} = \frac{1}{L_{max}}.
$$

Now, if we plug these values back to the rate \eqref{asymmetric:lyapunov_decrease}, we get the best rate of \algname{ProxSkip} as
$$
1 - \min\left\{\gamma\mu,1 - q_{max}\left(1-p^2\right)\}\right\} = 1 - \min\left\{\frac{\mu}{L_{\max}},p^2\right\} = 1 - \frac{\mu}{L_{\max}} = 1 - \frac{1}{\kappa_{\max}}.
$$
This implies $\mathcal{O}\left(\kappa_{\max}\log\frac{1}{\varepsilon}\right)$ total iteration complexity of the method. Due to the choice $p = \frac{1}{\sqrt{\kappa_{\max}}}$, the method enjoys $\mathcal{O}\left(\sqrt{\kappa_{\max}}\log\frac{1}{\varepsilon}\right)$ accelerated communication complexity.

We have two geometric random variables, $\Theta\sim\textrm{Geom}(p)$ and $H_i\sim\textrm{Geom}(1-q_i)$, for each client describing local training. 
From the algorithm description, we see that the number of local steps for client $i$ is $\min\{\Theta, H_i\}$, which is still a Geometric random variable with parameter $1-q_i(1-p)$. 
Therefore, the expected number of local steps for client $i$ is the inverse of that parameter, i.e., $\frac{1}{1-q_i(1-p)}$. 
If we plug in the values for $p$ and $q_i$, we have
\begin{eqnarray*}
  \E{\min\{\Theta,H_i\}}
  &=& \frac{1}{1-q_i(1-p)}
    = \frac{1}{1 - \left(1-\frac{1}{\sqrt{\kappa_{\max}}}\right) \frac{1 - \nicefrac{1}{\kappa_i}}{1 - \nicefrac{1}{\kappa_{\max}}} } \\
  &=& \frac{1}{1 - \frac{1 - \nicefrac{1}{\kappa_i}}{1 + \nicefrac{1}{\sqrt{\kappa_{\max}}}}}
    = \frac{1 + \nicefrac{1}{\sqrt{\kappa_{\max}}}}{\nicefrac{1}{\kappa_i} + \nicefrac{1}{\sqrt{\kappa_{\max}}}}\\
  &=& \frac{\kappa_i(1+\sqrt{\kappa_{\max}})}{\kappa_i + \sqrt{\kappa_{\max}}}
    \le \min\left\{\kappa_i, \sqrt{\kappa_{\max}}\right\},
\end{eqnarray*}
where the last inequality can be verified with simple algebraic steps.

\end{proof}

\section{Proofs for Section \ref{sec:gradskip+} (GradSkip+)}

\subsection{Proof of Lemma \ref{lem:inclusion}}
\label{proof:lem-inclusion}

\begin{restate-lemma}{\ref{lem:inclusion}}
  $$
    \B^d(\mOmega) \subseteq \B^d \left(\frac{\left(1+\lambda_{\max}(\mOmega)\right)^2}{\left(1+\lambda_{\min}(\mOmega)\right)} - 1\right).
  $$
\end{restate-lemma}
\begin{proof}
  The proof follows from the following simple inequalities:
\begin{align*}
  \|x\|^2_{(\Id + \mOmega)^{-1}} &\le \lambda_{\max}\left( (\Id + \mOmega)^{-1} \right) \|x\|^2 = \frac{1}{1+\lambda_{\min}(\mOmega)} \|x\|^2 ,\\
  \|(\Id + \mOmega)^{-1}\cC(x)\|^2 &\ge \lambda_{\min}\left((\Id + \mOmega)^{-1}\right)^2 \|\cC(x)\|^2 = \frac{1}{(1+\lambda_{\max}(\mOmega))^2} \|\cC(x)\|^2.
\end{align*}
\end{proof}

\subsection{Proof of \Cref{thm:proxskip++}}
\label{proof:thm-proxskip++}

\begin{restate-theorem}{\ref{thm:proxskip++}}
Let Assumption \ref{asm:convex-smooth} hold, $\cC_{\omega} \in \B^d(\omega)$ and $\cC_{\mOmega} \in \B^d(\mOmega)$ be  the compression operators, and 
$$
  \mtOmega \eqdef \Id + \omega(\omega+2)\mOmega(\Id + \mOmega)^{-1}.
$$
Then, if the stepsize $\gamma\le\lambda_{\max}^{-1}(\mL\mtOmega)$, the iterates of \algname{GradSkip+} (Algorithm \ref{alg:proxskip++}) satisfy
  \begin{equation*}
    \E{\Psi_{t}}
    \le \left(1 - \min\left\{\gamma\mu,\delta\right\}\right)^t \Psi_0,
  \end{equation*}
where 
$$
  \delta = 1 - \frac{1}{1+\lambda_{\min}(\mOmega)}\left(1-\frac{1}{(1+\omega)^2}\right)\in[0,1].
$$
\end{restate-theorem}

\begin{proof}
  Since \algname{GradSkip+} is a special case of \algname{VR-GradSkip+}, \Cref{thm:proxskip++} follows directly as a corollary of \Cref{thm:VR-GradSkip+}.
  What remains is to verify that the gradient estimator satisfies the condition in \Cref{sigma_t} and to identify the associated constants.
  The lemma below establishes this; the final step is to substitute these values into \Cref{thm:VR-GradSkip+}.
\end{proof}

\begin{lemma}
  \label{thm:proxskip}
Let Assumption~\ref{asm:convex-smooth} hold. 
Then for the gradient estimator $g_t=\nabla f(x_t)$, Assumption~\ref{sigma_t} holds with the following parameters:
\begin{align*}
  A = 1, \quad B = 0, \quad C = 0, \quad \tilde{A} = 0, \quad \tilde{B} = 0,  \quad \tilde{C} = 0, \quad \sigma_t \equiv 0.
\end{align*}
\end{lemma}
\begin{proof}
The proof is rather trivial and follows from the $\mL$-smoothness of $f$, 
\begin{equation*}
    \E{\norms{g_t - \nabla f(x_\star)}_{\mL^{-1}}} = \norms{\nabla f(x_t)- \nabla f(x_\star)}_{\mL^{-1}} \leq 2  D_f(x_t,x_\star).
\end{equation*}
\end{proof}

\end{document}